\documentclass{article}

\usepackage{microtype}
\usepackage{graphicx}
\usepackage{subcaption}
\usepackage{booktabs} 

\usepackage{hyperref}

\usepackage[accepted]{icml2026}

\usepackage{amsmath}
\usepackage{amssymb}
\usepackage{mathtools}
\usepackage{amsthm}

\usepackage[capitalize,noabbrev]{cleveref}

\theoremstyle{plain}
\newtheorem{theorem}{Theorem}[section]
\newtheorem{proposition}[theorem]{Proposition}
\newtheorem{lemma}[theorem]{Lemma}
\newtheorem{corollary}[theorem]{Corollary}
\theoremstyle{definition}
\newtheorem{definition}[theorem]{Definition}

\theoremstyle{remark}

\usepackage{enumitem}
\usepackage[textsize=tiny]{todonotes}

\icmltitlerunning{Moment Matching Q-Learning}

\begin{document}

\twocolumn[
  \icmltitle{Moment Matching Q-Learning}



  \icmlsetsymbol{equal}{*}

  \begin{icmlauthorlist}
    \icmlauthor{Yiyan (Edgar) Liang}{aaa}
    \icmlauthor{Sifei Liu}{}
    \icmlauthor{Weitong Zhang}{aaa}
  \end{icmlauthorlist}

  \icmlaffiliation{aaa}{School of Data and Information Science, University of North Carolina at Chapel Hill, Chapel Hill, USA}

  \icmlcorrespondingauthor{Weitong Zhang}{weitongz@unc.edu}

  \icmlkeywords{Reinforcement Learning, Robotics, Diffusion Policy}

  \vskip 0.3in
]



\printAffiliationsAndNotice{}  

\begin{abstract}
Score-based and flow-based generative models exhibit remarkable expressive capacity in capturing complex distributions, and have been extensively deployed in tasks ranging from image generation to reinforcement learning. 
Nevertheless, these models suffer from prolonged inference latency, which imposes a significant computational bottleneck in RL with iterative sampling. 
To overcome this limitation, we propose a new framework named \textit{Moment Matching Q-Learning} (MoMa QL), which utilizes a technique from statistical hypothesis testing known
as maximum mean discrepancy (MMD) that intend to match all orders of statistics between the original and target distribution. 
By enforcing strong regularization on all moment statistics, this algorithm guarantees distribution-level convergence for conditional score function and remains stable under various hyperparameters. 
Empirically, we show that our method MoMa QL is more computationally efficient with a comparable if not competitive performance in various D4RL tasks. Remarkably, by accelerating the action sampling process for flow-based policies, MoMa QL demonstrates superior performance in offline-to-online RL tasks because of faster and stronger adaptability for online interactive finetuning.

\end{abstract}

\section{Introduction}

Offline reinforcement learning (RL) aims to derive an optimal decision-making policy from a previously collected dataset without further environment interaction \cite{lange2012batch}. Circumventing risky, costly, and inefficient online interactions, offline RL enables models to fully exploit prior data, and thus gained significant traction in safety-critical applications such as autonomous driving and robotic manipulation \cite{levine2020offline}. However, learning exclusively from previously collected data can be more than challenging. Conventional reinforcement learning algorithms typically suffer from distributional shift as they evaluate actions outside the support of the behavior policy, where value estimations are unreliable \cite{wang2022diffusion}. Moreover, as the datasets are growing larger and more diverse, the behavioral distribution is also more complex and multi-modal, which necessitates a more expressive policy class to represent the complex policies \cite{mandlekar2021matters}.

To model these complex and potentially multimodal policy distributions, Gaussian mixture model (GMM) \cite{jacobs1991adaptive, ren2021probabilistic} and variational auto-encoders (VAE) \cite{kumar2019stabilizing} are frequently employed for policy representation, as their tractable sampling and efficient optimization properties allow them to capture the underlying probability distributions of expert behaviors. More advanced frameworks, such as Langevin dynamics-based approaches \cite{chi2025diffusion} and probability flow methods \cite{zheng2023guided} offer superior expressivity, and thus they have been extensively integrated into both imitation learning and offline RL scenarios. However, the slow sampling speed for these generative models remains unacceptable for various computationally intensive online tasks. Consequently, there is an imperative need for more computationally efficient sampling methods. 

To address this problem, we introduced \textit{Moment Matching Q-Learning} (MoMa QL), which involves a stable and theoretically rigorous policy learning procedure based on actor-critic style algorithm. MoMa QL enables actor's policy to operate on the time-dependent marginal distributions of stochastic interpolates \cite{albergo2023stochastic}, which
connects two arbitrary probability density functions. By learning a function mapping from any marginal distribution at time $t$ to any marginal at time $s < t$, this framework facilitates a seamless transition across the probability flow trajectory, MoMa QL naturally supports both single- or multi-step action sampling. With this accelerated action sampling, MoMa QL enables a simultaneous critic update through it's policy training procedure. In summary, our contribution can be summarized as follows:
\begin{itemize}[leftmargin=*, topsep=0pt,itemsep=0pt,partopsep=0pt,parsep=0pt]
\item We introduce MoMa QL, which motivates from minimizing the Maximum Mean Discrepancy (MMD) of conditional distributions derived from two intermediate diffusion steps $r < s$ to enable the diffusion process to ``jump'' from $r$ to $s$ via one-step induction.
\item Theoretically, we prove that the effectiveness and convergence of the MoMa QL with the fact that the consistency models can be viewed as a special case of our results. 
\item Empirically, we demonstrated that MoMa QL exhibit superior performance comparing with a series of offline RL methods. Specially, thanks to the accelerated action sampling process, we show that MoMa QL can be efficiently finetuned with online rollouts and outperforms existing offline-to-online RL methods. 
\end{itemize}


\section{Preliminaries}
\subsection{Offline Reinforcement Learning}  
We consider the offline reinforcement learning setting in this paper. The environment in RL can be formulated as a Markov decision process $\mathcal{M} := \left(\mathcal{S}, \mathcal{A}, \mathcal{P}, R, \rho, \gamma\right )$, with state space $\mathcal{S}$, action space $\mathcal{A}$, environment dynamics $\mathcal{P}(\mathbf{s}' | \mathbf{s}, \mathbf{a}) : \mathcal{S} \times \mathcal{S} \times \mathcal{A} \rightarrow I$, reward function $R(\mathbf{s}, \mathbf{a}) : \mathcal{S} \times \mathcal{A} \rightarrow \mathbb{R}$, initial distribution $\rho \in \Delta(S)$ and discounted factor $\gamma \in [0, 1)$ \cite{sutton1998reinforcement}, where we denote the set $[0, 1]$ as $I$ and the set of probability distributions over a space $\mathcal{X}$ as $\Delta(\mathcal{X})$. 

The goal of Offline RL is to learn a policy $\pi_\theta(\mathbf{a} | \mathbf{s}) : \mathcal{S} \rightarrow \Delta(\mathcal{A})$, a conditional distribution parametrized by $\theta$, which maximizes the cumulative discounted reward $\mathbb{E}\big[\sum_{t = 0}^H \gamma^t r(\mathbf{s}_t, \mathbf{a}_t) \big]$ over horizon $H$. This is equivalent to a constrained optimization problem with respect to $\theta$. 

In the offline setting, the dataset $\mathcal{D} := \left \{ \tau^{n} \right \}_{n=1}^N$ is static and doesn't involve environment interaction, where each $\tau^{i} := \left ( \mathbf{s}_0, \mathbf{a}_0, \dots, \mathbf{s}_H, \mathbf{a}_H\right )$ denotes one single trajectory. 

In addition to the purely offline setting, we also consider the offline-to-online RL regime, in which we aim to fine-tune a pre-trained policy via online interactions with the environment after pretraining using offline dataset.

\subsection{Continuous-time Probabilistic Generative Models}
For a given data distribution $q(\mathbf{x}) \in \Delta(\mathbb{R}^d)$, diffusion models
\cite{ho2020denoising, song2020score} and flow matching \cite{liu2022flow, lipman2022flow} construct time-dependent variable $x_t$ which can be viewed as an interpolation between sampled data $\mathbf{x} \sim q(\mathbf{x})$ and noise $\boldsymbol{\varepsilon} \sim \mathcal{N}(0, I)$ in the form as $\mathbf{x}_t = \alpha_t\mathbf{x} + \sigma_t \boldsymbol{\varepsilon}$
with $\alpha_0 = \sigma_1 = 1, \alpha_1 = \sigma_0 = 0$. Essentially, the problem can be formulated as solving a \textit{Schrödinger bridge} \cite{chen2016relation,de2021diffusion} between $\boldsymbol{\varepsilon}$ and $q(x)$. Within this perspective, Variance Preserved diffusion conventionally chooses $\alpha_t = \cos\!\left(\frac{\pi}{2} t\right), 
\sigma_t = \sin\!\left(\frac{\pi}{2} t\right)$ and flow matching chooses $\alpha_t = 1 - t, \sigma_t = t$. Both diffusion models and flow matching aim to learn a time-dependent vector field
$\mathbf{v}_t(\mathbf{x}) := \frac{\mathrm{d}\mathbf{x}_t}{\mathrm{d}t}$ parameterized by a neural network. For score based diffusion model with deterministic sampling, the vector field can be represented as:
\begin{equation*}
\mathbf{v}_t(\mathbf{x})
= \alpha_t  - \sigma_t^2 \mathbf{s}_\theta(\mathbf{x}, t) / 2,
\label{eq2}
\end{equation*}
where $\mathbf{s}_\theta(\mathbf{x}_t, t) := \nabla_{\mathbf{x}} \log p_t(\mathbf{x})$ denotes the score function. The model is trained with the objective
\begin{equation*}
\mathcal{L}_{\mathrm{DSM}}(\theta)
=
\mathbb{E}_{t, \mathbf{x}_0, \boldsymbol{\varepsilon}}
\left[
\left\|
\mathbf{s}_\theta(\mathbf{x}_t, t)
+
{\boldsymbol{\varepsilon}}/{\sigma_t}
\right\|^2
\right],
\label{eq3}
\end{equation*}
where $\boldsymbol{\varepsilon} \sim \mathcal{N}(0, I)$. In contrast, flow matching directly parameterizes the velocity field $\mathbf{v}_t(\mathbf{x})$ and trains the model by regressing toward a target conditional velocity.
Given a stochastic interpolation path $\mathbf{x}_t$ between samples from the base and target distributions, the optimal velocity field is defined as $\mathbf{v}_t^\star(\mathbf{x}_t)
=
\mathbb{E}\!\left[
\frac{\mathrm{d}\mathbf{x}_t}{\mathrm{d}t} | 
\mathbf{x}_t
\right]$,
and the corresponding flow matching loss function is
\begin{equation*}
\mathcal{L}_{\mathrm{FM}}(\theta)
=
\mathbb{E}_{t, \mathbf{x}_0, \mathbf{x}_1}
\left[
\left\|
G_\theta(\mathbf{x}_t, t)
-
\mathbf{v}_t^\star(\mathbf{x}_t)
\right\|^2
\right],
\label{eq5}
\end{equation*}
where $G_\theta(\mathbf{x}_t, t)$ is a parameterized neural network.

\textit{Stochastic Interpolants} \cite{albergo2023stochastic} provides a unified and flexible perspective that encapsulates both SDE-based models like diffusion and ODE-based models like flow matching. It constructs a conditional interpolation $q_t(\mathbf{x}_t | \mathbf{x}, \boldsymbol{\varepsilon}) = \mathcal{N}(I_t(\mathbf{x}, \boldsymbol{\varepsilon}), \gamma_t^2 I)$ with constraints $I_0(\mathbf{x}, \boldsymbol{\varepsilon}) = \mathbf{x}$, $I_1(\mathbf{x}, \boldsymbol{\varepsilon}) = \boldsymbol{\varepsilon}$ and $\gamma_0 = \gamma_1 = 0$. The conditional interpolant velocity can thus be represented as $\mathbf{v}_t = \partial_t I_t(\mathbf{x}, \boldsymbol{\epsilon}) + \dot{\gamma}_t \mathbf{z}$ where $\mathbf{z} \sim \mathcal{N}(0, I)$. Similar to flow matching, we can learn a deterministic sampler via $G_\theta(\mathbf{x}_t, t) \approx \mathbb{E}_{\mathbf{x}, \boldsymbol{\varepsilon}, \mathbf{z}}\left [\mathbf{v}_t | \mathbf{x}_t\right ]$. Given linear interpolation condition $I_t(\mathbf{x}, \boldsymbol{\epsilon}) = \alpha_t \mathbf{x} + \sigma_t \boldsymbol{\epsilon}$, it's easy to show that stochastic interpolants reduces to flow matching when $\gamma_t \equiv 0$ and reduces to diffusion when $\boldsymbol{\varepsilon} \sim \mathcal{N}(0, I)$.

\subsection{Policy Regularization} Policy regularization adds constraints or penalties to an agent's learning process to prevent it from overfitting and promote stable and robust behavior. Actor-Critic (AC) plays an important role in policy regularization, which casts an explicit constraint on the policy and thus enhances both the stability and the asymptotic performance of the agent. Following conventional behavior-regularized actor-critic framework (BRAC) \cite{wu2019behavior, fujimoto2021minimalist}, we minimize the empirical risk for both actor and critic by:
\begin{align}
\mathcal{L}_Q(\phi) &=  \mathbb{E}_{\substack{\mathbf{s},\mathbf{a},r,\mathbf{s}' {\sim}\mathcal{D}, \\ \mathbf{a}' \sim \pi_\theta}} [ ( Q_\phi(\mathbf{s},\mathbf{a}) {-} r {-}\gamma Q_{{\phi}}^\perp(\mathbf{s}',\mathbf{a}') )^2 ],\label{eq6} \\
\mathcal{L}_\pi(\theta) &= \mathbb{E}_{\mathbf{s},\mathbf{a} \sim \mathcal{D}, \mathbf{a}^\pi \sim \pi_\theta} [ \underbrace{- \eta Q_\phi(\mathbf{s},\mathbf{a}^\pi)}_{\text{Q loss}} -  \underbrace{ \log \pi(\mathbf{a}|\mathbf{s})}_{\text{BC loss}} ],
\label{eq7}
\end{align}
where $Q_\phi(\mathbf{s}, \mathbf{a}) : \mathcal{S} \times \mathcal{A} \rightarrow \mathbb{R}$ denotes the state-action value function parameterized by $\phi$, $Q_{{\phi}}^\perp(\mathbf{s}, \mathbf{a})$ is a target network which stops gradient and delays the update \cite{mnih2013playing} and $\alpha$ controls the strength of the BC regularizer. The critic loss $\mathcal{L}_Q(\phi)$ minimizes the standard Bellman error, while the actor loss $\mathcal{L}_\pi(\theta)$ maximizes values with reparameterized gradients through $\mathbf{a}^\pi$. For the actor, the BC loss is additionally applied to prevent the policy from deviating too much from the behavioral policy's distribution. Despite its simplicity, BRAC is one of the most performant frameworks on standard D4RL tasks, and thus we try to build our algorithm on a variant of the BRAC framework.

\subsection{Maximum Mean Discrepancy} \label{section2.4} Consider a reproducing kernel Hilbert space (RKHS) $\mathcal{H}$ of functions $\mathbb{R}^d \rightarrow \mathbb{R}$ associated to a bounded positive definite kernel $k$ on $R^d$, the Maximum Mean Discrepancy
(MMD) between two probability measures $\mu$ and $\nu$ on $\mathbb{R}^d$
is:
\begin{equation}
\mathsf{MMD}(\mu, \nu)
= \sup_{f \in \mathcal{H}, \lVert f \rVert_{\mathcal{H}} \le 1}
\left|
\scalebox{1.4}{$\int$} f \, \mathrm{d}\mu
-
\scalebox{1.4}{$\int$} f \, \mathrm{d}\nu
\right|.
\label{eq8}
\end{equation}$\mathsf{MMD}$ is a metric if and only if the kernel mean embedding $\mu \rightarrow \int k(x, \cdot) \mu(\mathrm{d}x)$ is \textit{injective}. Kernels which satisfies such a property are called \textit{characteristic}, and one of the most important example is the RBF kernel $k(x, y) = e^{-\frac{||x - y||^2}{2\sigma^2}}$ \cite{vert2004primer}. RBF kernel also implies an inner product of infinite-dimensional feature maps consisting of all orders of moments of $\mu$ and $\nu$, which can be respectively denoted and $\mathbb{E}_\mu(\mathbf{x}^j)$ and $\mathbb{E}_\nu(\mathbf{x}^j)$ for any $j \geq 1$ \cite{suthaharan2016support}. 
\section{Related Works}
\subsection{Offline Reinforcement Learning}
Offline RL is the problem of policy optimization with previously collected data only, which is well-known for suffering from the value overestimation problem for out-of-distribution states and actions. Previous methods for solving this issue include explicit behavioral regularization \cite{wu2019behavior, fujimoto2021minimalist, brandfonbrener2021offline}, pessimistic value estimation \cite{kumar2020conservative, lyu2022mildly, IQL}, out-of-distribution detection \cite{yu2020mopo, kidambi2020morel, an2021uncertainty} and dual RL \cite{lee2021optidice, sikchi2023dual}. 

\subsection{Offline-to-Online RL}
Besides the advancements only using offline data, recent works starts to allow the offline-pretrained RL agents to be finetuned through online interactions, referred to as offline-to-online RL. In this regime, we fine-tune the policy with additional online rollouts, and this process usually suffers from a catastrophic degraded performance at initial online training stage due to the distribution shift of training samples. Prior research has studied online fine-tuning with offline data or pre-trained policies, including Off2OnRL \cite{pmlr-v164-lee22d}, Hybrid Q learning \cite{song2022hybrid}, Cal-QL \cite{nakamoto2023cal}, RLPD \cite{ball2023efficient} and ACA \cite{yu2023actor}. Our method, MoMa QL, mainly focuses on offline RL setting, but we will also show it can also be directly fine-tuned with online rollouts.
\subsection{Score-based and Flow-based Decision Making}
Previous works have applied various iterative score-based models to a series of RL tasks. Diffusion QL \cite{wang2022diffusion} initiates the usage of DDPM \cite{ho2020denoising} as policy representation in a Q-Learning+BC framework. Implicit Diffusion Q-learning (IDQL) \cite{IDQL} improves IQL via integrating diffusion policy. Diffusion policies \cite{chi2025diffusion} applies DDIM model for both state-based and vision-based imitation learning tasks in Robotics domain. Consistency-AC \cite{ding2023consistency} utilizes CM to accelerate the action sampling. Methods such as QGPO \cite{lu2023contrastive}, EDP \cite{kang2023efficient} and QIPO \cite{QIPO} model the offline RL objective as an energy-guided diffusion process, which can be viewed as a variant of AWR \cite{peng2019advantage, AWAC}. Flow matching is also widely used in offline RL. GFlower \cite{zheng2023guided} apply flow matching models to policy optimization. FlowPolicy \cite{zhang2025flowpolicy} employed Consistency Flow Matching for robot manipulation tasks. FQL \cite{park2025flow} introduced a distilled neural network for a flow-based policy, which accelarates the inference. In our experiments, we compare our method against a representative subset of these models to demonstrate its superior empirical performance and inference efficiency.
\section{Moment Matching Q-Learning}
We now introduce our method, \textit{Moment Maching Q-Learning} (MoMa QL), for effective decision-making tasks. The cornerstone of the whole algorithm is to learn an implicit action sampler which can harness the expressive power of score and flow based models and expedite the training and inference process.

\subsection{General Objective}
 For the generative policies, the model is targeted to learn an implicit conditional sampler for $p^\theta(\mathbf{x} | \boldsymbol{\varepsilon}, \mathbf{s})$, where $\mathbf{x} \sim q(\mathbf{x})$, $\boldsymbol{\varepsilon} \sim p(\boldsymbol{\varepsilon})$ and $\mathbf{s}$ denotes the state of the agent which serves as the guidance for the reversed process. In the BRAC framework, the loss function in Eq. \eqref{eq7} can be rewritten as:
 \begin{equation}
         \mathcal{L}_\pi(\theta) \hspace{-2pt} = \hspace{-2pt} \mathbb{E}_{\mathbf{s},\mathbf{a} \sim \mathcal{D}, \mathbf{a}^\pi \sim \pi_\theta} [ \underbrace{-\eta  Q_\phi(\mathbf{s},\mathbf{a}^\pi)}_{\text{Q loss}} ] +  \underbrace{\mathcal{L}_{D}(\theta)}_{\text{BC loss}},
         \label{eq9}
\end{equation}
where $D$ is usually designed as a sample-based metric to calculate the divergence between the marginal $p$ and $q$ measure. We utilize MMD loss due to its great optimization stability and superior statistical property, and thus match higher order moment of the distribution. Details related to MMD will be further introduced in Appendix \ref{appendix:A1}. For the simplicity, we will ignore the guidance $\mathbf{s}$ in further derivation.
\subsection{Marginal Interpolants}
Assuming a time-augmented interpolation $\mathbf{x}_t \sim q_t(\mathbf{x}_t | \mathbf{x}, \boldsymbol{\varepsilon})$ between original data distribution $q(\mathbf{x})$ and prior distribution $p(\boldsymbol{\varepsilon})$, we can formulate the marginal interpolating distribution as:
\begin{equation}
    q_t(\mathbf{x}_t) =  \scalebox{1.4}{$\iint$} q_t(\mathbf{x}_t | \mathbf{x}, \boldsymbol{\varepsilon}) q(\mathbf{x}) p(\boldsymbol{\varepsilon}) \mathrm{d}\mathbf{x} \mathrm{d}\boldsymbol{\varepsilon}. 
    \label{eq10}
\end{equation}
Suppose the conditional distribution of $\mathbf{x}_s | \mathbf{x}_t, \mathbf{x}$ follows as:
\begin{equation}
    q_{s|t}(\mathbf{x}_s | \mathbf{x}, \mathbf{x}_t) = \mathcal{N} \left( I_{s|t}(\mathbf{x}, \mathbf{x}_t), \gamma_{s|t}^2 I \right)
    \label{eq11}
\end{equation}
with constraints $I_{t|t}(\mathbf{x}, \mathbf{x}_t) = \mathbf{x}_t, I_{0|t}(\mathbf{x}, \mathbf{x}_t) = \mathbf{x}, \gamma_{t|t} = \gamma_{0|t} = 0$. This is a natural extrapolation of conditional interpolation $q_t(\mathbf{x}_t | \mathbf{x}, \boldsymbol{\varepsilon})$. Conceptually, it extends the traditional framework by dynamically shifting the prior from endpoint 1 to an arbitrary time $t$. We also adopt the notion of \textit{marginal-preserving} \cite{zhou2025inductive} as follows:
\begin{definition}[Marginal-Preserving Interpolants]
A generalized interpolant $\mathbf{x}_s$ is \textit{marginal-preserving} if for all $t \in [0, 1]$ and for all $s \in [0, t]$, the following equality holds:

\begin{equation}
    q_s(\mathbf{x}_s) = \scalebox{1.4}{$\iint$} q_{s|t}(\mathbf{x}_s|\mathbf{x}, \mathbf{x}_t)q_t(\mathbf{x}|\mathbf{x}_t)q_t(\mathbf{x}_t) \mathrm{d}\mathbf{x}_t \mathrm{d}\mathbf{x},
    \label{eq12}
\end{equation}
where
\begin{equation}
    q_t(\mathbf{x}|\mathbf{x}_t) = \int \frac{q_t(\mathbf{x}_t|\mathbf{x}, \boldsymbol{\varepsilon})q(\mathbf{x})p(\boldsymbol{\varepsilon})}{q_t(\mathbf{x}_t)} \mathrm{d}\boldsymbol{\varepsilon}.
    \label{eq13}
\end{equation}
\label{def1}
\end{definition}
This constraint can provide strong theoretical guarantee for the existence and convergence of the minimizer, as further elucidated in Appendix \ref{appendix:A2}. Similarly, for $\forall t \in [0, 1]$ and $\forall s \in [0, t]$, we can thus define the marginal model on $p$ measure as:
\begin{equation}
    p_{s|t}^{\theta}(\mathbf{x}_s) = \scalebox{1.4}{$\iint$}  q_{s|t}(\mathbf{x}_s|\mathbf{x}, \mathbf{x}_t) p_{s|t}^{\theta}(\mathbf{x}|\mathbf{x}_t) q_t(\mathbf{x}_t) \mathrm{d}\mathbf{x}_t \mathrm{d}\mathbf{x}
    \label{eq14}
\end{equation}
where the interpolate is marginal preserving and $ p_{s|t}^{\theta}(\mathbf{x}|\mathbf{x}_t)$ serves as the implicit one-step sampler we intend to attain. So, our main target is to build a model which can help us to minimize the gap between $p_{s|t}^{\theta}(\mathbf{x}_s)$ and $q_s(\mathbf{x}_s)$.

To produce a clean sample $\mathbf{x}$ given $\mathbf{x}_t \sim q(\mathbf{x}_t)$ via an intermediate $s$, we first sample a denoised $\tilde{\mathbf{x}} \sim p_{s|t}^{\theta}(\mathbf{x}|\mathbf{x}_t)$ and then sample $\tilde{\mathbf{x}}_s \sim q_{s|t}(\mathbf{x}_s|\mathbf{x}, \mathbf{x}_t)$ and then (3) sample $\mathbf{x} \sim p^\theta_{0|s} (\mathbf{x} | \tilde{\mathbf{x}}_s)$. Similar to DDIM \cite{song2020denoising}, this process can be inductive, and thus we can also obtain a multi-step sampler for $p^\theta$ from $t = 1$ to $t = 0$.
\subsection{Actor Loss Formulation}
Directly matching the MMD loss between $q_s(\mathbf{x}_s)$ and $p_{s|t}^{\theta}(\mathbf{x}|\mathbf{x}_t)$ is not a good idea: there can be great discrepancy between  $q_t(\mathbf{x}_t)$ and $q_s(\mathbf{x}_s)$ when $t$ is far from $s$, so  $p^\theta_{s|t} (\mathbf{x}_s)$ can be difficult to estimate. We adopt a training paradigm inspired by consistency training \cite{song2023consistency} to address this challenge. Specifically, we introduce an intermediate timestep $r$ such that $s < r < t$, and enforce a self-consistency constraint by aligning the denoising distributions $p^\theta_{s|r}(\mathbf{x}_s)$ and $p^\theta_{s|t}(\mathbf{x}_s)$:
\begin{equation}
\mathcal{L}_D(\theta) = \mathbb{E}_{s,r,t} \left[\mathsf{MMD}^2 \left( p_{s|r}^{\theta^-}(\mathbf{x}_s), p_{s|t}^{\theta}(\mathbf{x}_s) \right) \right]
    \label{eq15}
\end{equation}
We can utilize a neural network $G_\theta(\mathbf{x}, s, t)$ to attain the denoised sample $\tilde{\mathbf{x}}$ and thus define $p^\theta_{s|t} (\mathbf{x} | \mathbf{x}_t) = \delta(\mathbf{x} - G_\theta(\mathbf{x}_t, s, t))$. With the denoised sample $\tilde{\mathbf{x}}$, we choose to utilize DDIM interpolants, denoted as $f^\theta_{s, t}(\mathbf{x}_t)$ and $f^\theta_{s, r}(\mathbf{x}_r)$, to obtain a sample from $p^\theta_{s|t}(\mathbf{x}_s)$ and $p^\theta_{s|r}(\mathbf{x}_s)$. Empirically, we choose an unbiased estimator of squared MMD Loss \cite{gretton2012kernel} to compute $\mathcal{L}_D$. Further details and theoretical derivation related to actor loss function can be found in Appendix \ref{appendix:A3}.

\subsection{Algorithm Implementation}
We present a brief MoMa QL algorithm in Algorithm \ref{algo:1} and defer the full version in Appendix \ref{appendix:B}. For parameterized $Q_\phi(\mathbf{s},\mathbf{a})$, we utilized the double Q-Learning loss \cite{fujimoto2021minimalist} and with batched data $\mathcal{B} \subseteq \mathcal{D}$, which mitigates Q-value overestimation in Eq. \eqref{eq6}:
\begin{equation}
\begin{split}
\mathcal{L}(\phi) = &\mathbb{E}_{(\mathbf{s}, \mathbf{a}, \mathbf{s}') \sim \mathcal{B}, \mathbf{a}' \sim \pi_{\theta^{\mathsf{T}}}(\cdot | \mathbf{s}')} \Big[ \big( ( r(\mathbf{s}, \mathbf{a}) + \\
&\gamma \min_{i \in \{1, 2\}} Q_{\phi_i^{\mathsf{T}}}(\mathbf{s}', \mathbf{a}') ) - Q_{\phi_i}(\mathbf{s}, \mathbf{a}) \big)^2 \Big]
\end{split}
    \label{eq16}
\end{equation}
The regularized policy $\pi_\theta$ is learned via minimizing the objective function in Eq. \eqref{eq9} via backpropagation. This loss comprises two components: (i) a Q-loss, where actions $\mathbf{a}$ are sampled inductively according to the procedure in Algorithm \ref{algo:2} and subsequently fed into the critic network $Q_{\phi_i}$; and (ii) a behavior cloning (BC) loss, as formulated in Eq. \eqref{eq15}. We update the actor and critic networks using an exponential moving average (EMA) with a smoothing coefficient $\alpha$.
\newcommand{\COMMENTLINE}[1]{\STATE \textcolor{blue}{// #1}}
\begin{algorithm}[tb]
   \caption{Moment Matching Q-Learning}
   \label{algo:1}
\begin{algorithmic}
   \STATE {\bfseries Input} offline dataset $\mathcal{D}$
   \STATE Initialize  policy network $\pi_\theta$, critic networks $Q_{\phi_1}, Q_{\phi_2}$
   \STATE Initialize target network parameters: $\theta^{\mathsf{T}} \leftarrow \theta, \phi_1^{\mathsf{T}} \leftarrow \phi_1, \phi_2^{\mathsf{T}} \leftarrow \phi_2$
   \WHILE{not converge}
   \STATE Sample batch $\mathcal{B} = \{(\mathbf{s}, \mathbf{a}, r, \mathbf{s}')\} \subseteq \mathcal{D}$;
   
   \COMMENTLINE{Q-value Update}
   \STATE Update $Q_{\phi_1}, Q_{\phi_2}$ via Eq. \eqref{eq16};
   
   \COMMENTLINE{Policy Update}
   \STATE Update policy $\pi_\theta$ via Eq. \eqref{eq9};
   
   \COMMENTLINE{Target Update}
   \STATE Update target: $\theta^{\mathsf{T}} \leftarrow \alpha \theta^{\mathsf{T}} + (1 - \alpha) \theta, \phi_i^{\mathsf{T}} \leftarrow \alpha \phi_i^{\mathsf{T}} + (1 - \alpha) \phi_i, i \in \{1, 2\}$;
   \ENDWHILE
   \STATE {\bfseries return} $\pi_\theta, Q_{\phi_1}, Q_{\phi_2}$
\end{algorithmic}
\end{algorithm}

\begin{algorithm}[tb]
   \caption{Action Inference}
   \label{algo:2}
\begin{algorithmic}
   \STATE {\bfseries Input} network $f^\theta$, time schedule$\{t_i\}_{i=0}^N$
   \STATE {Initialize} $\mathbf{a}^\pi_N \sim \mathcal{N}(0, \sigma_d^2 I)$
   \FOR{$i = N, \dots, 1$}
   \STATE $\mathbf{a}^\pi_{t_{i-1}} \leftarrow f^\theta_{t_{i-1}, t_i}(\mathbf{a}^\pi_{t_i})$
   \ENDFOR
   \STATE {\bfseries return} $\mathbf{a}^\pi := \mathbf{a}^\pi_{t_0}$
\end{algorithmic}
\end{algorithm}

\section{Experiments} 

This section empirically evaluates our proposed RL algorithm, MoMa QL. We conduct comprehensive experiments on standard offline RL and offline-to-online RL benchmarks. We also conducted experiments to demonstrate the effectiveness and stability of our method.

\subsection{Experiment Setup}

\subsubsection{Benchmarks}

We primarily evaluate the expressiveness and computational efficiency of our proposed algorithm and the responding algorithms on three task suites (Gym, Adroit and Kitchen) in D4RL benchmarks \citep{fu2021d4rldatasetsdeepdatadriven}. D4RL benchmarks include a diverse set of offline RL tasks spanning locomotion, manipulation, and Gym Mujoco environments. Tasks in D4RL are formulated with fixed offline datasets and standardized evaluation protocols.

\subsubsection{Baseline Methods}

Our experiments evaluate three distinct learning paradigms: behavior cloning (BC), offline reinforcement learning (RL), and offline-to-online RL. We select a range of established and state-of-the-art algorithms as baselines, drawing from both classical approaches and recent advances in generative modeling for RL.

\noindent \textbf{Behavior Cloning (BC).} We compare our method with vanilla Behavior Cloning (BC) parameterized by Gaussian policy, Consistency-BC \cite{ding2023consistency}, and Diffusion-BC \cite{wang2022diffusion}. Additionally, we include classic BC baselines reported in previous works: Diffuser \cite{Diffuser}, MoRel \cite{kidambi2020morel}, Onestep RL \cite{brandfonbrener2021offline}, TD3+BC \cite{fujimoto2021minimalist} and Decision Transformer(DT)\cite{DT}.
    
\noindent \textbf{Offline RL.} For offline RL, we compare methods across different algorithmic families. In Gym locomotion tasks, baselines include Conservative Q-Learning (CQL) \cite{kumar2020conservative}, Implicit Q-Learning (IQL) \cite{IQL}, Extreme Q-learning ($\mathcal{X}$-QL)\cite{XQL},  Actor-Restricted Q-learning (ARQ)\cite{ARQ}, IDQL-A\cite{IDQL}, Diffusion-QL \cite{wang2022diffusion}, and Consistency-AC (CAC) \cite{ding2023consistency}. 
    For Adroit manipulation tasks, we consider BC, IQL\cite{IQL}, ReBRAC \cite{ReBRAC}, Implicit Diffusion Q-Learning (IDQL) \cite{IDQL}, SRPO \cite{SRPO}, CAC\cite{ding2023consistency}, Flow advantage-weighted actor-critic
(FAWAC) \cite{AWAC}, Flow behavior-regularized actor-critic (FBRAC)\cite{wang2022diffusion}, Implicit Flow Q-Learning (IFQL)\cite{park2025flow}, and Flow Q-Learning (FQL) \cite{park2025flow}.
    For Kitchen tasks, baselines include CQL, IQL, $\mathcal{X}$-QL, ARQ, Diffusion-QL, and Consistency-AC. 

\noindent \textbf{Offline-to-Online RL.} In the offline-to-online setting, we initialize the policy using offline data and fine-tune it with online interactions. We compare MoMa QL against IQL, Cal-QL \cite{nakamoto2023cal}, RLPD \cite{ball2023efficient}, Diffusion-QL, and Consistency-AC. We also include comparisons with Soft Actor-Critic (SAC) \cite{SAC}, Advantage-Weighted Actor-Critic (AWAC) \cite{AWAC}, and Actor-Critic Alignment (ACA) \cite{yu2023actor} in detailed evaluations.

For each method, We adopted the same results from the previous works\cite{ding2023consistency,park2025flow}. We do not use the same parameters for all the methods.

\subsection{Main Results For Behavior Cloning}

We evaluate our method, MoMa QL, against various baseline algorithms in the behavior cloning (BC) setting. Table \ref{tab:bc_results} summarizes the performance comparisons on the D4RL Gym tasks. The hyperparameters for training can be found in Appendix~\ref{appendix:hyperparameters}.

\begin{table*}[!h]
\centering
\caption{Performance comparison on D4RL Gym tasks under the Behavior Cloning setting. We report the normalized average scores over five random seeds with standard deviations reported. The best result for each task is highlighted in bold.}
\label{tab:bc_results}
\resizebox{\textwidth}{!}{
\begin{tabular}{lccccccccc}
\toprule
Task Category 
& BC & Consistency-BC & Diffusion-BC & Diffuser & MoRel 
& Onestep RL & TD3+BC & DT & Ours \\
\midrule

halfcheetah-m & 42.6 & 31.0$\pm$0.4 & 45.4$\pm$1.8 & 44.2 & 42.1 & 48.4 & 48.3 & 42.6 & \textbf{69.9$\pm$ 1.2} \\
hopper-m      & 52.9 & 71.7$\pm$8.0 & 65.3$\pm$5.8 & 58.5 & 95.4 & 59.6 & 59.3 & 67.6 & \textbf{104.2$\pm$4.6} \\
walker2d-m    & 75.3 & 83.1$\pm$0.3 & 81.2$\pm$1.7 & 79.7 & 77.8 & 81.8 & 83.7 & 74.0 & \textbf{87.5$\pm$0.7} \\

halfcheetah-mr & 36.6 & 34.4$\pm$5.3 & 41.7$\pm$0.4 & 42.2 & 40.2 & 38.1 & 44.6 & 36.6 & \textbf{58.3$\pm$2.7} \\
hopper-mr      & 18.1 & 99.7$\pm$0.5 & 67.9$\pm$28.1 & 96.8 & 93.6 & 97.5 & 60.9 & 82.7 & \textbf{105.8$\pm$1.3} \\
walker2d-mr    & 26.0 & 73.3$\pm$5.7 & 77.5$\pm$4.7 & 61.2 & 49.8 & 49.5 & 81.8 & 66.6 & \textbf{103.0$\pm$3.3} \\

halfcheetah-me & 55.2 & 32.7$\pm$1.2 & 90.8$\pm$1.1 & 79.8 & 53.3 & 93.4 & 90.7 & 86.8 & \textbf{106.9$\pm$2.3} \\
hopper-me      & 52.5 & 90.6$\pm$9.3 & \textbf{107.6$\pm$4.3} & 107.2 & 108.7 & 103.3 & 98.0 & 107.6 & 67.9$\pm$12.5 \\
walker2d-me   & 107.5 & 110.4$\pm$0.7 & 108.9$\pm$0.6 & 108.4 & 95.6 & \textbf{113.0} & 110.1 & 108.1 & 104.6$\pm$1.3 \\

\midrule
\textbf{Average} & 51.9 & 69.7 & 76.3 & 75.3 & 72.9 & 76.1 & 75.3 & 74.7 & \textbf{89.8} \\
\bottomrule
\end{tabular}}
\end{table*}

The quantitative results indicate that our method, MoMa QL, achieves the highest overall average score of 89.8, surpassing all baseline methods. Compared to vanilla BC (51.9) and Diffusion-BC (76.3), our approach achieves relative improvements of \textbf{$1.73\times$} and \textbf{$1.18\times$}, respectively. This suggests that the moment-matching objective effectively captures the underlying distribution of the dataset, providing a more robust policy representation than standard supervised learning or diffusion-based counterparts.

Notably, MoMa QL excels in tasks with medium and medium-replay datasets. For instance, on \texttt{walker2d-mr} and \texttt{halfcheetah-m}, our method surpasses the strongest baselines by factors of \textbf{$1.26\times$} and \textbf{$1.44\times$}, respectively. This highlights the effectiveness of our method in handling suboptimal and highly stochastic data. While the performance on \texttt{hopper-me} is lower than some baselines, likely due to the specific characteristics of that expert dataset, the consistent superiority across the majority of tasks confirms the strong generalization capability of our proposed framework.

\subsection{Results For Offline RL}

We evaluate MoMa QL on standard D4RL benchmarks across three task suites: Gym locomotion tasks, Adroit manipulation tasks, and Kitchen tasks. Following key baselines, we compare MoMa QL against a variety of methods representing distinct policy classes. Table~\ref{tab:offline_results_summary} presents the aggregated results across dataset variants. Detailed per-dataset results can be found in Table~\ref{tab:offline_results_detailed} in Appendix~\ref{appendix:results}.

\begin{table*}[!h]
\centering
\caption{Aggregated offline RL results on D4RL benchmarks. We report normalized scores averaged across dataset variants (e.g., medium, medium-replay, medium-expert). We report the normalized average scores over five random seeds with standard deviations reported. Bold values indicate the best performance in each row.}
\label{tab:offline_results_summary}

\resizebox{\textwidth}{!}{
\begin{tabular}{lccccccccc}
\toprule
\textbf{Gym Tasks} & CQL & IQL & $\mathcal{X}$-QL & ARQ & IDQL-A & Diffusion-QL & Consistency-AC & MoMa QL (Ours) \\
\midrule
halfcheetah (3 tasks) & 60.4 & 59.4 & 62.6 & 59.3 & 64.3 & 65.2 & 70.7 & \textbf{80.9} \\
hopper (3 tasks)      & 86.3 & 84.2 & 95.4 & 84.0 & 88.7 & \textbf{101.0} & 93.6 & 99.9 \\
walker2d (3 tasks)    & 86.2 & 87.3 & 93.0 & 85.3 & 93.4 & 97.5 & 91.0 & \textbf{105.8} \\
\midrule
Average & 77.6 & 77.0 & 83.7 & 76.2 & 82.1 & 87.9 & 85.1 & \textbf{95.5} \\
\bottomrule
\end{tabular}
}

\vspace{0.3cm}

\resizebox{\textwidth}{!}{
\begin{tabular}{lccccccccccc}
\toprule
\textbf{Adroit Tasks} & BC & IQL & ReBRAC & IDQL & SRPO & CAC & FAWAC & FBRAC & IFQL & FQL & MoMa QL (Ours) \\
\midrule
pen (3 tasks)      & 77.7 & 96.3 & \textbf{119.3} & 93.3 & 88.0 & 74.3 & 82.3 & 87.7 & 96.7 & 89.7 & 108.3 \\
door (3 tasks)     & 35.7 & 37.7 & 35.3 & 37.0 & 36.0 & 34.7 & 35.0 & 36.0 & 37.7 & 35.3 & \textbf{38.5} \\
hammer (3 tasks)   & 43.7 & 44.3 & 45.3 & 43.0 & 43.3 & 31.7 & 40.3 & 41.0 & 40.7 & 45.7 & \textbf{46.2} \\
relocate (3 tasks) & 36.0 & 35.3 & \textbf{36.7} & 35.7 & 35.3 & 31.0 & 35.0 & 35.3 & 34.7 & 35.7 & 36.0 \\
\midrule
Average & 48.3 & 53.4 & 55.4 & 52.3 & 50.7 & 42.9 & 48.2 & 50.0 & 52.4 & 51.6 & \textbf{56.7} \\
\bottomrule
\end{tabular}
}

\vspace{0.3cm}

\resizebox{\textwidth}{!}{
\begin{tabular}{lcccccccc}
\toprule
\textbf{Kitchen Tasks} & CQL & IQL & $\mathcal{X}$-QL & ARQ & Diffusion-QL & Consistency-AC & MoMa QL (Ours) \\
\midrule
kitchen (3 tasks) & 48.2 & 53.3 & 72.9 & 42.0 & 69.0 & 45.3 & \textbf{73.1} \\
\bottomrule
\end{tabular}
}
\end{table*}

\textbf{Gym Locomotion Tasks.} In the standard D4RL Gym locomotion suite, MoMa QL achieves a dominant average score of \textbf{95.5}, surpassing the strongest baseline, Diffusion-QL (87.9), by a factor of approximately $1.09\times$. Specifically, on HalfCheetah and Walker2D, our method outperforms the nearest competitors by margins of $14\%$ and $8.5\%$, respectively. Although slightly trailing Diffusion-QL on Hopper ($99.9$ vs. $101.0$), MoMa QL demonstrates superior consistency across tasks, effectively balancing distribution matching with reward maximization.

\textbf{Adroit Manipulation Tasks.} For the challenging high-dimensional Adroit tasks, MoMa QL achieves an average score of \textbf{56.7}, which is highly competitive with the state-of-the-art method ReBRAC (55.4). Notably, on the \texttt{door} task, our method attains the highest score of \textbf{38.5}, outperforming all baselines. While ReBRAC shows advantages in expert datasets, MoMa QL demonstrates robustness across diverse dataset qualities, particularly excelling in scenarios where capturing multi-modal distributions is critical.

\textbf{Kitchen Tasks.} In the long-horizon Kitchen environments, MoMa QL achieves a strong average score of \textbf{73.1}, outperforming Diffusion-QL (69.0) and $\mathcal{X}$-QL (72.9). This highlights the advantage of our sample-efficient few-step generation for sequential control tasks requiring long-term planning.

\textbf{Overall Performance.} Across all benchmarks, MoMa QL demonstrates consistent state-of-the-art or competitive performance. The significant improvements in Gym tasks ($1.09\times$ over Diffusion-QL) and robust results in Adroit and Kitchen domains validate the effectiveness of our MMD-based moment matching framework.

\subsection{Offline-to-Online Fine-tuning}
\label{sec:offline_online}

We further investigate the capability of MoMa QL in the offline-to-online setting. In this regime, we initialize the policy with the pre-trained offline model (trained for 100K steps) and then fine-tune it with online interactions for an additional 400K steps. We use the same hyperparameters as in the offline setting, which are detailed in Appendix~\ref{appendix:hyperparameters}. 

Table~\ref{tab:offline_to_online_main} highlights the performance improvement of MoMa QL after online fine-tuning. Our method achieves significant gains, particularly on the challenging medium-replay datasets, where the offline initialization provides a strong starting point for further effective learning. For instance, on \texttt{halfcheetah-medium-replay}, performance improves from $63.3$ to $80.9$ ($\approx 28\%$ increase).

\begin{table}[h]
\centering
\caption{Performance improvement of MoMa QL in the offline-to-online setting on Gym tasks. We report the score transition from \textbf{Offline $\rightarrow$ Online}. Improvements are substantial across tasks.}
\label{tab:offline_to_online_main}
\begin{small}
\begin{sc}
\begin{tabular}{l@{\hspace{0.5cm}}c@{\hspace{0.3cm}}c@{\hspace{0.3cm}}c}
\toprule
Task & Offline & \multicolumn{2}{c}{Online Score} \\
\cmidrule(l){3-4}
& Score & Score & $\Delta$ \\
\midrule
HalfCheetah-m & 72.6 & \textbf{83.1} & +10.5 \\
Hopper-m & 104.2 & \textbf{104.3} &  +0.1 \\
Walker2d-m & 95.6 & \textbf{99.1} &  +3.5 \\
HalfCheetah-mr & 63.3 & \textbf{80.9} &  +17.6 \\
Hopper-mr & 106.5 & \textbf{113.5} &  +7.0 \\
Walker2d-mr & 104.3 & \textbf{117.9} &  +13.6 \\
HalfCheetah-me & 106.9 & \textbf{107.1} &  +0.2 \\
Hopper-me & 88.9 & \textbf{89.1} &  +0.2 \\
Walker2d-me & 117.5 & \textbf{118.0} &  +0.5 \\
\bottomrule
\end{tabular}
\end{sc}
\end{small}
\end{table}

Figure~\ref{fig:offline_online_curves} visualized the improvement of MoMa QL after online finetuning. More detailed comparisons with baselines are provided in Appendix~\ref{appendix:offline_to_online}.

\begin{figure}[h]
    \centering
    \begin{subfigure}{0.48\linewidth}
        \centering
        \includegraphics[width=\linewidth]{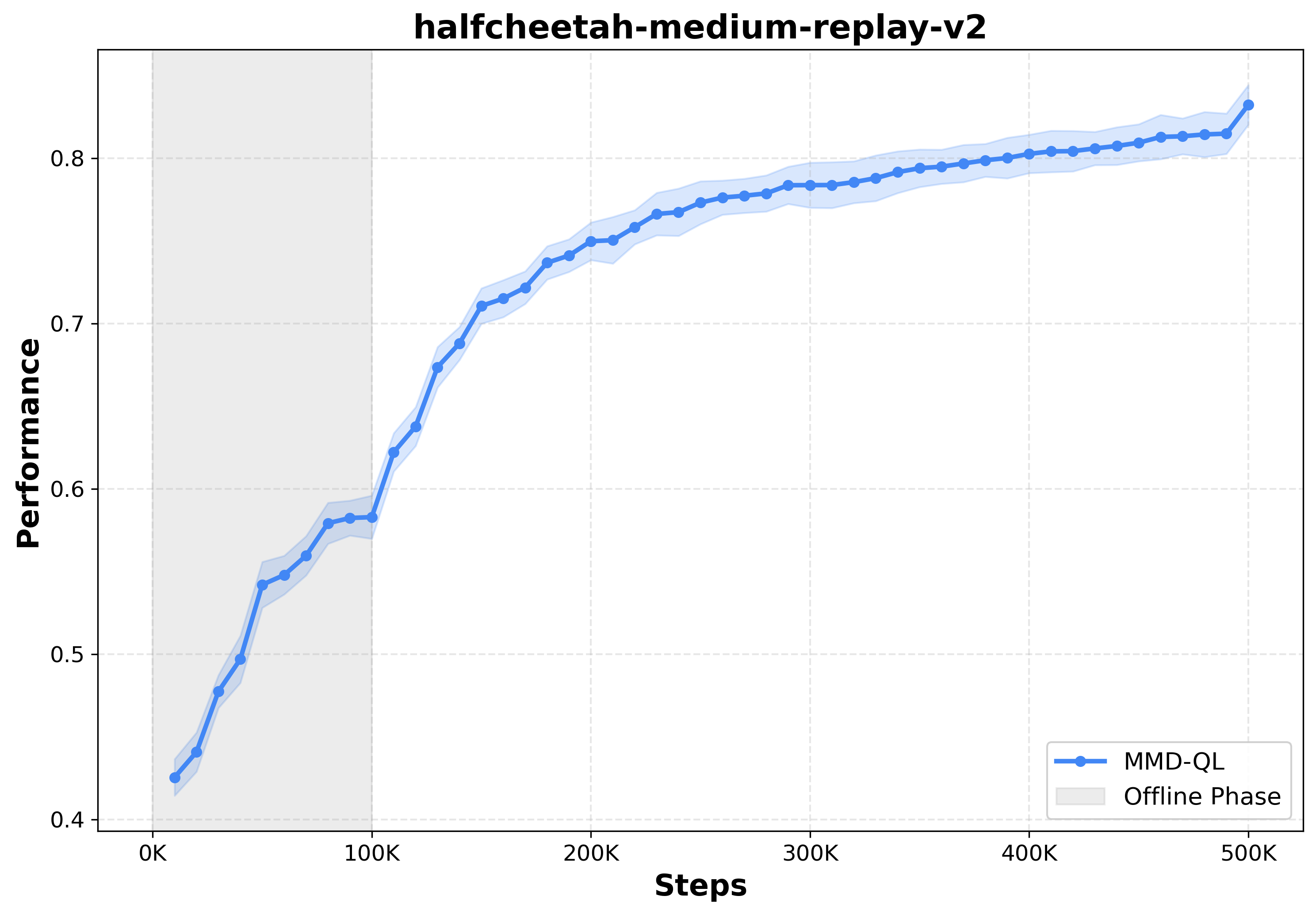}
        \caption{offline to online on halfcheetah-medium-replay}
        \label{fig:offline2online_halfcheetah-mr}
    \end{subfigure}
    \hfill
    \begin{subfigure}{0.48\linewidth}
        \centering
        \includegraphics[width=\linewidth]{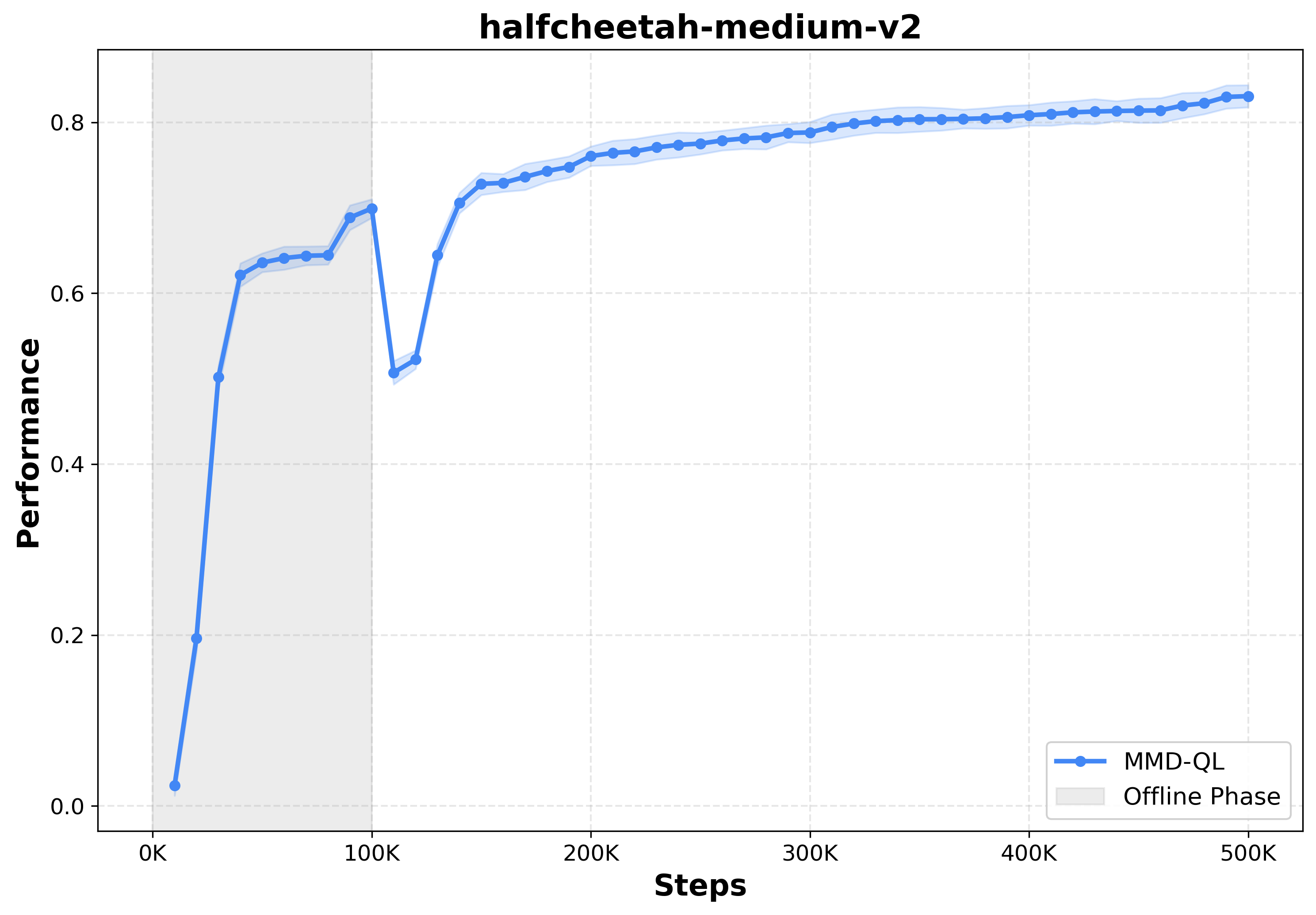}
        \caption{offline to online on halfcheetah-medium}
        \label{fig:offline2online_halfcheetah-m}
    \end{subfigure}
    \caption{Learning curves of MoMa QL during online fine-tuning on \texttt{halfcheetah-medium-replay} and \texttt{halfcheetah-medium}. The offline initialization allows for rapid adaptation and consistent improvement.}
    \label{fig:offline_online_curves}
\end{figure}

\subsection{Ablation Studies}
\label{sec:ablation}
We conduct comprehensive ablation studies to demonstrate that MoMa QL is highly robust to hyperparameter variations. We examine six parameters: $\eta$, $N$, $a$, $b$, $p_{\text{mean}}$, and $p_{\text{std}}$. Detailed results are provided in Appendix~\ref{appendix:ablation}.

First, we analyze the impact of the Q-learning weight $\eta$ and sampling steps $N$. As shown in Figure~\ref{fig:ablation_critical}(a), performance remains stable and high for $\eta \in [0.5, 1.0]$. This indicates that the method is not sensitive to precise tuning of the balance parameter, effectively learning across a broad range of weights. Regarding the number of sampling steps, Figure~\ref{fig:ablation_critical}(b) demonstrates that performance saturates at $N=2$. MoMa QL is thus computationally efficient, working well with as few as 2-4 steps without performance degradation at higher steps.

\begin{figure}[h]
    \centering
    \begin{subfigure}{0.48\linewidth}
        \centering
        \includegraphics[width=\linewidth]{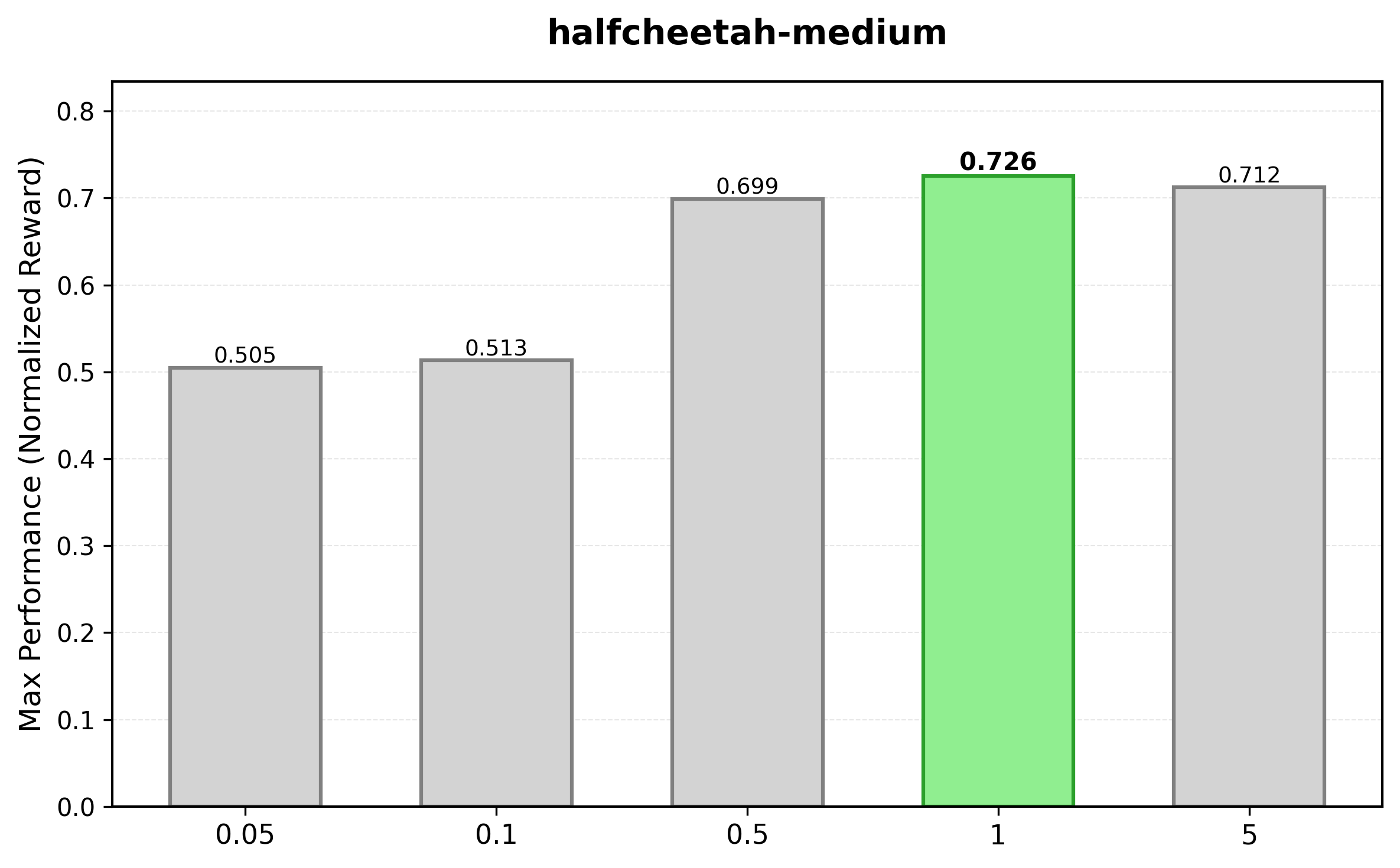}
        \caption{Effect of $\eta$}
        \label{fig:ablation_eta}
    \end{subfigure}
    \hfill
    \begin{subfigure}{0.48\linewidth}
        \centering
        \includegraphics[width=\linewidth]{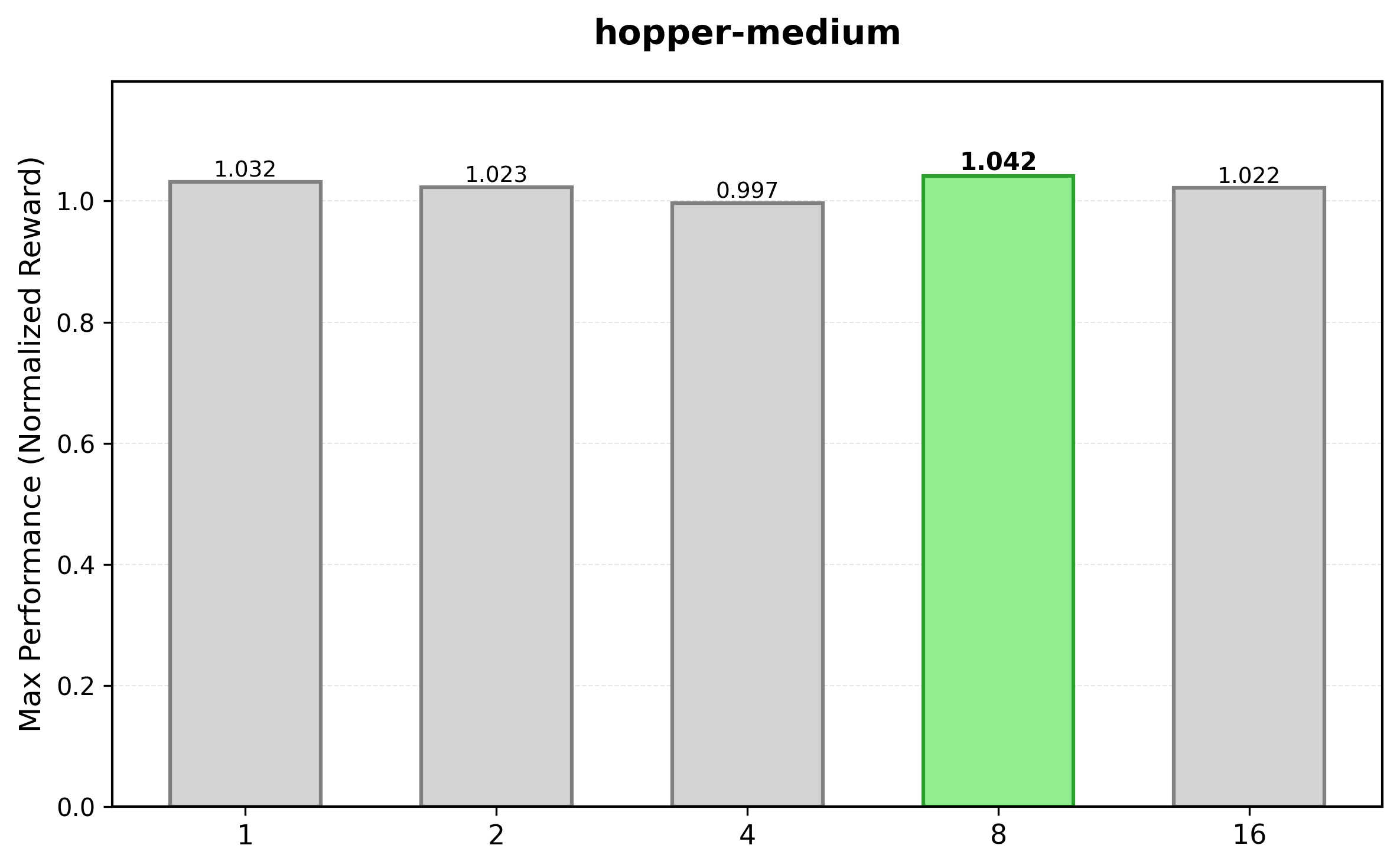}
        \caption{Effect of $N$}
        \label{fig:ablation_num_steps}
    \end{subfigure}
    \caption{Robustness analysis for $\eta$ and $N$. (a) Performance is consistent across $\eta \in [0.5, 5.0]$. (b) High performance is maintained for $N \ge 2$.}
    \label{fig:ablation_critical}
\end{figure}

Furthermore, MoMa QL performs consistently well across variations in MMD kernel parameters ($a, b$) and noise schedule parameters ($p_{\text{mean}}, p_{\text{std}}$). Figure~\ref{fig:ablation_robust}(a, b) shows that the method is insensitive to kernel parameters $a, b \in [1, 8]$, requiring no specific tuning. Similarly, Figure~\ref{fig:ablation_robust}(c, d) indicates that performance is largely invariant to the noise schedule parameters. This overall robustness ensures that MoMa QL is easy to deploy without complex hyperparameter search.

\begin{figure}[h]
    \centering
    \begin{subfigure}{0.48\linewidth}
        \centering
        \includegraphics[width=\linewidth]{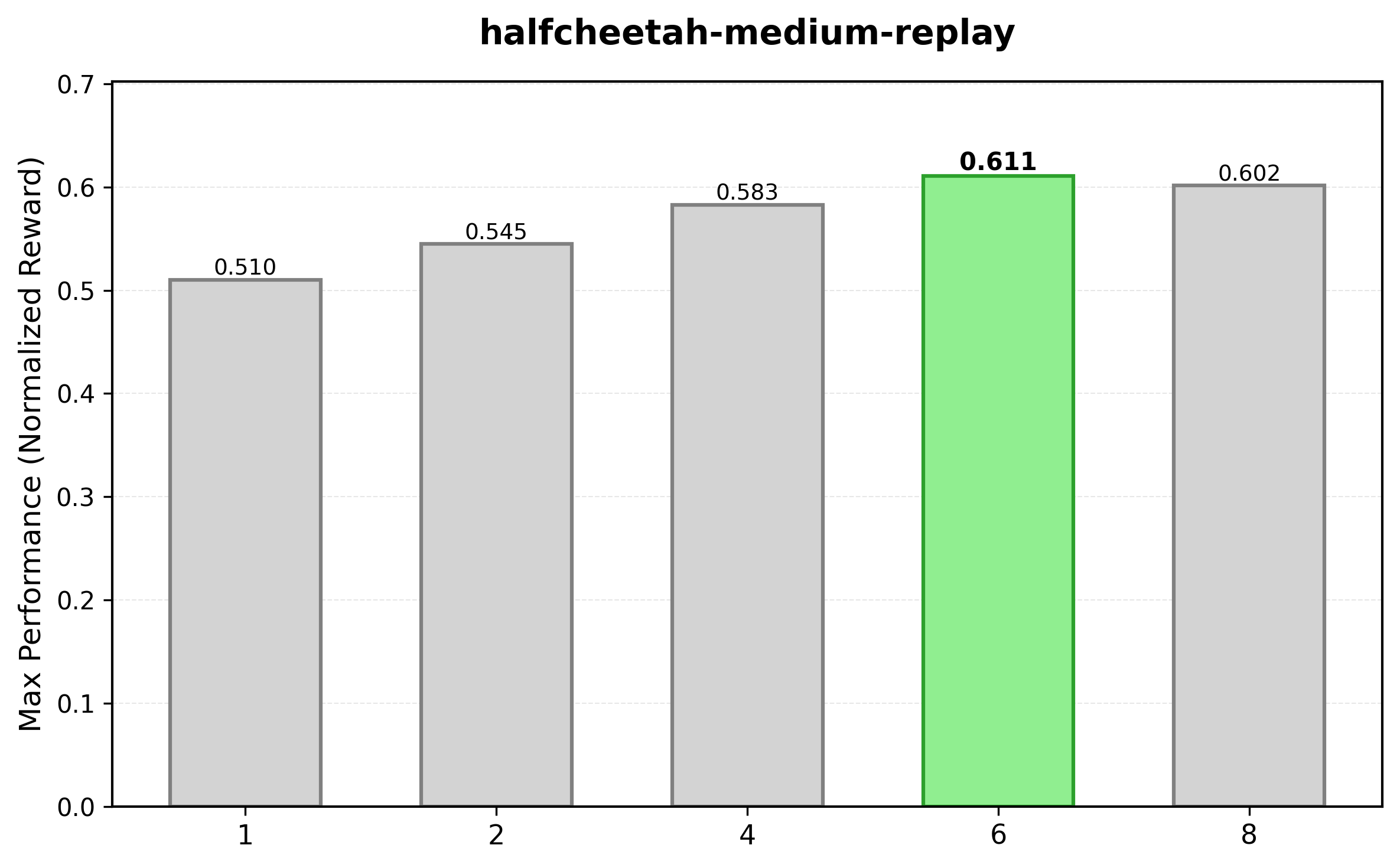}
        \caption{Kernel $a$}
    \end{subfigure}
    \hfill
    \begin{subfigure}{0.48\linewidth}
        \centering
        \includegraphics[width=\linewidth]{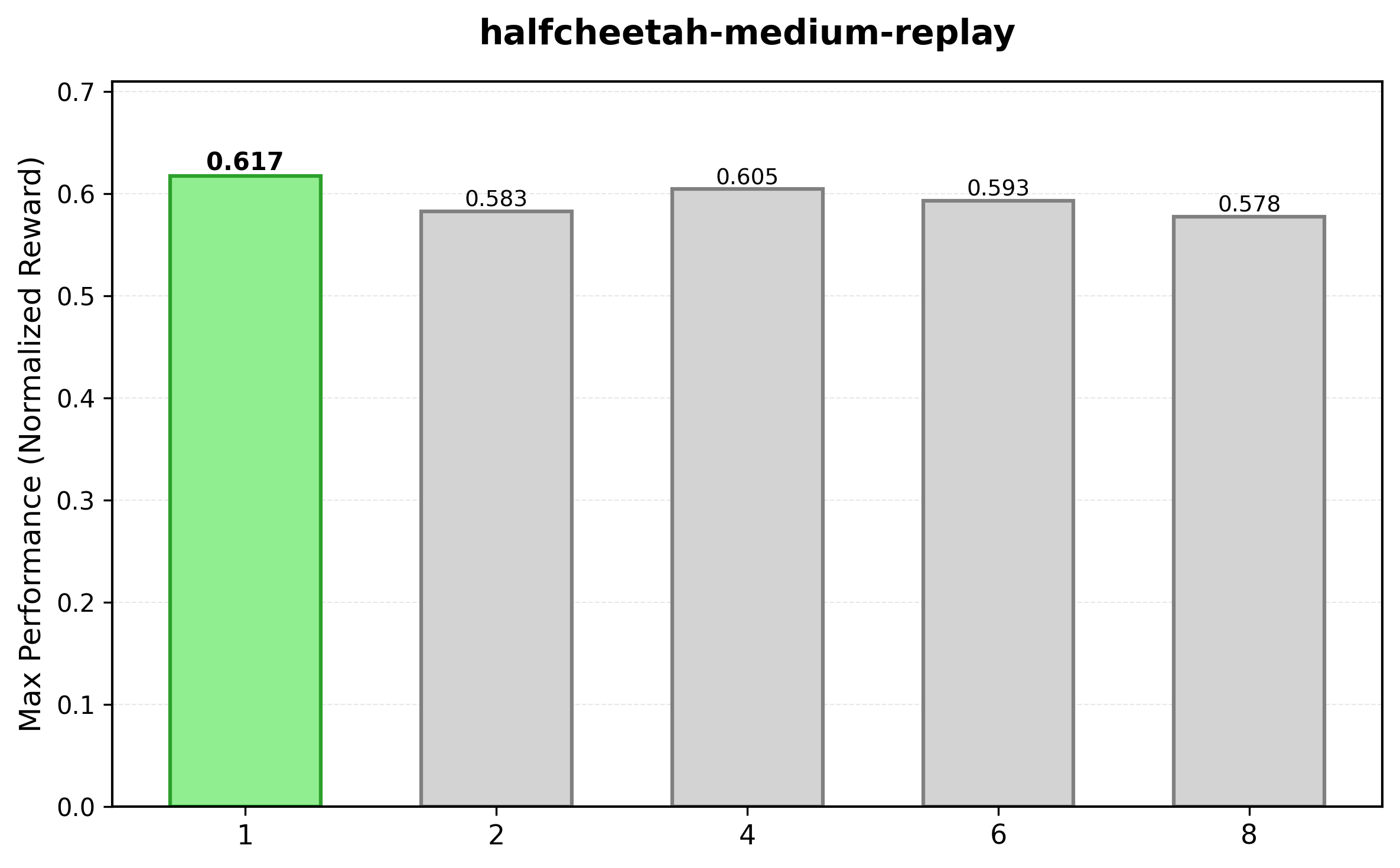}
        \caption{Kernel $b$}
    \end{subfigure}
    \vskip\baselineskip
    \begin{subfigure}{0.48\linewidth}
        \centering
        \includegraphics[width=\linewidth]{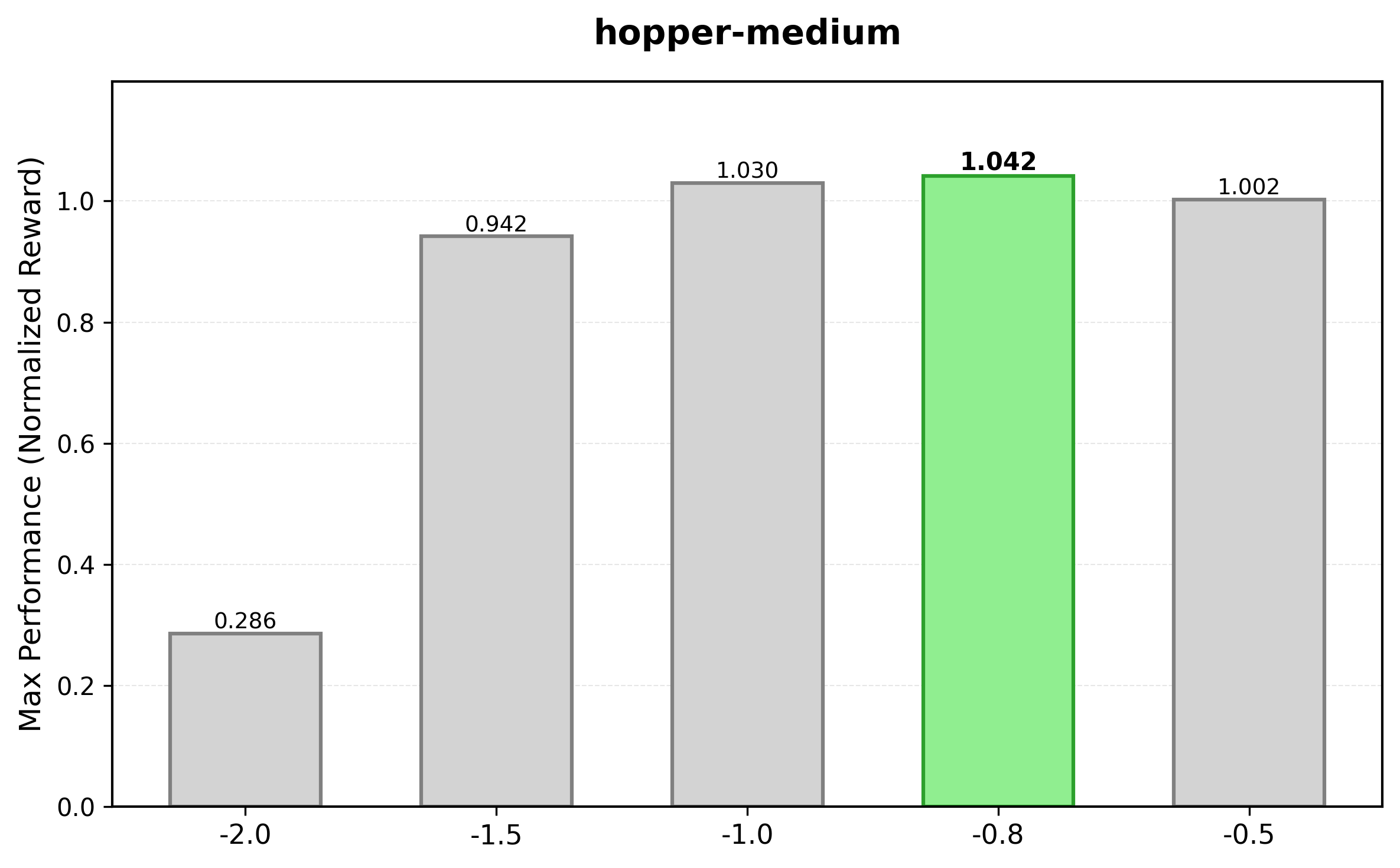}
        \caption{Noise $p_{\text{mean}}$}
    \end{subfigure}
    \hfill
    \begin{subfigure}{0.48\linewidth}
        \centering
        \includegraphics[width=\linewidth]{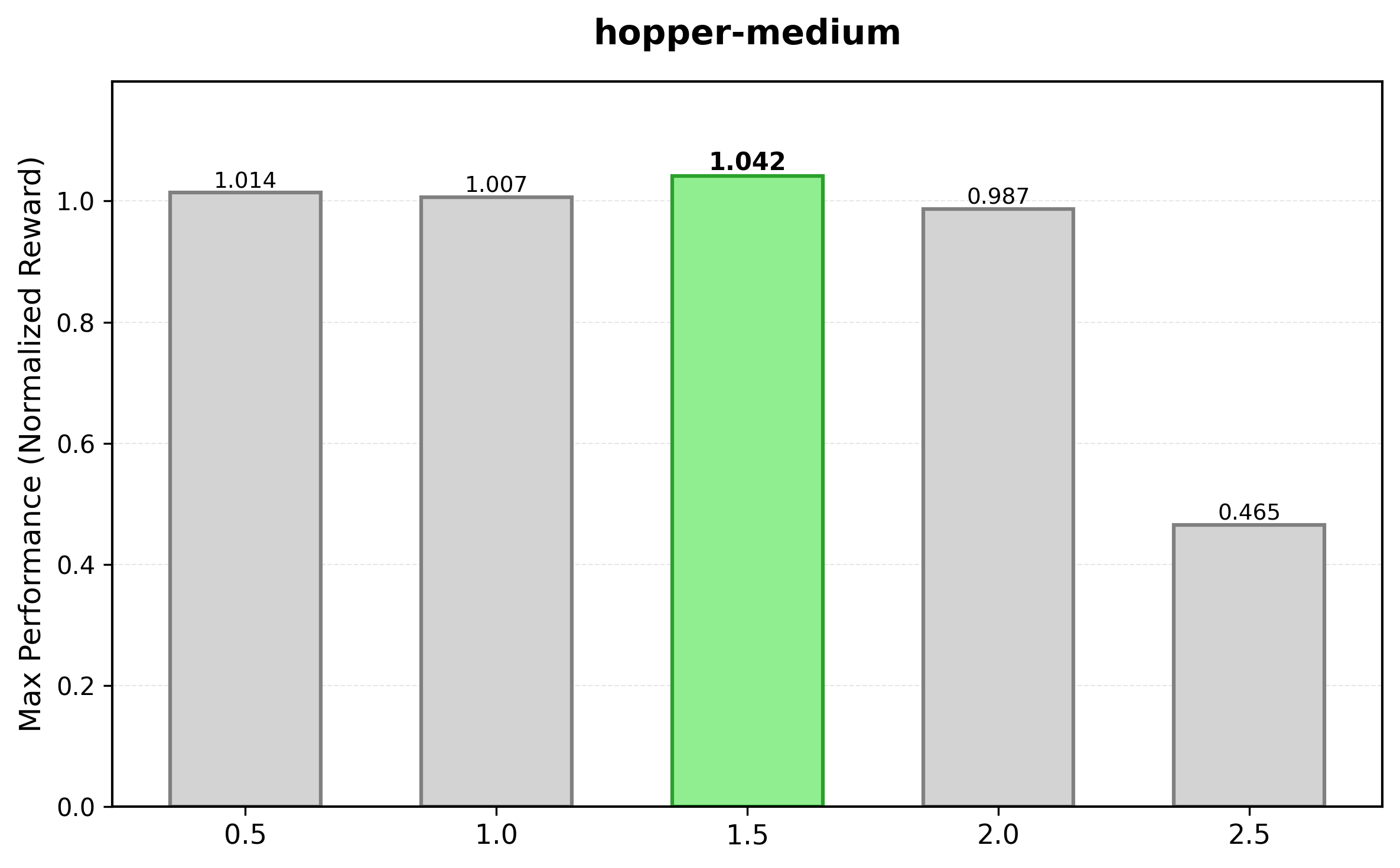}
        \caption{Noise $p_{\text{std}}$}
    \end{subfigure}
    \caption{Robustness analysis. Performance is stable across variations in kernel parameters ($a, b$) and noise parameters ($p_{\text{mean}}, p_{\text{std}}$).}
    \label{fig:ablation_robust}
\end{figure}

\subsection{Computational Cost Analysis}
\label{sec:comp_cost}

We evaluate the computational efficiency of MoMa QL by measuring the training time per 1,000 gradient steps across different tasks. We ensure a fair comparison by using identical batch sizes across all methods. Figure~\ref{fig:training_time_comparison} compares the training efficiency of our method against diffusion-based (Diffusion-BC) and consistency-based (Consistency-BC) baselines. Baseline results are normalized from literature values, assuming 1 epoch = 500 steps vs.\ our 1,000 steps.

The results demonstrate that MoMa QL achieves superior computational efficiency. On average, our method is approximately $\mathbf{6\times}$ faster than Diffusion-BC and $\mathbf{1.5\times}$ faster than Consistency-AC on complex tasks like Adroit and Kitchen. Notably, MoMa QL maintains a constant training cost ($\approx 21-23$ seconds per 1k steps) regardless of task dimensionality, whereas baselines exhibit significant slowdowns on high-dimensional tasks.

\begin{figure}[h]
    \centering
    \includegraphics[width=0.8\linewidth]{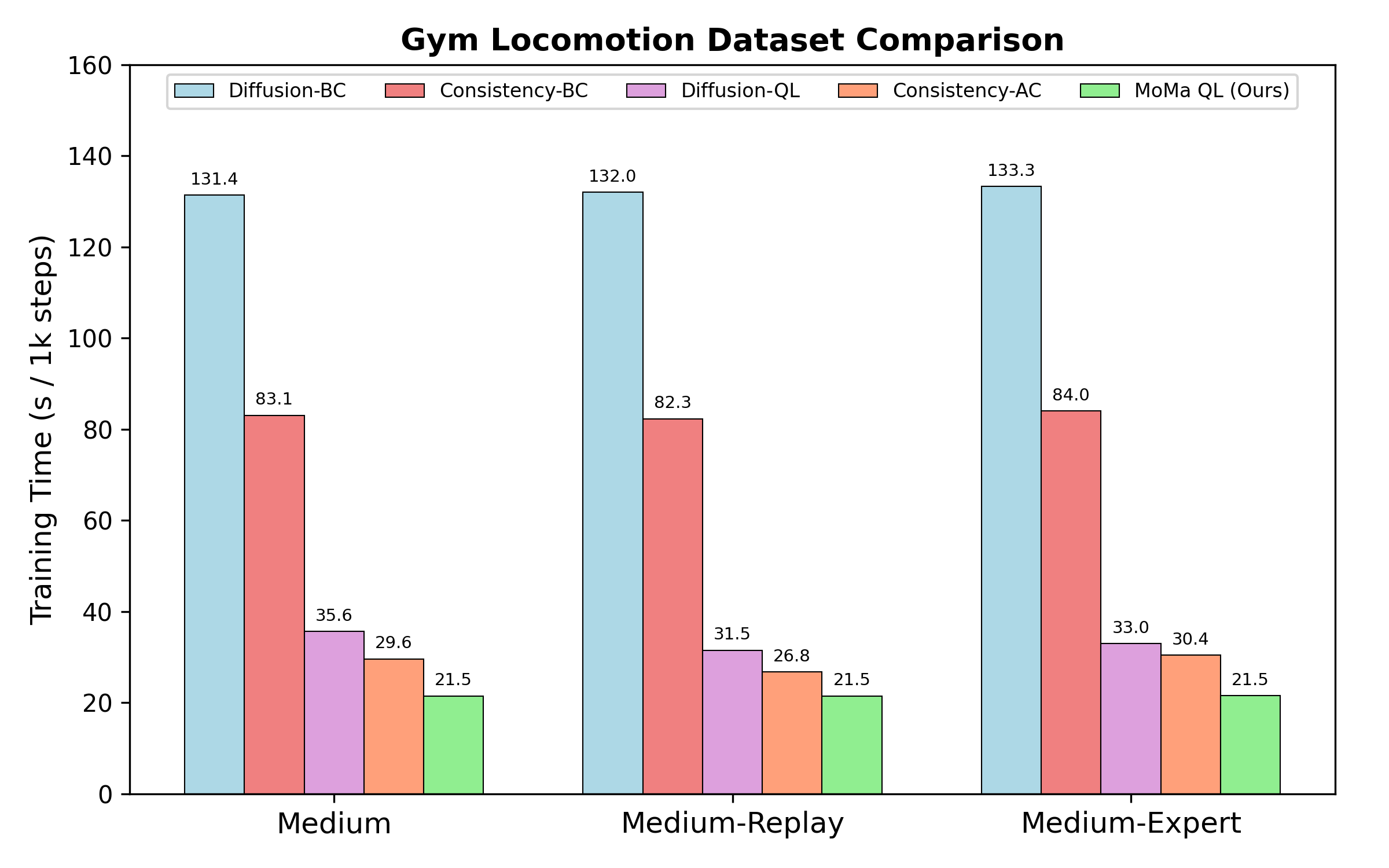}
    \caption{Training time comparison (seconds per 1,000 steps) across D4RL domains. MoMa QL demonstrates superior scalability and efficiency, particularly in high-dimensional Adroit and Kitchen environments.}
    \label{fig:training_time_comparison}
\end{figure}

\textbf{Impact of Sampling Steps.} We further analyze how the number of sampling steps affects training efficiency. Figure~\ref{fig:numsteps_time} shows the relationship between the number of sampling steps (1, 2, 4, 8, 16) and training time per 1,000 gradient steps across Gym, Adroit, and Kitchen domains. 

The results reveal several key insights: (1) Training time scales approximately linearly with the number of sampling steps, increasing from $\sim$17-18s for single-step sampling to $\sim$71-73s for 16-step sampling across all domains. (2) MoMa QL maintains consistent efficiency regardless of task complexity---the training cost remains nearly identical across Gym locomotion ($17.70 \rightarrow 72.56$s), Adroit manipulation ($17.64 \rightarrow 73.12$s), and Kitchen ($17.71 \rightarrow 70.94$s) tasks, demonstrating excellent scalability. (3) Even with 16 sampling steps, MoMa QL remains faster than Diffusion-BC ($\sim$132-189s) and Consistency-BC ($\sim$83-130s) baselines. This linear scaling behavior enables practitioners to flexibly trade off between sample quality and computational cost based on task requirements. Detailed per-step timing results are provided in Appendix~\ref{appendix:numsteps_results}.

\begin{figure}[h]
    \centering
    \includegraphics[width=1.0\linewidth]{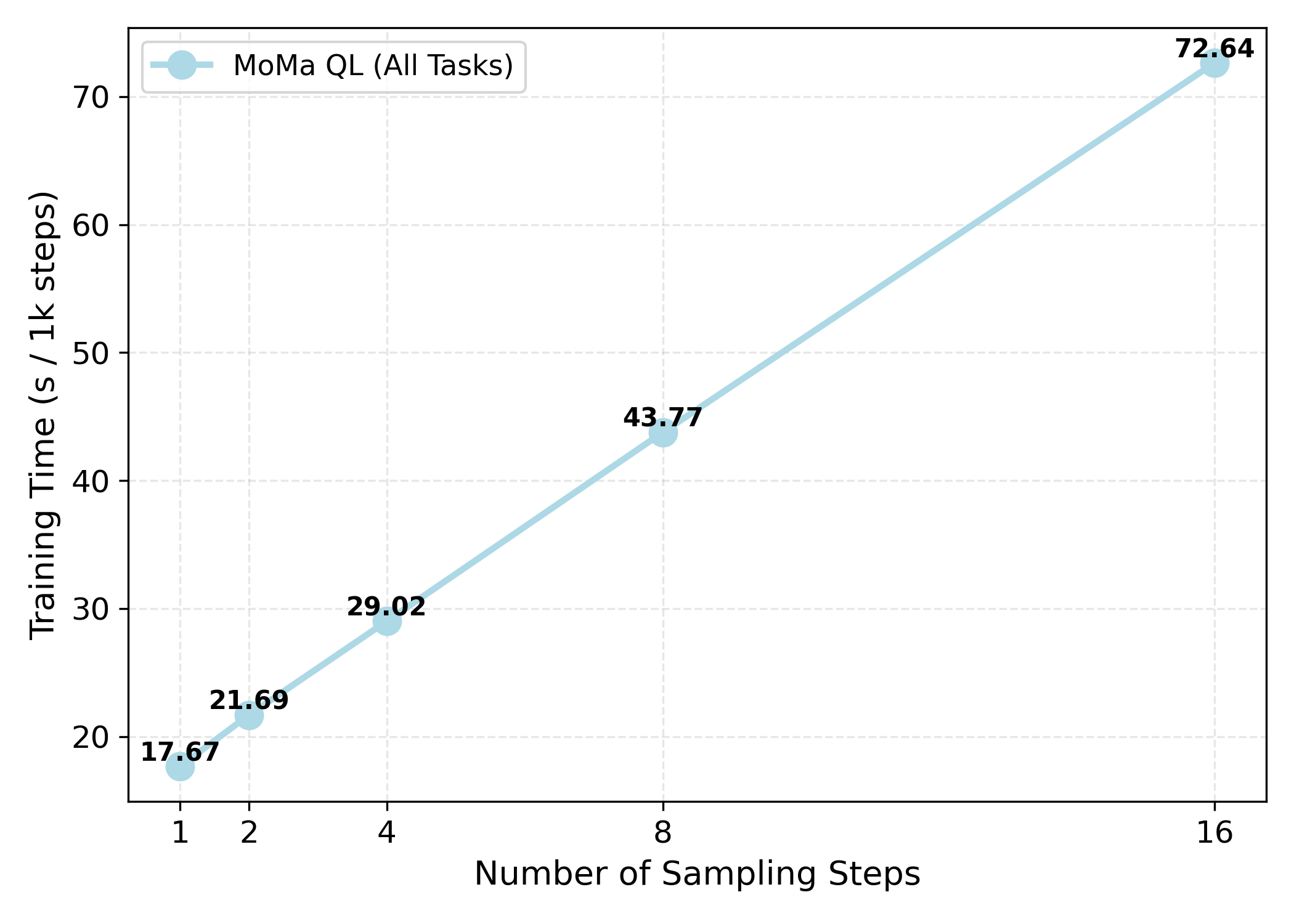}
    \caption{Training time (seconds per 1,000 steps) with respect to the sampling steps for MoMa QL. We report average training time for Gym, Adroit, and Kitchen environments.}
    \label{fig:numsteps_time}
\end{figure}

Detailed per-task training and inference time comparisons are provided in Appendix~\ref{appendix:comp_results}.

\section{Conclusion}
\label{sec:conclusion}

In this paper, we presented Moment Matching Q-Learning (MoMa QL), an efficient RL framework that integrates MMD regularization with stochastic interpolants.
MoMa QL strikes a balance between modeling expressivity and computational efficiency, achieving state-of-the-art results.
Empirically, it surpasses Diffusion-QL by $1.09\times$ on Gym tasks and maintains competitive performance on complex Adroit and Kitchen domains.
Crucially, our method is approximately and $1.5\times$ faster than Consistency-AC, eliminating the inference bottleneck of generative policies.
Beyond efficiency, MoMa QL exhibits strong robustness to hyperparameters and effective offline-to-online fine-tuning capabilities.
We hope this simple yet rigorous approach spurs future research into scalable generative policies.

\section*{Acknowledgements}
We thank the anonymous reviewers for their helpful comments. This research was supported by
WZ’s startup funding provided by the School of Data Science and Society at UNC Chapel Hill. This research supported by the NVIDIA Academic Grant Program with the cloud GPU credits and infrastructure.




\section*{Impact Statement}

This paper presents work whose goal is to advance the field of Reinforcement Learning and Robotics. There are many potential societal consequences of our work, none
which we feel must be specifically highlighted here.

\nocite{langley00}

\bibliography{example_paper}
\bibliographystyle{icml2026}

\newpage
\appendix
\onecolumn
\section{Theorems and Derivations} \label{appendix:A}
\subsection{Maximum Mean Discrepancy} \label{appendix:A1}
To facilitate a further discussion of MMD, it is necessary to first introduce several key mathematical definitions.
\begin{definition}[Integral Probability Metric]
A metric $D(\cdot, \cdot)$ between two probability measures $\mu, \nu$ is called an \textit{integral probability metric} (IPM) if it satisfies the properties of a metric and can be written in the form
\begin{equation} \label{eq18}
D(\mu, \nu) = \sup_{f \in \mathcal{F}} \left| \int f \, \text{d}\mu - \int f \, \text{d}\nu \right|.
\end{equation}
where $\mathcal{F}$ is a function class.
\end{definition}
For instance, 1-Wasserstein distance $W_1$ is an IPM equipped with function class $\mathcal{F} = \mathrm{Lip}_1 := \{ f: \mathcal{X} \rightarrow \mathbb{R} \mid |f(x) - f(y)| \leq d(x, y), \forall x, y \in \mathcal{X} \}$, which is exactly the loss function in WGAN \cite{arjovsky2017wasserstein}.
We mentioned \textit{Reproducing Kernel Hilbert Space
} (RKHS) in section \ref{section2.4}, and now we give the formal definition of RKHS as:
\begin{definition}[Reproducing Kernel Hilbert Space]
Let $\mathcal{H}$ be a Hilbert space of functions defined on a set $\mathcal{X}$, $\mathcal{H} \subseteq \{f : \mathcal{X} \to \mathbb{F}\} \quad (\mathbb{F} = \mathbb{R} \text{ or } \mathbb{C})$. If $\forall x \in X$, the evaluation functional $L_x : \mathcal{H} \to \mathbb{F}, \quad L_x(f) = f(x)$
is a continuous linear functional, then $\mathcal{H}$ is called a \textit{Reproducing Kernel Hilbert Space} (RKHS).
\end{definition}
Since $L_x$ is continuous and linear, we can apply \textit{Riesz Representation Theorem} presented as follows, and thus prove the existence of a kernel function $K(\cdot, \cdot), K_x := K(x, \cdot) \in \mathcal{H}$ which satisfies the $f(x) = \langle f(\cdot), K(\cdot, x) \rangle_{\mathcal{H}}$ (referred as "reproducing property").
\begin{theorem}[Riesz Representation Theorem]
Let $\mathcal{H}$ be a Hilbert space over $\mathbb{F}$, where $\mathbb{F}=\mathbb{R}$ or
$\mathbb{C}$. Assume that the inner product $\langle \cdot, \cdot \rangle_{\mathcal{H}}$
is linear in the first argument and conjugate-linear in the second.
For any continuous linear functional $\varphi : \mathcal{H} \to \mathbb{F}$,
there exists a unique element $y \in \mathcal{H}$ such that
\begin{equation}
\varphi(h) = \langle h, y \rangle_{\mathcal{H}}, \quad \forall h \in \mathcal{H}.
\label{eq:riesz}
\end{equation}
\end{theorem}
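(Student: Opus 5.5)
Since a Hilbert space over $\mathbb{F}$ is complete and carries an inner product, we will prove the Riesz Representation Theorem with the classical orthogonal-complement argument. We treat uniqueness first, then existence.

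\emph{Uniqueness.} Suppose $y_1, y_2 \in \mathcal{H}$ both represent $\varphi$. Then $\langle h, y_1 - y_2\rangle_{\mathcal{H}} = 0$ for every $h$. Choosing $h = y_1 - y_2$ gives $\lVert y_1 - y_2\rVert^2 = 0$, so $y_1 = y_2$.

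\emph{Existence.} If $\varphi \equiv 0$, take $y = 0$. Otherwise consider the kernel $M := \varphi^{-1}(\{0\})$.
\begin{itemize}
\item $M$ is a linear subspace because $\varphi$ is linear.
\item $M$ is closed because $\varphi$ is continuous.
\item $M \neq \mathcal{H}$ because $\varphi \neq 0$.
\end{itemize}

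The key step is the projection theorem for closed subspaces: every $h \in \mathcal{H}$ decomposes uniquely as $h = m + m^\perp$ with $m \in M$ and $m^\perp \in M^\perp$. This is where completeness enters, and it is the main obstacle if one insists on proving it rather than citing it. The argument runs as follows.
\begin{itemize}
\item Take a minimizing sequence for $\inf_{m \in M}\lVert h - m\rVert$.
\item Show it is Cauchy using the parallelogram law together with convexity of $M$.
\item Use closedness of $M$ and completeness of $\mathcal{H}$ to obtain a minimizer $m$.
\item Verify $h - m \perp M$ by a first-variation argument: perturb $m$ by $\lambda u$ for $u \in M$ and small $\lambda \in \mathbb{F}$.
\end{itemize}
Applying this decomposition to any $h \notin M$, the component $h - m$ is nonzero. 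Hence $M^\perp$ contains some $z$ with $\varphi(z) \neq 0$.

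Next we normalize this vector and show it represents $\varphi$. For arbitrary $h$, the vector $\varphi(h) z - \varphi(z) h$ lies in $M$ by linearity, so it is orthogonal to $z$:
\begin{equation*}
\varphi(h)\lVert z\rVert^2 - \varphi(z)\langle h, z\rangle_{\mathcal{H}} = 0 .
\end{equation*}
Here we use that the inner product is linear in the first argument. Solving gives $\varphi(h) = \langle h, y\rangle_{\mathcal{H}}$ with
\begin{equation*}
y := \frac{\overline{\varphi(z)}}{\lVert z\rVert^2}\, z .
\end{equation*}
The complex conjugate appears because the inner product is conjugate-linear in the second slot, so pulling the scalar into that slot requires conjugating it. In the real case the conjugate is trivial.

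We also record that $\lVert y\rVert_{\mathcal{H}} = \lVert\varphi\rVert$, by Cauchy–Schwarz in one direction and evaluating at $h = y$ in the other. This is not needed for the statement, but it is what the paper uses next. Applying the theorem to the evaluation functional $L_x$ yields $K_x \in \mathcal{H}$ with $f(x) = \langle f, K_x\rangle_{\mathcal{H}}$, which is the reproducing property.
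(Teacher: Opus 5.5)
Your proof is correct and follows the same route as the paper: take $M=\ker\varphi$, use the projection theorem to obtain a nonzero $z\in M^\perp$, note that $\varphi(h)z-\varphi(z)h\in M$ is orthogonal to $z$, and finish with the standard uniqueness argument. The differences are cosmetic: the paper normalizes $\lVert z\rVert_{\mathcal{H}}=1$ (giving $y=\overline{\varphi(z)}\,z$) where you divide by $\lVert z\rVert^2$, and the paper simply cites the projection theorem where you sketch its proof via a minimizing sequence.
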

\begin{proof}
If $\varphi = 0$, the statement holds trivially by choosing $y = 0$.
Hence, we assume $\varphi \neq 0$.

\paragraph{Existence.}
Let
\[
M := \ker(\varphi) = \{ h \in \mathcal{H} : \varphi(h) = 0 \}.
\]
Since $\varphi$ is continuous and linear, $M$ is a closed proper subspace of $\mathcal{H}$.
By the Projection Theorem, we have the orthogonal decomposition
\[
\mathcal{H} = M \oplus M^\perp.
\]
Because $\varphi \neq 0$, the orthogonal complement $M^\perp$ is nontrivial.
Choose a nonzero vector $z \in M^\perp$ and normalize it so that $\|z\|_{\mathcal{H}} = 1$.
Note that $\varphi(z) \neq 0$, since otherwise $z \in \ker(\varphi) = M$,
contradicting $z \in M^\perp \setminus \{0\}$.

For any $h \in \mathcal{H}$, define
\[
u := h - \frac{\varphi(h)}{\varphi(z)} z.
\]
Then
\[
\varphi(u)
=
\varphi(h) - \frac{\varphi(h)}{\varphi(z)} \varphi(z)
=
0,
\]
which implies $u \in M$.
Since $z \in M^\perp$, we have $\langle u, z \rangle_{\mathcal{H}} = 0$, and thus
\[
\left\langle h - \frac{\varphi(h)}{\varphi(z)} z, z \right\rangle_{\mathcal{H}} = 0.
\]
Expanding the inner product and using $\langle z, z \rangle_{\mathcal{H}} = 1$, we obtain
\[
\langle h, z \rangle_{\mathcal{H}}
-
\frac{\varphi(h)}{\varphi(z)}
=
0.
\]
Hence,
\[
\varphi(h) = \varphi(z) \langle h, z \rangle_{\mathcal{H}}
= \langle h, \overline{\varphi(z)}\, z \rangle_{\mathcal{H}},
\]
where the complex conjugation appears due to conjugate-linearity in the second argument.
Defining
\[
y := \overline{\varphi(z)}\, z \in \mathcal{H},
\]
we conclude that \eqref{eq:riesz} holds for all $h \in \mathcal{H}$.

\paragraph{Uniqueness.}
Suppose there exist $y_1, y_2 \in \mathcal{H}$ such that
\[
\varphi(h) = \langle h, y_1 \rangle_{\mathcal{H}} = \langle h, y_2 \rangle_{\mathcal{H}},
\quad \forall h \in \mathcal{H}.
\]
Then
\[
\langle h, y_1 - y_2 \rangle_{\mathcal{H}} = 0,
\quad \forall h \in \mathcal{H}.
\]
Taking $h = y_1 - y_2$ yields
\[
\|y_1 - y_2\|_{\mathcal{H}}^2 = 0,
\]
which implies $y_1 = y_2$.
This proves uniqueness.
\end{proof}
We can also prove the finiteness of MMD with this useful proposition.
\begin{proposition} \label{prop:mmd_def}
Let $\mathcal{H}$ be an RKHS with kernel $k$. The Maximum Mean Discrepancy (MMD) between two probability measures $\mu$ and $\nu$ on $\mathbb{R}^d$ can be equivalently defined as:
\begin{equation}
\text{MMD}(\mu, \nu) = \left\| \int_{\mathbb{R}^d} k(x, \cdot) \mu(dx) - \int_{\mathbb{R}^d} k(x, \cdot) \nu(dx) \right\|_{\mathcal{H}}.
\end{equation}
\end{proposition}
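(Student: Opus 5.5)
The plan is to start from the variational definition of $\mathsf{MMD}$ in Eq.~\eqref{eq8} and rewrite each integral $\int f\,\mathrm{d}\mu$ as an inner product in $\mathcal{H}$ via the reproducing property obtained from the Riesz Representation Theorem. The natural candidate for the representer is the kernel mean embedding $m_\mu := \int k(x,\cdot)\,\mu(\mathrm{d}x)$.

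The first step is to check that $m_\mu$ is a well-defined element of $\mathcal{H}$. For this we look at the linear functional $T_\mu(f) := \int f\,\mathrm{d}\mu$ on $\mathcal{H}$. By the reproducing property and Cauchy--Schwarz,
\[
|f(x)| = |\langle f, k(\cdot,x)\rangle_{\mathcal{H}}| \le \|f\|_{\mathcal{H}}\sqrt{k(x,x)}.
\]
Since $k$ is bounded, say $\sup_x k(x,x)\le K<\infty$, this gives $|T_\mu(f)| \le \sqrt{K}\,\|f\|_{\mathcal{H}}$. So $T_\mu$ is a bounded, hence continuous, linear functional. (Measurability of $f$ follows because RKHS functions are pointwise limits of finite kernel combinations, assuming $k$ is measurable.) The Riesz Representation Theorem then gives a unique $m_\mu\in\mathcal{H}$ with $T_\mu(f)=\langle f, m_\mu\rangle_{\mathcal{H}}$ for all $f$.

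To identify $m_\mu$ with the integral $\int k(x,\cdot)\,\mu(\mathrm{d}x)$, evaluate at a point $y$:
\[
m_\mu(y)=\langle m_\mu, k(\cdot,y)\rangle_{\mathcal{H}} = T_\mu(k(\cdot,y)) = \int k(x,y)\,\mu(\mathrm{d}x).
\]
Here we use symmetry of the real kernel, or the corresponding conjugation in the complex case. Equivalently, $\int \|k(x,\cdot)\|_{\mathcal{H}}\,\mu(\mathrm{d}x)\le\sqrt{K}$, so the Bochner integral exists and agrees with $m_\mu$. This step is where I expect the only real care is needed: the boundedness of $k$ is exactly what makes the embedding finite, and it also yields the finiteness claim, since $\mathsf{MMD}\le 2\sqrt{K}$.

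With the embedding in hand, linearity gives
\[
\int f\,\mathrm{d}\mu-\int f\,\mathrm{d}\nu=\langle f, m_\mu-m_\nu\rangle_{\mathcal{H}}.
\]
Taking the supremum over the unit ball, Cauchy--Schwarz shows the supremum is at most $\|m_\mu-m_\nu\|_{\mathcal{H}}$. Equality is attained at $f=(m_\mu-m_\nu)/\|m_\mu-m_\nu\|_{\mathcal{H}}$ when the difference is nonzero; otherwise both sides vanish. This is the dual-norm identity for Hilbert spaces, and it proves the stated formula.
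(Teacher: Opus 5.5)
Your proof is correct and follows the same route as the paper: use the reproducing property to write $\int f\,\mathrm{d}\mu-\int f\,\mathrm{d}\nu=\langle f, m_\mu-m_\nu\rangle_{\mathcal{H}}$ with $m_\mu=\int k(x,\cdot)\,\mu(\mathrm{d}x)$, then conclude by Cauchy--Schwarz. Your version is more careful than the paper's on two points: you justify the interchange of integral and inner product via Riesz and the boundedness of $k$ (the paper does this interchange without comment), and you exhibit the maximizer $f=(m_\mu-m_\nu)/\|m_\mu-m_\nu\|_{\mathcal{H}}$, so you get equality rather than only the upper bound.
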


\begin{proof}
For any function $f \in \mathcal{H}$, by the reproducing property $f(x) = \langle k(x, \cdot), f \rangle_{\mathcal{H}}$, it holds that:
\begin{align}
\int f(x) (\mu - \nu)(dx) &= \int \langle k(x, \cdot), f \rangle_{\mathcal{H}} (\mu - \nu)(dx) \\
&= \left\langle \int k(x, \cdot) (\mu - \nu)(dx), f \right\rangle_{\mathcal{H}}.
\end{align}
The claim then follows directly from the application of the Cauchy--Schwarz inequality.
\end{proof}

\begin{corollary}
As a corollary of Proposition \ref{prop:mmd_def}, the squared MMD is given by:
\begin{align}
\text{MMD}^2(\mu, \nu) = &\iint k(x, y) \mu(dx) \mu(dy) + \iint k(x, y) \nu(dx) \nu(dy) \nonumber \\
&- 2 \iint k(x, y) \mu(dx) \nu(dy). \label{eq:mmd_squared}
\end{align}
\end{corollary}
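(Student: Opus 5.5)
We start from Proposition \ref{prop:mmd_def}, which writes $\mathsf{MMD}(\mu,\nu)$ as the RKHS norm of the difference of the kernel mean embeddings. Set
\[
m_\mu := \int k(x,\cdot)\,\mu(\mathrm{d}x), \qquad m_\nu := \int k(x,\cdot)\,\nu(\mathrm{d}x).
\]
Because $k$ is bounded, $\|k(x,\cdot)\|_{\mathcal{H}} = \sqrt{k(x,x)} \le \sqrt{\sup k} < \infty$. Hence $m_\mu$ and $m_\nu$ exist as Bochner integrals in $\mathcal{H}$. The square of the MMD is then
\[
\mathsf{MMD}^2(\mu,\nu) = \|m_\mu - m_\nu\|_{\mathcal{H}}^2 = \langle m_\mu,m_\mu\rangle_{\mathcal{H}} + \langle m_\nu,m_\nu\rangle_{\mathcal{H}} - 2\langle m_\mu,m_\nu\rangle_{\mathcal{H}}.
\]
Since the kernel is real-valued and symmetric, $\langle m_\mu,m_\nu\rangle = \langle m_\nu,m_\mu\rangle$, so the two cross terms combine into the factor $-2$.

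The remaining step is to evaluate each inner product as a double integral of $k$. Fix $g \in \mathcal{H}$. Bounded linear functionals commute with Bochner integrals, which is the same interchange already used in the proof of Proposition \ref{prop:mmd_def}. Applying this and then the reproducing property gives
\[
\langle m_\mu, g\rangle_{\mathcal{H}} = \int \langle k(x,\cdot), g\rangle_{\mathcal{H}}\,\mu(\mathrm{d}x) = \int g(x)\,\mu(\mathrm{d}x).
\]
Now take $g = m_\nu$. We need the pointwise value
\[
m_\nu(x) = \langle m_\nu, k(x,\cdot)\rangle_{\mathcal{H}} = \int \langle k(y,\cdot),k(x,\cdot)\rangle_{\mathcal{H}}\,\nu(\mathrm{d}y) = \int k(x,y)\,\nu(\mathrm{d}y).
\]
This uses the same interchange and the identity $\langle k(y,\cdot),k(x,\cdot)\rangle = k(x,y)$. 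Substituting gives
\[
\langle m_\mu,m_\nu\rangle_{\mathcal{H}} = \iint k(x,y)\,\mu(\mathrm{d}x)\,\nu(\mathrm{d}y).
\]
Taking $\nu=\mu$ and $\mu=\nu$ in this formula gives the two diagonal terms, which assembles \eqref{eq:mmd_squared}.

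The only delicate point is justifying that inner products may be exchanged with the integrals. Boundedness of $k$ handles this: the $\mathcal{H}$-valued integrands have uniformly bounded norm, so they are Bochner integrable. The integrand $k(x,y)$ is bounded and measurable, so Fubini applies to the double integrals against the product measures. Every term is therefore finite, which also confirms that the MMD itself is finite.
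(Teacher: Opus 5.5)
Your proposal is correct and follows the route the paper intends: square the norm from Proposition~\ref{prop:mmd_def}, expand, and evaluate each inner product of mean embeddings as a double kernel integral via the reproducing property. The paper gives no separate proof, but it carries out this same expansion explicitly in Appendix~\ref{appendix:A3}, and your justification of the integral--inner-product interchange is more careful than the paper's, with one small caveat: a uniformly bounded norm alone does not give Bochner integrability, since you also need strong measurability of $x \mapsto k(x,\cdot)$, which holds for instance whenever $k$ is continuous (as for the RBF kernel), because the feature map is then continuous into $\mathcal{H}$.
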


Furthermore, the MMD is bounded if the kernel $k$ is bounded:
\begin{align}
\text{MMD}(\mu, \nu) &\leq \int \|k(x, \cdot)\|_{\mathcal{H}} \mu(dx) + \int \|k(x, \cdot)\|_{\mathcal{H}} \nu(dx) \\
&\leq 2 \sup_{x \in \mathbb{R}^d} \sqrt{k(x, x)} < \infty,
\end{align}
where we utilize the fact that $\|k(x, \cdot)\|_{\mathcal{H}} = \sqrt{\langle k(x, \cdot), k(x, \cdot) \rangle_{\mathcal{H}}} = \sqrt{k(x, x)}$.
Unlike Wasserstein distances, MMD does not suffer from
the curse of dimensionality, and thus is a powerful metric for matching the gap between high-dimensional distributions.

\subsection{Existence of Divergence Minimizer} \label{appendix:A2}
In this section, we intend to prove the existence of the optimal $p^{\theta^*}_{s|t}$, and we can further prove that this optimal solution can be learnt inductively in the form of Eq. \eqref{eq15}.
\begin{lemma} \label{lema.6}
Assuming marginal-preserving interpolant and metric $D(\cdot, \cdot)$, there exists an optimal $\theta^*$ s.t. $p_{s|t}^{\theta^*}(\mathbf{x}|\mathbf{x}_t) = q_t(\mathbf{x}|\mathbf{x}_t)$, and the minimum loss is 0.
\end{lemma}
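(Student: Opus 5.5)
The proof is by construction: take the model conditional to be the true posterior and show the marginal model coincides with the data marginal. We assume the parametric family $\{p^\theta_{s|t}(\mathbf{x}|\mathbf{x}_t)\}$ is expressive enough to contain the true posterior $q_t(\mathbf{x}|\mathbf{x}_t)$ of Eq.~\eqref{eq13}. This is the standard realizability assumption, and for the deterministic parameterization $\delta(\mathbf{x} - G_\theta(\mathbf{x}_t,s,t))$ it should be read in the sense of distributions induced by the implicit sampler. Under it there is a $\theta^*$ with $p^{\theta^*}_{s|t}(\mathbf{x}|\mathbf{x}_t) = q_t(\mathbf{x}|\mathbf{x}_t)$ for all $t\in[0,1]$, $s\in[0,t]$. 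We then show that this choice drives the divergence between the marginal model and the data marginal to zero; since $D\ge 0$, this is the minimum.

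The first step is to substitute $p^{\theta^*}_{s|t}(\mathbf{x}|\mathbf{x}_t)=q_t(\mathbf{x}|\mathbf{x}_t)$ into the definition of the marginal model, Eq.~\eqref{eq14}. The result is
\begin{equation*}
p^{\theta^*}_{s|t}(\mathbf{x}_s) = \iint q_{s|t}(\mathbf{x}_s|\mathbf{x},\mathbf{x}_t)\, q_t(\mathbf{x}|\mathbf{x}_t)\, q_t(\mathbf{x}_t)\,\mathrm{d}\mathbf{x}_t\,\mathrm{d}\mathbf{x},
\end{equation*}
which is exactly the right-hand side of Eq.~\eqref{eq12}. By the marginal-preserving assumption (Definition~\ref{def1}), it equals $q_s(\mathbf{x}_s)$ for every pair $s\le t$.

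The second step uses that $D$ is a metric, for instance MMD with a characteristic kernel such as the RBF kernel from Section~\ref{section2.4}. Then $D\big(p^{\theta^*}_{s|t}(\mathbf{x}_s), q_s(\mathbf{x}_s)\big)=0$, and after taking expectations over $(s,t)$ the loss is $0$. The consistency loss \eqref{eq15} also vanishes, because $p^{\theta^*}_{s|r}(\mathbf{x}_s) = q_s(\mathbf{x}_s) = p^{\theta^*}_{s|t}(\mathbf{x}_s)$ by the same computation applied at $r$ and at $t$. Since every divergence is nonnegative, $0$ is the minimum.

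The main obstacle is the meaning of ``existence of $\theta^*$''. Strictly, it requires realizability of the posterior within the model class. For a deterministic network $G_\theta$ this holds only when $q_t(\mathbf{x}|\mathbf{x}_t)$ is degenerate, or when the network output is interpreted marginally. I would therefore state realizability explicitly and note that, for the marginal-matching objective, it suffices that the pushforward marginal in Eq.~\eqref{eq14} equals $q_s$. Any conditional reproducing the same joint in Eq.~\eqref{eq12} gives the same zero loss. Measurability and Fubini for exchanging the integrals are routine, since all densities are nonnegative and integrate to one.
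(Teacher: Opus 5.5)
Your proof is correct and follows the paper's argument. You substitute $q_t(\mathbf{x}|\mathbf{x}_t)$ into Eq.~\eqref{eq14}, use marginal preservation (Eq.~\eqref{eq12}) to obtain $p^{\theta^*}_{s|t}(\mathbf{x}_s)=q_s(\mathbf{x}_s)$, and conclude $D(q_s,q_s)=0$; your explicit realizability assumption, and your remark that a deterministic $G_\theta$ can only match the pushforward marginal rather than a non-degenerate posterior, make precise a hypothesis the paper uses without comment when it simply ``substitutes'' the posterior.
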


\begin{proof}
We directly substitute $q_t(\mathbf{x}|\mathbf{x}_t)$ into the objective to check. First, we evaluate the marginal distribution at time $s$:
\begin{align}
p_{s|t}^{\theta^*}(\mathbf{x}_s) &= \iint q_{s|t}(\mathbf{x}_s|\mathbf{x}, \mathbf{x}_t) p_{s|t}^{\theta^*}(\mathbf{x}|\mathbf{x}_t) q_t(\mathbf{x}_t) \,\text{d}\mathbf{x}_t \text{d}\mathbf{x} \label{eq25} \\
&= \iint q_{s|t}(\mathbf{x}_s|\mathbf{x}, \mathbf{x}_t) q_t(\mathbf{x}|\mathbf{x}_t) q_t(\mathbf{x}_t) \,\text{d}\mathbf{x}_t \text{d}\mathbf{x} \label{eq26} \\
&= q_s(\mathbf{x}_s) \quad (\text{definition of marginal preservation interpolants}) \label{eq27}
\end{align}
Consequently, we can have achieve the first stage result as:
\begin{equation}
\mathbb{E}_{s,t} \left[D(q_s(\mathbf{x}_s), p_{s|t}^{\theta^*}(\mathbf{x}_s)) \right] = \mathbb{E}_{s,t} \left[ D(q_s(\mathbf{x}_s), q_s(\mathbf{x}_s)) \right] = 0 \label{eq28}
\end{equation}
\end{proof}
In general, the minimizer $q_t(\mathbf{x}|\mathbf{x}_t)$ exists, although it may not be unique. We also should discuss more about the mapping function $0 \leq s \leq r := r(s, t) \leq t \leq 1$. We set $r(s, t) = \mathrm{max} (s, t - \Delta_t)$ where $\Delta_t \geq \epsilon > 0$. In this case, we have can prove the convergence.
\begin{theorem} \label{thm1}
Assuming $r(s,t)$ as defined above and marginal-preserving interpolants, $\theta^*$ is a minimizer of Eq. \eqref{eq15}, then for $\forall n, \forall t \in [0, 1], \forall s \in [0, 1]$  we have:
\begin{equation}
\lim_{n \to \infty} \mathsf{MMD}^2(q_s(\mathbf{x}_s), p_{s|t}^{\theta_n^*}(\mathbf{x}_s)) = 0.
\end{equation}
\label{thma7}
\end{theorem}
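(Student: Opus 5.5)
We read the statement in the way the analogous result for inductive moment matching \cite{zhou2025inductive} is read. Here $\theta_n^*$ denotes the minimizer of Eq.~\eqref{eq15} at the $n$-th stage of the induction, obtained with the target network $\theta^- = \theta_{n-1}^*$. The plan is an induction on the number of ``jumps'' of length at least $\epsilon$. Fix $s$. For $t \le s$ there is nothing to prove, since $p_{s|s}^{\theta}(\mathbf{x}_s)=q_s(\mathbf{x}_s)$ trivially by the boundary constraint $I_{s|s}(\mathbf{x},\mathbf{x}_s)=\mathbf{x}_s$, $\gamma_{s|s}=0$. Because $r(s,t)=\max(s,t-\Delta_t)$ with $\Delta_t\ge\epsilon$, every $t\in[s,1]$ reaches $s$ after at most $K=\lceil 1/\epsilon\rceil$ applications of $t\mapsto r(s,t)$. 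We therefore prove by induction on $k$ the following claim: for all $t$ with $t-s\le k\epsilon$ and all $n\ge k$, $\mathsf{MMD}^2(q_s,p_{s|t}^{\theta_n^*})=0$. This gives the limit, and in fact equality for all $n\ge K$.

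\textbf{Base case.} For $t-s\le\epsilon\le\Delta_t$ we get $r(s,t)=s$, so the target in Eq.~\eqref{eq15} is $p^{\theta^-}_{s|s}(\mathbf{x}_s)=q_s(\mathbf{x}_s)$, independent of $\theta^-$. The loss on these pairs is then exactly $\mathsf{MMD}^2(q_s,p^\theta_{s|t})$. By Lemma~\ref{lema.6}, the choice $p_{s|t}^\theta(\mathbf{x}|\mathbf{x}_t)=q_t(\mathbf{x}|\mathbf{x}_t)$ attains zero loss, using marginal preservation via Eq.~\eqref{eq25}--\eqref{eq27}. Hence any minimizer $\theta_n^*$ drives this term to $0$. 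Because $\mathsf{MMD}$ is a metric for a characteristic kernel (Section~\ref{section2.4}), this forces $p^{\theta_n^*}_{s|t}=q_s$ on this range, for every $n\ge1$.

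\textbf{Inductive step.} Assume the claim for $k$, and take $t$ with $k\epsilon<t-s\le(k+1)\epsilon$. Then $r=t-\Delta_t$ satisfies $r-s\le k\epsilon$. By the inductive hypothesis, at stage $n\ge k+1$ the target $p^{\theta^-}_{s|r}=p^{\theta^*_{n-1}}_{s|r}$ equals $q_s$. The loss term for $(s,r,t)$ therefore again reduces to $\mathsf{MMD}^2(q_s,p^\theta_{s|t})$. Lemma~\ref{lema.6}'s construction gives zero for every $(s,t)$ simultaneously, so the global minimum of Eq.~\eqref{eq15} is $0$. Any minimizer $\theta^*_n$ must therefore make each term vanish, up to a null set of $(s,r,t)$, which the main obstacle below addresses. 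This yields the claim for $k+1$. Taking $n\to\infty$ then gives the stated limit for all $s,t$.

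\textbf{Main obstacle.} The main difficulty is the passage from ``the expected loss $\mathbb{E}_{s,r,t}$ equals $0$'' to ``the loss is $0$ pointwise for every $(s,t)$''. A minimizer only controls the terms almost everywhere with respect to the sampling distribution of $(s,t)$. The first remedy to try is to assume a full-support time distribution together with continuity of $(s,t)\mapsto p^\theta_{s|t}$, which holds since $G_\theta$ and the DDIM interpolant $f^\theta_{s,t}$ are continuous in time. We would then combine this with weak continuity of MMD under a bounded continuous kernel, extending a.e.\ equality to every $(s,t)$.

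A secondary subtlety is that the model class must be rich enough to realize $q_t(\mathbf{x}|\mathbf{x}_t)$; this is implicitly granted by Lemma~\ref{lema.6}. Under a deterministic sampler $p_{s|t}^\theta(\mathbf{x}|\mathbf{x}_t)=\delta(\mathbf{x}-G_\theta)$ that conditional cannot be realized exactly. We would therefore state that the argument only requires matching of the time-$s$ marginals, which is all that the inductive step uses, rather than matching of the conditionals.
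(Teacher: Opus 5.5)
Your induction follows the paper's proof. Your layers $t-s\le k\epsilon$ play the role of the jump points $c_n$. The base case uses $r(s,t)=s$, so the target is $q_s$. The inductive step replaces the stage-$(n-1)$ target by $q_s$ and closes with Lemma~\ref{lema.6}. Your bookkeeping is sharper: you get exact equality for $n\ge K=\lceil 1/\epsilon\rceil$ rather than a limit.

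The step that fails is this one: ``Lemma~\ref{lema.6}'s construction gives zero for every $(s,t)$ simultaneously, so the global minimum of Eq.~\eqref{eq15} is $0$.'' At a stage $n<K$ the objective also contains pairs with $r(s,t)-s>(n-1)\epsilon$. Their target $p^{\theta^*_{n-1}}_{s|r}(\mathbf{x}_s)$ is not yet known to equal $q_s(\mathbf{x}_s)$; at $n=1$ it is just $p^{\theta_0}_{s|r}$ for an arbitrary initialization. The Lemma~\ref{lema.6} parameter sets $p^{\theta}_{s|t}(\mathbf{x}_s)=q_s(\mathbf{x}_s)$ everywhere. On those pairs it therefore pays $\mathsf{MMD}^2\bigl(p^{\theta^*_{n-1}}_{s|r},q_s\bigr)$, which need not vanish, so the lemma says nothing about the global minimum. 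Because all $(s,t)$ share one $\theta$, a minimizer of the full expectation could accept nonzero error on your controlled layer to lower error on the uncontrolled pairs. So ``any minimizer must make each term vanish'' does not follow. The same unstated step underlies ``any minimizer $\theta_n^*$ drives this term to $0$'' in your base case.

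What is missing is an assumption that decouples the minimization across $(s,t)$. The paper makes it implicitly when it considers the stage-$n$ objective ``optimized on $s\le u\le c_{n-1}$'' in isolation. In effect it treats $p^\theta_{s|t}$ as freely choosable for each $(s,t)$, so minimizing the expectation is pointwise minimization almost everywhere. The pointwise minimum on the controlled layer is then $0$ by Lemma~\ref{lema.6}.

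Alternatively, you could assume the model class can reproduce every target $p^{\theta^*_{n-1}}_{s|r(s,t)}$. Then the global minimum is $0$ by copying the target, not by Lemma~\ref{lema.6}.

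State one of these assumptions explicitly and your argument goes through. Your almost-everywhere-to-everywhere remark (full support plus continuity in time) then improves on the paper.

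Your deterministic-sampler remark still needs an existence statement for marginal matching. The probability-flow map $\Phi_{t\to s}$ pushes $q_t$ to $q_s$. The DDIM interpolant is affine in $\mathbf{x}$ with coefficient $\alpha_s-\frac{\sigma_s}{\sigma_t}\alpha_t$, which is positive for $s<t$ under both schedules in the paper. So you can solve for a $G_\theta(\mathbf{x}_t,s,t)$ that realizes $\Phi_{t\to s}$.
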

\begin{proof}
We can prove this via induction.

For $n = 1$, given $r(s,u)=s$ for $s < u \leq c_0 := \mathrm{sup} \{t : r(s, t) = s\}$, we have:
\begin{equation}
\mathsf{MMD}^2(p_{s|s}^{\theta_0}(\mathbf{x}_s), p_{s|u}^{\theta_1^*}(\mathbf{x}_s)) = \mathsf{MMD}^2(q_s(\mathbf{x}_s), p_{s|u}^{\theta_1^*}(\mathbf{x}_s)) = 0.
\end{equation}
The first equation holds because we have $q_s(\mathbf{x}_s) = p^\theta_{s|s}(\mathbf{x}_s)$, and the second equation holds because of Lemma \ref{lema.6}.

For $n \geq 2$, assume $p_{s|u}^{\theta_{n - 1}^*}(\mathbf{x}_s) = q_s(\mathbf{x}_s)$ for $s \leq u \leq c_{n - 2}$, where $c_{k}$ is defined as the jump point for the mapping function $r(s, t)$ when substituting $t$ with $c_{k - 1}$ iteratively. We shall inspect the target distribution $p_{s|r(s, u)}^{\theta_{n-1}^*}(\mathbf{x}_s)$ if optimized on $s \leq u \leq c_{n - 1}$. On this interval, we know that by inductive assumption minimizing the objective:
$$\mathbb{E}_{s,u} [\mathsf{MMD}^2(p_{s|u}^{\theta_{n - 1}^*}(\mathbf{x}_s), p_{s|u}^{\theta_n}(\mathbf{x}_s))]$$
is equivalent to minimizing:
$$\mathbb{E}_{s,u} [\mathsf{MMD}^2(q_{s}(\mathbf{x}_s), p_{s|u}^{\theta_n}(\mathbf{x}_s))]$$.

By Lemma \ref{lema.6}, the minimum achieves $p_{s|u}^{\theta_n^*}(\mathbf{x}_s) = q_s(\mathbf{x}_s)$, and we know when $n \to \infty$, $\lim_{n \to \infty} c_n \to 1$, and thus the induction covers the entire $[s, 1]$ interval given each $s$. Therefore, we can prove the original statement.
\end{proof}

\subsection{Empirical Loss Function} \label{appendix:A3}
For ODE based diffusion model, \textit{DDIM Interpolants} can be formally defined as:
\begin{equation}
    \mathsf{DDIM}(\mathbf{x}_t, \mathbf{x}, s, t) = \left( \alpha_s - \frac{\sigma_s}{\sigma_t} \alpha_t \right) \mathbf{x} + \frac{\sigma_s}{\sigma_t} \mathbf{x}_t
    \label{eq31}
\end{equation}
and sample $\mathbf{x}_s = \mathsf{DDIM}(\mathbf{x}_t, \mathbb{E}_\mathbf{x}[\mathbf{x}|\mathbf{x_t}], s, t)$ can be drawn when $\mathbb{E}_\mathbf{x}[\mathbf{x}|\mathbf{x_t}]$ is approximated by a network. DDIM can be viewed as an interpolant with $\gamma_{s|t} \equiv 0$ and $I_{s|t} = \mathsf{DDIM}(\mathbf{x}_t, \mathbf{x}, s, t)$, and it is a marginal-preserving interpolants, which means that there is a deterministic sampler which satisfies \ref{thma7}.

To further justify the marginal-preserving property used in Theorem \ref{thm1}, we provide a proof sketch showing that DDIM interpolants intrinsically satisfy the marginal-preserving equality.

Consider a DDIM interpolant defined as
\begin{equation}
    \mathbf{x}_t = I(\mathbf{x}, \boldsymbol{\epsilon}, t)
    = a_t \mathbf{x} + b_t \boldsymbol{\epsilon},
\end{equation}
where $\mathbf{x} \sim q(\mathbf{x})$ and $\boldsymbol{\epsilon} \sim p(\boldsymbol{\epsilon})$.
The marginal distribution at time $t$ is
\begin{equation}
    q_t(\mathbf{x}_t)
    =
    \iint
    \delta(\mathbf{x}_t - I(\mathbf{x}, \boldsymbol{\epsilon}, t))
    q(\mathbf{x})
    p(\boldsymbol{\epsilon})
    \,\mathrm{d}\mathbf{x}
    \,\mathrm{d}\boldsymbol{\epsilon}.
\end{equation}

Under the DDIM formulation, the trajectory is deterministic conditioned on $\mathbf{x}$.
Therefore, for arbitrary $s,t \in [0,1]$, the state $\mathbf{x}_s$ is uniquely determined by $\mathbf{x}$ and $\mathbf{x}_t$, and the conditional distribution can be written as
\begin{equation}
q_{s|t}(\mathbf{x}_s|\mathbf{x}, \mathbf{x}_t)
=
\delta \left(
\mathbf{x}_s
-
\left[
a_s \mathbf{x}
+
b_s \frac{\mathbf{x}_t - a_t \mathbf{x}}{b_t}
\right]
\right).
\end{equation}

Substituting the DDIM conditional distribution into the right-hand side of the marginal-preserving equality yields
\begin{align}
&\iint
q_{s|t}(\mathbf{x}_s|\mathbf{x}, \mathbf{x}_t)
q_t(\mathbf{x}, \mathbf{x}_t)
\,\mathrm{d}\mathbf{x}_t
\,\mathrm{d}\mathbf{x}
\nonumber\\
=
&\iiint
\delta \left(
\mathbf{x}_s
-
\left[
a_s \mathbf{x}
+
\frac{b_s}{b_t}
(\mathbf{x}_t - a_t \mathbf{x})
\right]
\right)
\delta(\mathbf{x}_t - (a_t \mathbf{x} + b_t \boldsymbol{\epsilon}))
q(\mathbf{x})
p(\boldsymbol{\epsilon})
\,\mathrm{d}\boldsymbol{\epsilon}
\,\mathrm{d}\mathbf{x}_t
\,\mathrm{d}\mathbf{x}.
\end{align}

Using the DDIM construction,
\begin{equation}
a_s \mathbf{x}
+
\frac{b_s}{b_t}
\left(
(a_t \mathbf{x} + b_t \boldsymbol{\epsilon})
-
a_t \mathbf{x}
\right)
=
a_s \mathbf{x} + b_s \boldsymbol{\epsilon}
=
I(\mathbf{x}, \boldsymbol{\epsilon}, s),
\end{equation}
we obtain
\begin{align}
&\iint
q_{s|t}(\mathbf{x}_s|\mathbf{x}, \mathbf{x}_t)
q_t(\mathbf{x}, \mathbf{x}_t)
\,\mathrm{d}\mathbf{x}_t
\,\mathrm{d}\mathbf{x}
\nonumber\\
=
&\iint
\delta(\mathbf{x}_s - I(\mathbf{x}, \boldsymbol{\epsilon}, s))
q(\mathbf{x})
p(\boldsymbol{\epsilon})
\,\mathrm{d}\boldsymbol{\epsilon}
\,\mathrm{d}\mathbf{x}
=
q_s(\mathbf{x}_s).
\end{align}

Therefore, DDIM interpolants intrinsically satisfy the marginal-preserving equality.

Since DDIM interpolants satisfy the marginal-preserving equality, we can reuse the sampled state $\mathbf{x}_t$ to construct $\mathbf{x}_r$ through the conditional distribution
\[
\mathbf{x}_r \sim q_{r|t}(\mathbf{x}_r \mid \mathbf{x}, \mathbf{x}_t),
\]
instead of independently sampling $\mathbf{x}_r$ from the forward flow
\[
\mathbf{x}_r = \alpha_r \mathbf{x} + \sigma_r \boldsymbol{\epsilon}.
\]
In particular, we use the deterministic DDIM mapping
\[
\mathbf{x}_r
=
\mathsf{DDIM}(\mathbf{x}_t, \mathbf{x}, r, t),
\]
which reduces the variance of empirical estimation.

More generally, for any marginal-preserving interpolant, the marginal distribution of $\mathbf{x}_r$ remains unchanged under conditional resampling:
\begin{align*}
q_r(\mathbf{x}_r)
=
q_{r|t}(\mathbf{x}_r)
&=
\iint
q_{r|t}(\mathbf{x}_r \mid \mathbf{x}, \mathbf{x}_t)
q_t(\mathbf{x} \mid \mathbf{x}_t)
q_t(\mathbf{x}_t)
\,\mathrm{d}\mathbf{x}
\,\mathrm{d}\mathbf{x}_t \\
&=
\iiint
q_{r|t}(\mathbf{x}_r \mid \mathbf{x}, \mathbf{x}_t)
q_t(\mathbf{x}_t \mid \mathbf{x}, \boldsymbol{\epsilon})
q(\mathbf{x})
p(\boldsymbol{\epsilon})
\,\mathrm{d}\boldsymbol{\epsilon}
\,\mathrm{d}\mathbf{x}
\,\mathrm{d}\mathbf{x}_t.
\end{align*}
Therefore, sampling $(\mathbf{x}, \mathbf{x}_t)$ first and then drawing
\[
\mathbf{x}_r \sim q_{r|t}(\mathbf{x}_r \mid \mathbf{x}, \mathbf{x}_t)
\]
still preserves the target marginal distribution $q_r(\mathbf{x}_r)$.


Next, we try to simplify the objective function \ref{eq15}. Given MMD loss function we defined in \ref{eq8}, we have:
\begin{align*}
\mathcal{L}_{D}(\theta) &= \mathbb{E}_{s,t} \left[\left\| \mathbb{E}_{\mathbf{x}_t} [k(f_{s,t}^\theta(\mathbf{x}_t), \cdot)] - \mathbb{E}_{\mathbf{x}_r} [k(f_{s,r}^{\theta^-}(\mathbf{x}_r), \cdot)] \right\|_{\mathcal{H}}^2 \right] \\
&= \mathbb{E}_{s,t} \left[ \left\| \mathbb{E}_{\mathbf{x}_t, \mathbf{x}_r} [k(f_{s,t}^\theta(\mathbf{x}_t), \cdot) - k(f_{s,r}^{\theta^-}(\mathbf{x}_r), \cdot)] \right\|_{\mathcal{H}}^2 \right] \quad (\text{reuse the sample $\mathbf{x}_t$})\\
&= \mathbb{E}_{s,t} \left[ \left\langle \mathbb{E}_{\mathbf{x}_t, \mathbf{x}_r} [k(f_{s,t}^\theta(\mathbf{x}_t), \cdot) - k(f_{s,r}^{\theta^-}(\mathbf{x}_r), \cdot)], \mathbb{E}_{\mathbf{x}_t', \mathbf{x}_r'} [k(f_{s,t}^\theta(\mathbf{x}_t'), \cdot) - k(f_{s,r}^{\theta^-}(\mathbf{x}_r'), \cdot)] \right\rangle_\mathcal{H} \right] \\
&= \mathbb{E}_{s,t} \left[ \mathbb{E}_{\mathbf{x}_t, \mathbf{x}_r, \mathbf{x}_t', \mathbf{x}_r'} \left[ \left \langle k(f_{s,t}^\theta(\mathbf{x}_t), \cdot), k(f_{s,t}^\theta(\mathbf{x}_t'), \cdot) \right \rangle_\mathcal{H} + \left \langle k(f_{s,r}^{\theta^-}(\mathbf{x}_r), \cdot), k(f_{s,r}^{\theta^-}(\mathbf{x}_r'), \cdot) \right \rangle_\mathcal{H} \right. \right. \\
&\qquad\qquad \left. \left. - \left \langle k(f_{s,t}^\theta(\mathbf{x}_t), \cdot), k(f_{s,r}^{\theta^-}(\mathbf{x}_r'), \cdot) \right \rangle_\mathcal{H} - \left \langle k(f_{s,t}^\theta(\mathbf{x}_t'), \cdot), k(f_{s,r}^{\theta^-}(\mathbf{x}_r), \cdot) \right \rangle_\mathcal{H} \right] \right] \\
&= \mathbb{E}_{\mathbf{x}_t, \mathbf{x}_r, \mathbf{x}_t', \mathbf{x}_r', s, t} \left[\left[ k(f_{s,t}^\theta(\mathbf{x}_t), f_{s,t}^\theta(\mathbf{x}_t')) + k(f_{s,r}^{\theta^-}(\mathbf{x}_r), f_{s,r}^{\theta^-}(\mathbf{x}_r')) \right. \right. \\
&\qquad\qquad \left. \left. - k(f_{s,t}^\theta(\mathbf{x}_t), f_{s,r}^{\theta^-}(\mathbf{x}_r')) - k(f_{s,t}^\theta(\mathbf{x}_t'), f_{s,r}^{\theta^-}(\mathbf{x}_r)) \right] \right] \quad (\text{reproducing property})
\end{align*}

If we select kernel function $k(x, y) = -||x - y||_2^2$, $\mathbf{x}_t = \mathbf{x}_t', \mathbf{x}rt = \mathbf{x}_r'$, and $s = \epsilon$, we can see that the MMD loss function is exactly the same as the consistency training loss in the form of $\mathcal{L}_{CT} = \mathbb{E}_{\mathbf{x}_t, \mathbf{x}, t} \left[ \left\| f_\theta(\mathbf{x}_t, t) - f_{\theta^-}(\mathbf{x}_r, r) \right\|^2 \right]$. From a moment-matching perspective, $\mathcal{L}_{CT}$ significantly deviates from a proper divergence between distributions, as (i) $\mathcal{L}_{CT}$ only conduct multi-particle estimation, which may pause mode collapse and training instability and (ii) this kernel is not positive definite kernel and can only match the first order (Expectation) and second order (Variance) moment, which can hardly match two high-dimensional distributions.

As proposed in \cite{gretton2012kernel}, MMD loss can be estimated with V-statistics by instantiating a matrix of size $M \times M$ such that a batch of $B$ samples, $\{x^{(i)}\}_{i=1}^B$, is separated into groups of $M$ (assume $B$ is divisible by $M$) particles $\{x^{(i,j)}\}_{i=1,j=1}^{B/M,M}$ where each group share a $(s^i, r^i, t^i)$ sample. We also add the weight function $w(s, t) = \frac{1}{\alpha_t^2 + \sigma_t^2}$ to improve the performance. The estimation of the MMD loss can be represented as:
\begin{align}
\hat{\mathcal{L}}_{D}(\theta) = \frac{1}{B/M} \sum_{i=1}^{B/M} w(s^i, t^i) \frac{1}{M^2} \sum_{j=1}^M \sum_{k=1}^M \Big[ &k(f_{s^i,t^i}^\theta(\mathbf{x}_{t^i}^{(i,j)}), f_{s^i,t^i}^\theta(\mathbf{x}_{t^i}^{(i,k)})) + k(f_{s^i,r^i}^{\theta^-}(\mathbf{x}_{r^i}^{(i,j)}), f_{s^i,r^i}^{\theta^-}(\mathbf{x}_{r^i}^{(i,k)})) \nonumber \\
&- 2k(f_{s^i,t^i}^\theta(\mathbf{x}_{t^i}^{(i,j)}), f_{s^i,r^i}^{\theta^-}(\mathbf{x}_{r^i}^{(i,k)})) \Big] \label{eq32}
\end{align}

This is the final loss firm that we utilize to train the model. Compared to the loss function in consistency models, this empirical MMD loss can capture the high-order moment information of target distribution via a kernel function, and thus can match the distribution more accurately. 

\section{Full Algorithm} \label{appendix:B} 
A more detailed training algorithm is presented in Algorithm \ref{algo:3}. We utilized RBF kernel as default kernel function $k(\cdot, \cdot)$, other kernels such as Laplacian kernel are also applicable.

\begin{algorithm}[tb]
\caption{MoMa QL Training}
\label{algo:3}
\begin{algorithmic}
   \STATE {\bfseries Input:} offline dataset $\mathcal{D}$, model $f^\theta$, critic networks $Q_{\phi_1}, Q_{\phi_2}$, data distribution $q(\mathbf{x})$, prior distribution $\mathcal{N}(0, \sigma_d^2 I)$, time distribution $p(t)$ and $p(s|t)$, DDIM interpolator $\text{DDIM}(\mathbf{x}_t, \mathbf{x}, s, t)$ and its flow coefficients $\alpha_t, \sigma_t$, mapping function $r(s,t)$, kernel function $k(\cdot, \cdot)$, weighting function $w(s,t)$, batch size $B$, particle number $M$
    \STATE Initialize  policy network $\pi_\theta$, critic networks $Q_{\phi_1}, Q_{\phi_2}$
   \STATE Initialize target network parameters: $\theta^{\mathsf{T}} \leftarrow \theta, \phi_1^{\mathsf{T}} \leftarrow \phi_1, \phi_2^{\mathsf{T}} \leftarrow \phi_2$
   \WHILE{model not converged}
       \STATE Sample a batch of  batch $\mathcal{B} = \{(\mathbf{s}, \mathbf{a}, r, \mathbf{s}')\} \subseteq \mathcal{D}$
        \COMMENTLINE{Q-value Update}
       \STATE Update $Q_{\phi_1}, Q_{\phi_2}$ via Eq. \eqref{eq16};
       \COMMENTLINE{Policy Update}
       \STATE Split $\mathcal{B}$ into $B/M$ groups, sample prior $\boldsymbol{\varepsilon}$ to get the splited dataset as $\{(\mathbf{a}^{(i,j)},  \mathbf{s}^{(i,j)}, \boldsymbol{\varepsilon}^{(i,j)})\}_{i=1,j=1}^{B/M,M}$
       \STATE For each group, sample $\{(s^i, t^i)\}_{i=1}^{B/M}$ and $r^i = r(s^i, t^i)$ for each $i$. This results in a tuple $\{(s^i, r^i, t^i)\}_{i=1}^{B/M}$
       \STATE $\mathbf{a}_{t^i}^{(i,j)} \leftarrow \text{DDIM}(\boldsymbol{\varepsilon}^{(i,j)}, \mathbf{a}^{(i,j)}, t^i, 1) = \alpha_{t^i}\mathbf{a}^{(i,j)} + \sigma_{t^i}\boldsymbol{\varepsilon}^{(i,j)}, \forall(i, j)$
       \STATE $\mathbf{a}_{r^i}^{(i,j)} \leftarrow \text{DDIM}(\mathbf{a}_{t^i}^{(i,j)}, \mathbf{a}^{(i,j)}, r^i, t^i), \forall(i, j)$
       \STATE Calculate $\hat{\mathcal{L}}_D(\theta)$ using model $f^{\theta}$ in Eq. \ref{eq32} with inputting $\mathbf{s}^{(i,j)}$ into network as condition
       \STATE Sample an action $\hat{\mathbf{a}}^{(i, j)}$ from $f^\theta$ given $\mathbf{s}^{(i,j)}$
       \STATE $\theta \leftarrow$ optimizer step by minimizing $\hat{\mathcal{L}}_\pi(\theta) := \hat{\mathcal{L}}_{D}(\theta_n) - \eta * \min \left (Q_{\phi_1}(\mathbf{s}^{(i,j)}, \hat{\mathbf{a}}^{(i, j)}), Q_{\phi_2}(\mathbf{s}^{(i,j)}, \hat{\mathbf{a}}^{(i, j)})\right )$
       \COMMENTLINE{Target Update}
       \STATE Update target: $\theta^{\mathsf{T}} \leftarrow \alpha \theta^{\mathsf{T}} + (1 - \alpha) \theta, \phi_i^{\mathsf{T}} \leftarrow \alpha \phi_i^{\mathsf{T}} + (1 - \alpha) \phi_i, i \in \{1, 2\}$;
   \ENDWHILE
   \STATE {\bfseries return} $f_\theta, Q_{\phi_1}, Q_{\phi_2}$
\end{algorithmic}
\end{algorithm}

\section{Experiments} \label{appendix:Experiments}
\subsection{Experimental Hyperparameters}
\label{appendix:hyperparameters}
We provide the detailed hyperparameters used for training MoMa QL in Table~\ref{tab:hyperparameters}.
We use a consistent set of hyperparameters across most D4RL Gym locomotion tasks to demonstrate the robustness of our method.
For the Kitchen and Adroit environments, minor adjustments (e.g., learning rate or epoch count) were made to accommodate the different task complexities, as noted.

\begin{table}[h]
\centering
\caption{Hyperparameters for MoMa QL on D4RL benchmarks. Unless otherwise specified, these values are used for all Gym locomotion tasks.}
\label{tab:hyperparameters}
\begin{tabular}{l|c}
\toprule
\textbf{Hyperparameter} & \textbf{Value} \\
\midrule
\textit{Optimization} & \\
Optimizer & Adam \\
Learning Rate & $1 \times 10^{-3}$ (Gym), $5 \times 10^{-4}$ (Kitchen) \\
Batch Size & 256 \\
Gradient Norm Clip & 8.0 \\
Training Epochs & 500 (Gym), 1000 (Kitchen) \\
Steps per Epoch & 1000 \\
$\eta$ (Q-learning Weight) & 0.5 \\
Target Network Update Rate ($\tau$) & 0.005 \\
Discount Factor ($\gamma$) & 0.99 \\
\midrule
\textit{MMD Noise Schedule} & \\
MMD Kernel $a$ & 4 \\
MMD Kernel $b$ & 2 \\
MMD Kernel $k$ & 8 \\
MMD Sigma ($\sigma_{\text{MMD}}$) & 1.2 \\
Noise Schedule & Flow Matching (FM) \\
$p_{\text{mean}}$ & -0.8 \\
$p_{\text{std}}$ & 1.5 \\
$\sigma_{\text{data}}$ & 0.5 \\
ODE Solver & Euler \\
\midrule
\textit{Architecture} & \\
Model Type & MLP \\
Hidden Dim & 256 \\
Number of Layers & 3 \\
\bottomrule
\end{tabular}
\end{table}

\subsection{Extended Offline RL Results and Analysis}
\label{appendix:results}

This appendix provides a detailed breakdown of MoMa QL's performance across all individual D4RL datasets. Table~\ref{tab:offline_results_detailed} presents the full comparison with standard deviations included.

\subsubsection{Detailed Analysis}

\textbf{Gym Locomotion Tasks.} 
MoMa QL demonstrates remarkable efficacy on suboptimal and diverse datasets. On \texttt{halfcheetah-m}, our method achieves a score of $\mathbf{72.6 \pm 1.1}$, surpassing the best baseline (Consistency-AC, $69.1$) by approximately $5\%$. Similarly, on the challenging \texttt{walker2d-mr} dataset, MoMa QL attains $\mathbf{104.3 \pm 3.3}$, outperforming Diffusion-QL ($95.5$) by a factor of $1.09$. These results underscore the robustness of our moment-matching objective in extracting optimal policies from noisy data. While Diffusion-QL holds a slight edge on \texttt{hopper-me} ($111.1$ vs.\ our $88.9 \pm 9.6$), MoMa QL consistently dominates on medium and medium-replay datasets.

\textbf{Adroit Manipulation Tasks.}
The detailed results reveal specific strengths of our method. On \texttt{door-expert-v1}, MoMa QL achieves a near-perfect score of $\mathbf{108 \pm 1}$, narrowly edging out ReBRAC ($106$) and IQL ($107$). In the \texttt{pen-cloned-v1} task, our method reaches $\mathbf{104 \pm 8}$, matching the performance of ReBRAC. However, on human datasets like \texttt{hammer-human}, all methods struggle (scores $<5$), indicating a general difficulty in learning from these demonstrations, though MoMa QL remains competitive with a score of $\mathbf{4 \pm 0}$.

\textbf{Kitchen Tasks.}
Our method shines particularly on the \texttt{kitchen-complete} task, achieving a score of $\mathbf{91.3 \pm 3.6}$, which is an improvement of roughly $1.09\times$ over Diffusion-QL ($84.0$) and $1.11\times$ over $\mathcal{X}$-QL ($82.4$). This suggests that for tasks requiring the completion of full sequential goals, MoMa QL's generative policy is highly effective. On partial and mixed datasets, performance is comparable to leading baselines, maintaining stable results around the $60.0$ mark.

\begin{table*}[!h]
\centering
\caption{Detailed offline RL results on D4RL benchmarks by dataset. We report normalized scores averaged over 5 random seeds with standard deviations where available. Bold values indicate the best performance in each row.}
\label{tab:offline_results_detailed}

\resizebox{\textwidth}{!}{
\begin{tabular}{lccccccccc}
\toprule
\textbf{Gym Tasks} & CQL & IQL & $\mathcal{X}$-QL & ARQ & IDQL-A & Diffusion-QL & Consistency-AC & MoMa QL (Ours) \\
\midrule
halfcheetah-m  & 44.0 & 47.4 & 48.3 & 45$\pm$0.3 & 51.0 & 51.1$\pm$0.5 & 69.1$\pm$0.7 & \textbf{72.6$\pm$1.1} \\
hopper-m       & 58.5 & 66.3 & 74.2 & 61$\pm$0.4 & 65.4 & 90.5$\pm$4.6 & 80.7$\pm$10.5 & \textbf{104.2$\pm$2.3} \\
walker2d-m     & 72.5 & 78.3 & 84.2 & 81$\pm$0.7 & 82.5 & 87.0$\pm$0.9 & 83.1$\pm$0.3 & \textbf{95.6$\pm$0.4} \\
halfcheetah-mr & 45.5 & 44.2 & 45.2 & 42$\pm$0.3 & 45.9 & 47.8$\pm$0.3 & 58.7$\pm$3.9 & \textbf{63.3$\pm$0.7} \\
hopper-mr      & 95.0 & 94.7 & 100.7 & 81$\pm$24.2 & 92.1 & 101.3$\pm$0.6 & 99.7$\pm$0.5 & \textbf{106.5$\pm$1.4} \\
walker2d-mr    & 77.2 & 73.9 & 82.2 & 66$\pm$7.0 & 85.1 & 95.5$\pm$1.5 & 79.5$\pm$3.6 & \textbf{104.3$\pm$3.3} \\
halfcheetah-me & 91.6 & 86.7 & 94.2 & 91$\pm$0.7 & 95.9 & 96.8$\pm$0.3 & 84.3$\pm$4.1 & \textbf{106.9$\pm$0.4} \\
hopper-me      & 105.4 & 91.5 & 111.2 & 110$\pm$0.9 & 108.6 & \textbf{111.1$\pm$1.3} & 100.4$\pm$3.5 & 88.9$\pm$9.6 \\
walker2d-me    & 108.8 & 109.6 & 112.7 & 109$\pm$0.5 & 112.7 & 110.1$\pm$0.3 & 110.4$\pm$0.7 & \textbf{117.5$\pm$0.8} \\
\midrule
Average        & 77.6 & 77.0 & 83.7 & 76.2 & 82.1 & 87.9 & 85.1 & \textbf{95.5} \\
\bottomrule
\end{tabular}
}

\vspace{0.5cm}

\resizebox{\textwidth}{!}{
\begin{tabular}{lccccccccccc}
\toprule
\textbf{Adroit Tasks} & BC & IQL & ReBRAC & IDQL & SRPO & CAC & FAWAC & FBRAC & IFQL & FQL & MoMa QL (Ours) \\
\midrule
pen-human-v1      & 71 & 78 & \textbf{103} & 76$\pm$10 & 69$\pm$7 & 64$\pm$8 & 67$\pm$5 & 77$\pm$7 & 71$\pm$12 & 53$\pm$6 & 88$\pm$7 \\
pen-cloned-v1     & 52 & 83 & 103 & 64$\pm$7 & 61$\pm$7 & 56$\pm$10 & 62$\pm$10 & 67$\pm$9 & 80$\pm$11 & 74$\pm$11 & \textbf{104$\pm$8}\\
pen-expert-v1     & 110 & 128 & \textbf{152} & 140$\pm$6 & 134$\pm$4 & 103$\pm$9 & 118$\pm$6 & 119$\pm$7 & 139$\pm$5 & 142$\pm$6 & 134$\pm$5 \\
door-human-v1     & 2 & 3 & -0 & 6$\pm$2 & 3$\pm$3 & 5$\pm$2 & 2$\pm$1 & 4$\pm$2 & 7$\pm$2 & 0$\pm$0 & \textbf{7$\pm$1} \\
door-cloned-v1    & -0 & \textbf{3} & 0 & 0$\pm$0 & 0$\pm$0 & 1$\pm$0 & 0$\pm$1 & 0$\pm$0 & 2$\pm$2 & 2$\pm$1 & 0$\pm$0 \\
door-expert-v1    & 105 & 107 & 106 & 105$\pm$1 & 105$\pm$0 & 98$\pm$3 & 103$\pm$1 & 104$\pm$1 & 104$\pm$2 & 104$\pm$1 & \textbf{108$\pm$1} \\
hammer-human-v1   & 3 & 2 & 0 & 2$\pm$1 & 1$\pm$1 & 2$\pm$0 & 2$\pm$1 & 2$\pm$1 & 3$\pm$1 & 1$\pm$1 & \textbf{4$\pm$0} \\
hammer-cloned-v1  & 1 & 2 & 5 & 2$\pm$1 & 2$\pm$1 & 1$\pm$1 & 1$\pm$0 & 2$\pm$1 & 2$\pm$1 & \textbf{11$\pm$9} & 3$\pm$1\\
hammer-expert-v1  & 127 & 129 & \textbf{134} & 125$\pm$4 & 127$\pm$0 & 92$\pm$11 & 118$\pm$3 & 119$\pm$9 & 117$\pm$9 & 125$\pm$3 & 126$\pm$2 \\
relocate-human-v1 & 0 & 0 & 0 & 0$\pm$0 & 0$\pm$0 & 0$\pm$0 & 0$\pm$0 & 0$\pm$0 & 0$\pm$0 & 0$\pm$0 & 0$\pm$0 \\
relocate-cloned-v1 & -0 & 0 & \textbf{2} & -0$\pm$0 & -0$\pm$0 & -0$\pm$0 & -0$\pm$0 & 1$\pm$1 & -0$\pm$0 & -0$\pm$0 & 0$\pm$0 \\
relocate-expert-v1 & \textbf{108} & 106 & \textbf{108} & 107$\pm$1 & 106$\pm$2 & 93$\pm$6 & 105$\pm$3 & 105$\pm$2 & 104$\pm$3 & 107$\pm$1 & \textbf{108$\pm$1} \\
\midrule
Average & 48.3 & 53.4 & 55.4 & 52.3 & 50.7 & 42.9 & 48.2 & 50.0 & 52.4 & 51.6 & \textbf{56.7} \\
\bottomrule
\end{tabular}
}

\vspace{0.5cm}

\resizebox{\textwidth}{!}{
\begin{tabular}{lcccccccc}
\toprule
\textbf{Kitchen Tasks} & CQL & IQL & $\mathcal{X}$-QL & ARQ & Diffusion-QL & Consistency-AC & MoMa QL (Ours) \\
\midrule
kitchen-complete & 43.8 & 62.5 & 82.4 & 37$\pm$14.2 & 84.0$\pm$7.4 & 51.9$\pm$6.0 & \textbf{91.3$\pm$3.6} \\
kitchen-partial  & 49.8 & 46.3 & \textbf{73.7} & 50$\pm$5.0 & 60.5$\pm$6.9 & 38.2$\pm$1.8 & 63.3$\pm$7.6 \\
kitchen-mixed    & 51.0 & 51.0 & \textbf{62.5} & 39$\pm$9.4 & \textbf{62.6$\pm$5.1} & 45.8$\pm$1.5 & 58.8$\pm$6.3 \\
\midrule
Average & 48.2 & 53.3 & 72.9 & 42.0 & 69.0 & 45.3 & \textbf{73.1} \\
\bottomrule
\end{tabular}
}
\end{table*}

\subsection{Extended Offline-to-Online RL Results}
\label{appendix:offline_to_online}

This appendix provides comprehensive analysis of MoMa QL's performance in the offline-to-online setting, including detailed per-dataset results.

Table~\ref{tab:offline_to_online_detailed} presents the complete offline-to-online results across all datasets with standard deviations. For each task, we compare against strong baselines including Diffusion-QL and Consistency-AC.

\textbf{Analysis.} MoMa QL achieves an average score of \textbf{101.4} across all tasks, demonstrating superior final performance.
\begin{itemize}
    \item On \textbf{Medium-Replay} datasets, our method excels, achieving \textbf{113.5} on Hopper-mr and \textbf{117.9} on Walker2d-mr, significantly outperforming Diffusion-QL (68.4 and 95.7, respectively).
    \item On \textbf{Medium-Expert} datasets, MoMa QL remains competitive, with particularly strong results on Walker2d-me (\textbf{118.0}).
    \item The transition from offline to online is stable, with no catastrophic forgetting observed, a common issue in fine-tuning generative policies.
\end{itemize}

\begin{table*}[h]
\centering
\caption{Detailed offline-to-online RL results on D4RL benchmarks. For each method, we report the final online score. All results are averaged over 5 random seeds with standard deviations. Bold values indicate the best performance.}
\label{tab:offline_to_online_detailed}

\resizebox{\textwidth}{!}{
\begin{tabular}{lccccccc}
\toprule
\textbf{Gym Tasks} & SAC & AWAC & ACA & Diffusion-QL & Consistency-AC & MoMa QL (Ours) \\
\midrule
halfcheetah-m  & 75.2 & 50.5 & 66.6 & \textbf{99.6$\pm$2.3} & 98.7$\pm$1.8 & 83.1$\pm$1.3  \\
hopper-m       & 73.4 & 97.5 & 96.5 & 77.2$\pm$25.6 & 60.5$\pm$8.6 & \textbf{104.3$\pm$4.6}\\
walker2d-m     & 79.6 & 1.9  & 74.7 & \textbf{118.3$\pm$5.8} & 108.9$\pm$3.0 & 99.1$\pm$2.3\\
halfcheetah-mr & 68.9 & 46.8 & 59.0 & \textbf{96.3$\pm$3.9} & 80.7$\pm$10.5 & 80.9$\pm$1.2 \\
hopper-mr      & 74.0 & 96.0 & 85.5 & 68.4$\pm$20.3 & 74.6$\pm$25.1 & \textbf{113.5$\pm$3.6} \\
walker2d-mr    & 85.4 & 80.8 & 85.2 & 95.7$\pm$18.8 & 102.0$\pm$11.6 & \textbf{117.9$\pm$4.4} \\
halfcheetah-me & 82.2 & 68.8 & 93.7 & 103.9$\pm$2.2 & 99.6$\pm$4.1 & \textbf{107.1$\pm$6.1} \\
hopper-me      & 65.4 & 73.1 & \textbf{98.0} & 71.7$\pm$31.1 & 65.4$\pm$5.7 & 89.1$\pm$8.8\\
walker2d-me    & 87.2 & 45.2 & 110.5 & 117.0$\pm$6.3 & 101.8$\pm$13.3 & \textbf{118.0$\pm$7.3} \\
\midrule
Average & 76.8 & 62.3 & 85.5 & 94.2 & 88.0 & \textbf{101.4} \\
\bottomrule
\end{tabular}
}

\end{table*}

\subsection{Detailed Ablation Studies}
\label{appendix:ablation}

In this section, we provide a comprehensive analysis of the hyperparameters used in MoMa QL. We systematically vary one parameter at a time while keeping others fixed.

\subsubsection{Hyperparameter Definitions}

\begin{itemize}
    \item \textbf{$a$}: Controls the power of $\alpha_t$ in the kernel weighting function $\alpha_t^a / (\alpha_t^2 + \sigma_t^2)$. A larger $a$ emphasizes cleaner samples (small noise), allowing the model to capture fine details.
    
    \item \textbf{$b$}: Appears in the sigmoid weighting term $(b - \log \text{SNR}_t)$. It shifts the center of the weighting function. A higher $b$ focuses on noisier samples, while a lower $b$ focuses on cleaner samples.
    
    \item \textbf{$\eta$}: The regularizing weight for the Q-value loss. It controls how much we trust the Q-function versus the behavioral distribution.
    
    \item \textbf{Number of Steps}: The number of denoising steps used during action sampling.
    
    \item \textbf{$P_{\text{mean}}$}: The mean of the log-normal distribution for sampling time steps $t$ during training. a negative value biases sampling towards $t=0$ (clean data).
    
    \item \textbf{$P_{\text{std}}$}: The standard deviation of the time step distribution. A larger value covers a wider range of noise levels.
\end{itemize}

\subsubsection{Additional Results}

We present the full ablation results for all parameters across HalfCheetah, Hopper, and Walker2d environments.

\begin{figure}[htbp]
    \centering
    \begin{subfigure}[b]{0.49\textwidth}
        \centering
        \includegraphics[width=0.32\textwidth]{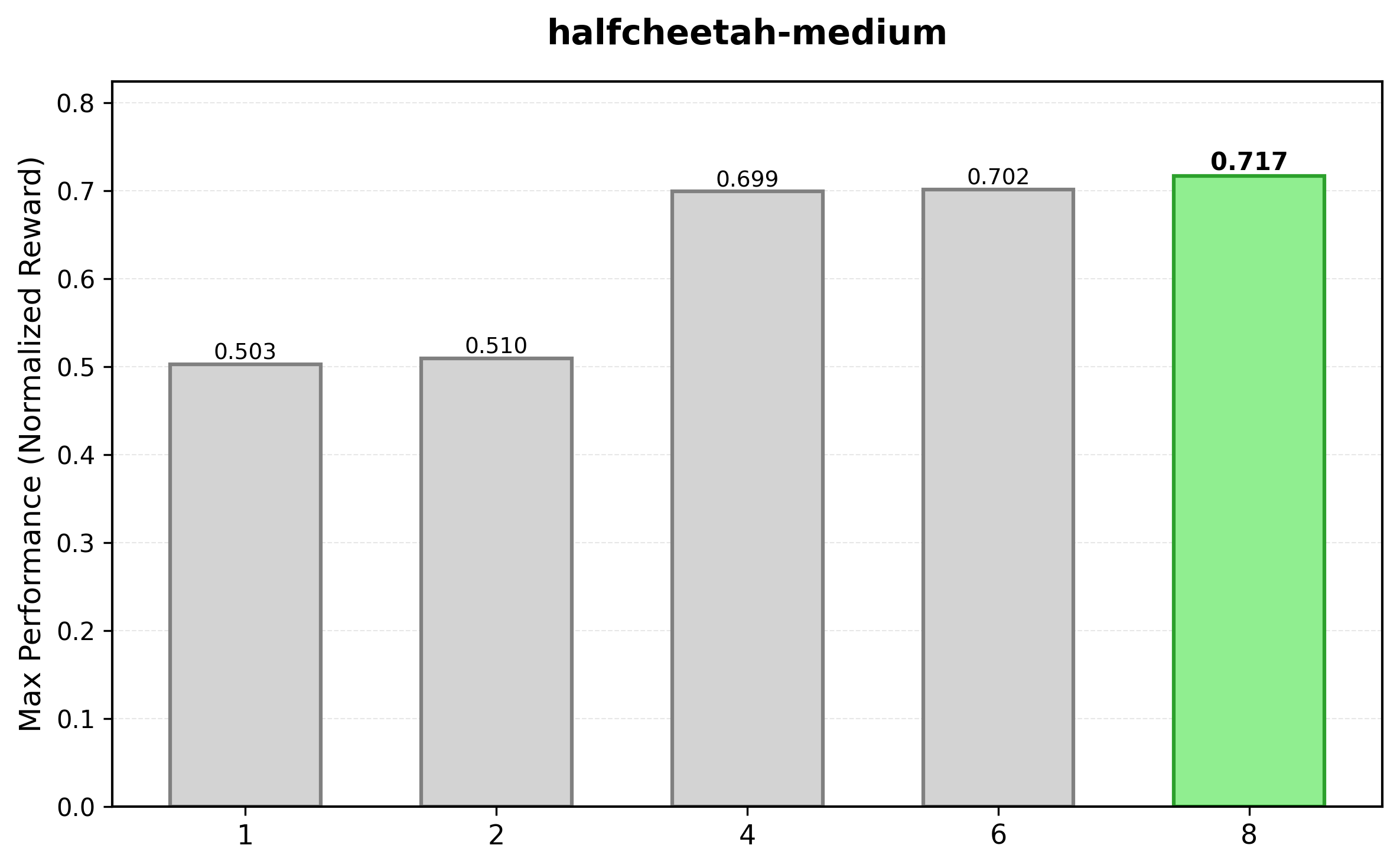}
        \includegraphics[width=0.32\textwidth]{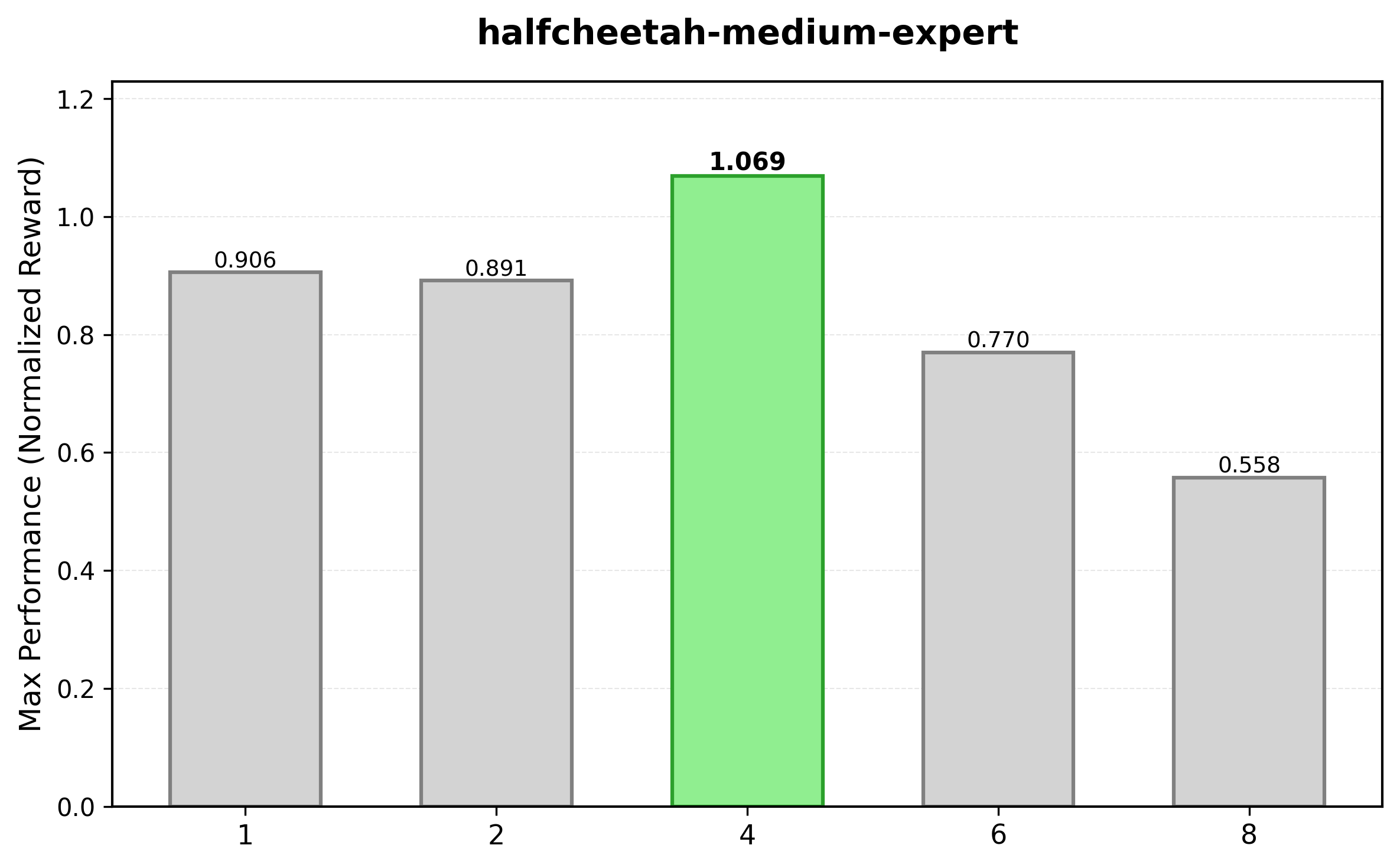}
        \includegraphics[width=0.32\textwidth]{figures/ablation_a/halfcheetah-medium-replay.png}
        \caption{HalfCheetah environments.}
        \label{fig:ablation_a_halfcheetah}
    \end{subfigure}
    \hfill
    \begin{subfigure}[b]{0.49\textwidth}
        \centering
        \includegraphics[width=0.32\textwidth]{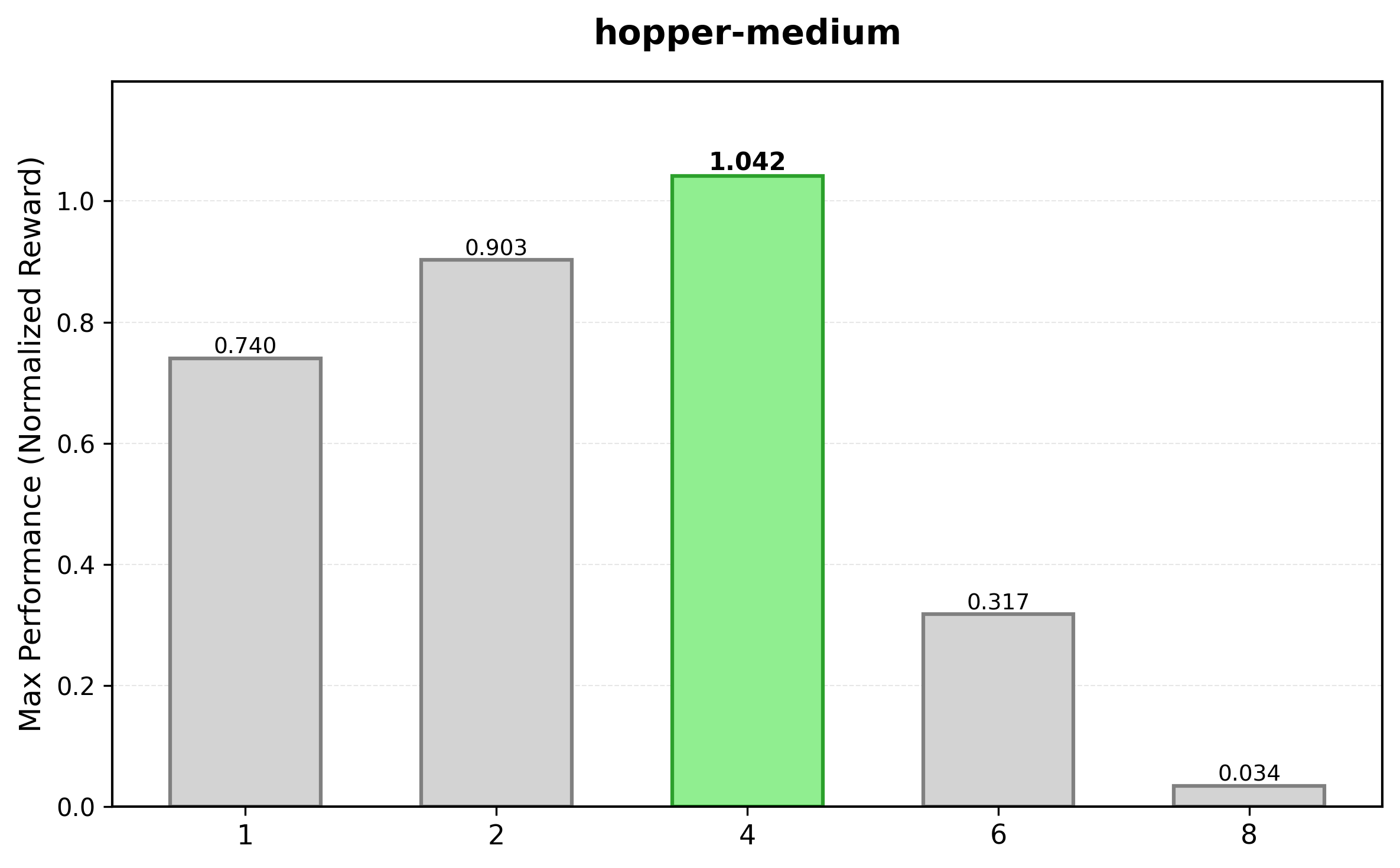}
        \includegraphics[width=0.32\textwidth]{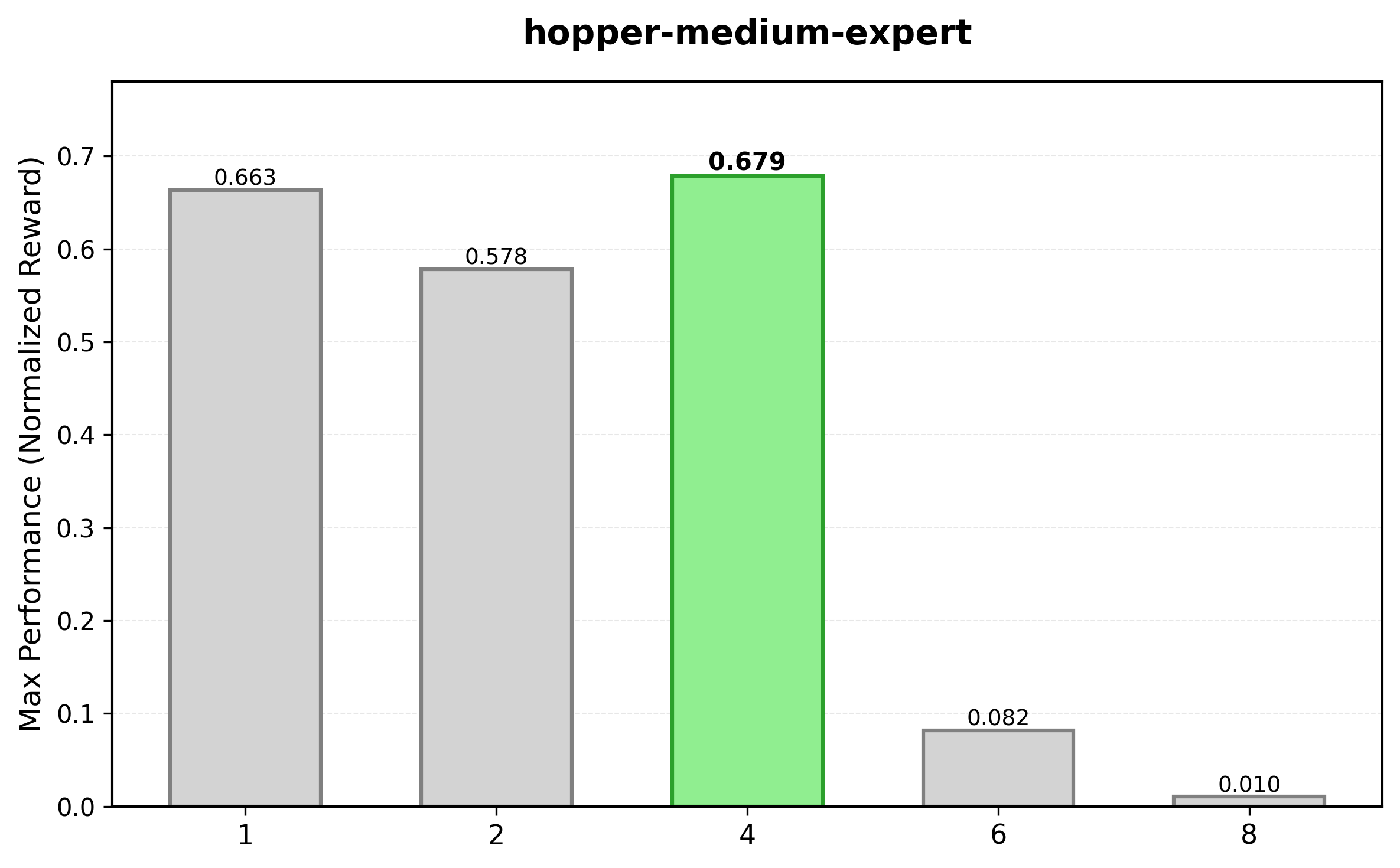}
        \includegraphics[width=0.32\textwidth]{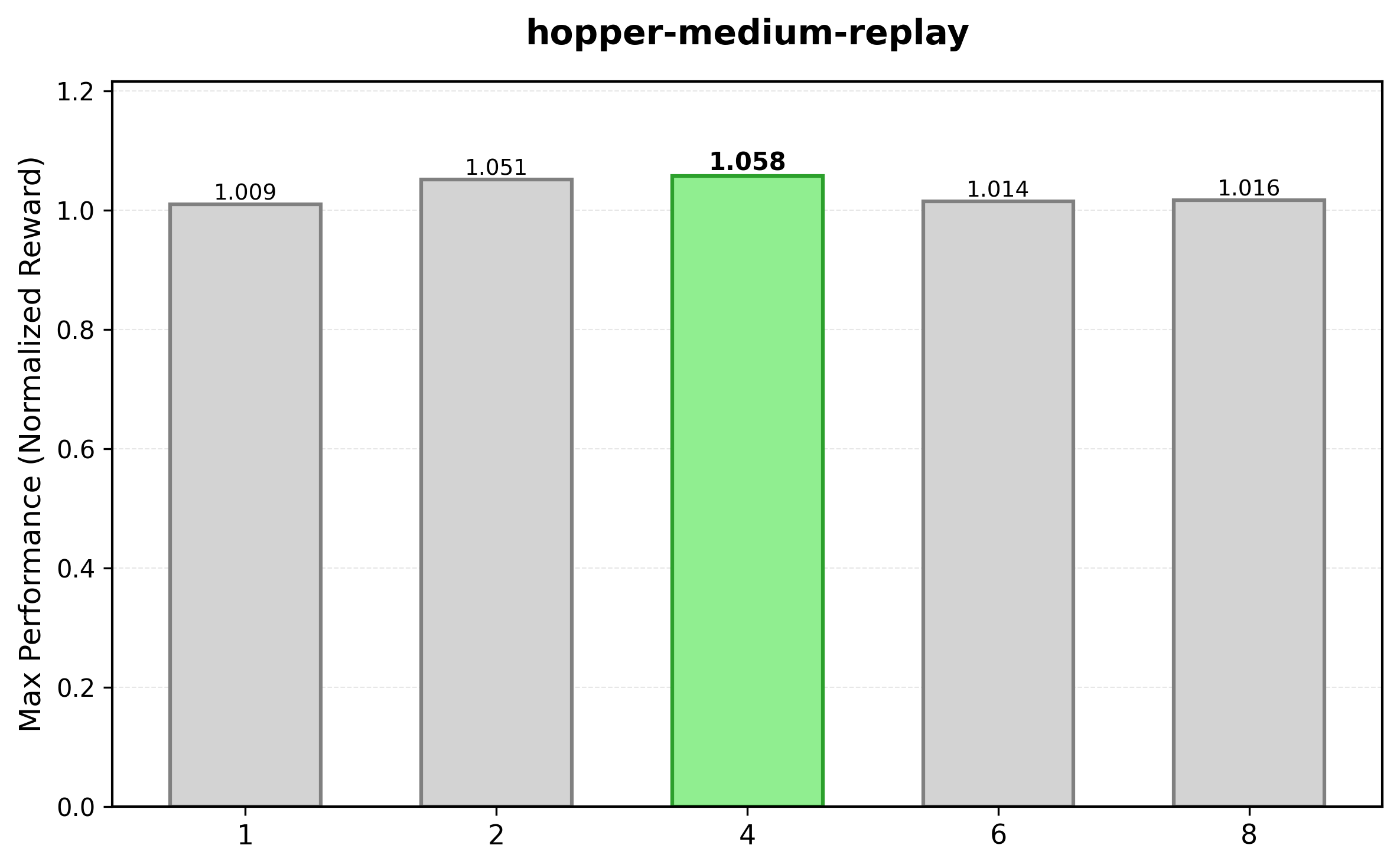}
        \caption{Hopper environments.}
        \label{fig:ablation_a_hopper}
    \end{subfigure}
    \caption{Ablation study on parameter $a$.}
    \label{fig:ablation_a}
\end{figure}

\begin{figure}[htbp]
    \centering
    \begin{subfigure}[b]{0.49\textwidth}
        \centering
        \includegraphics[width=0.32\textwidth]{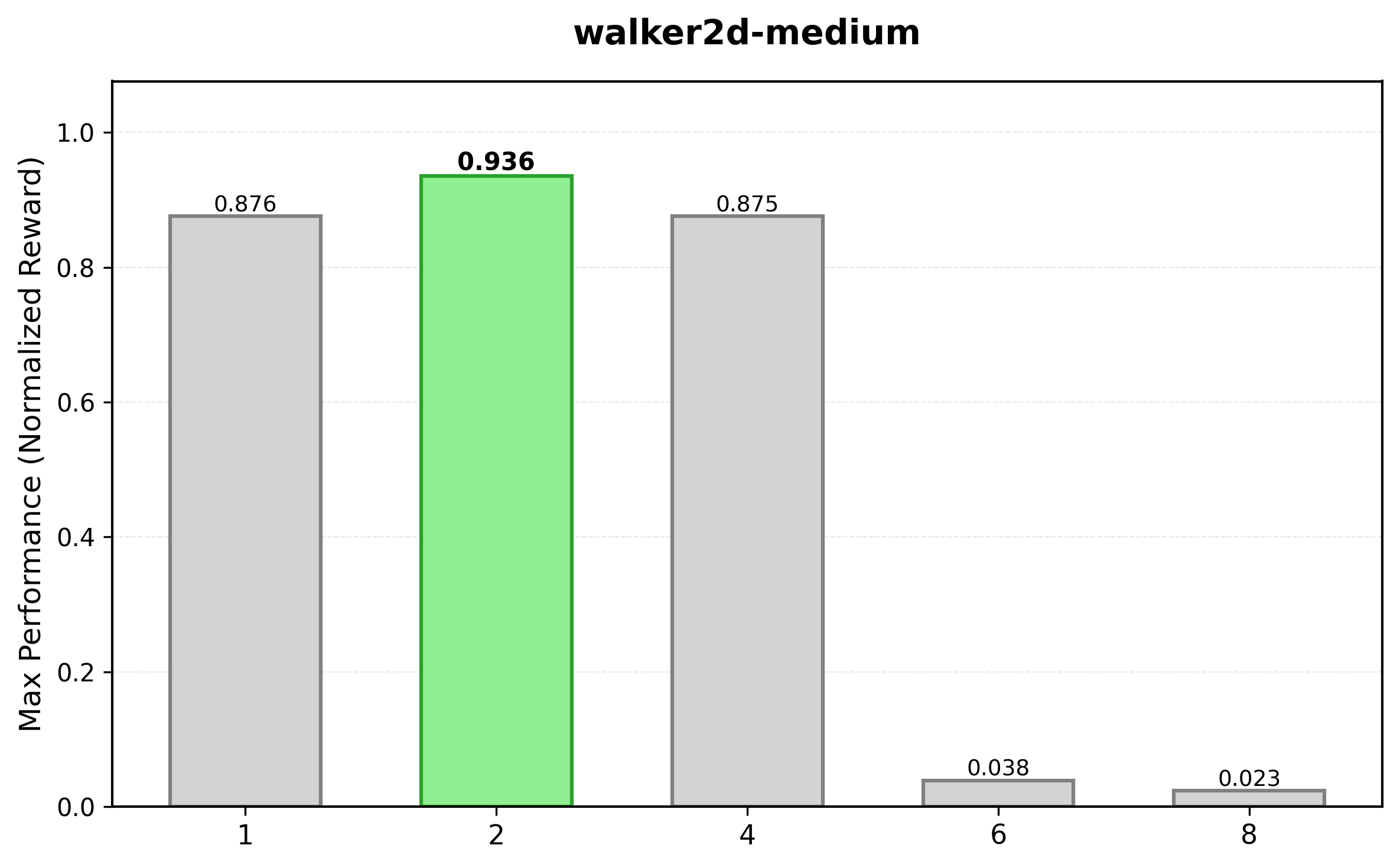}
        \includegraphics[width=0.32\textwidth]{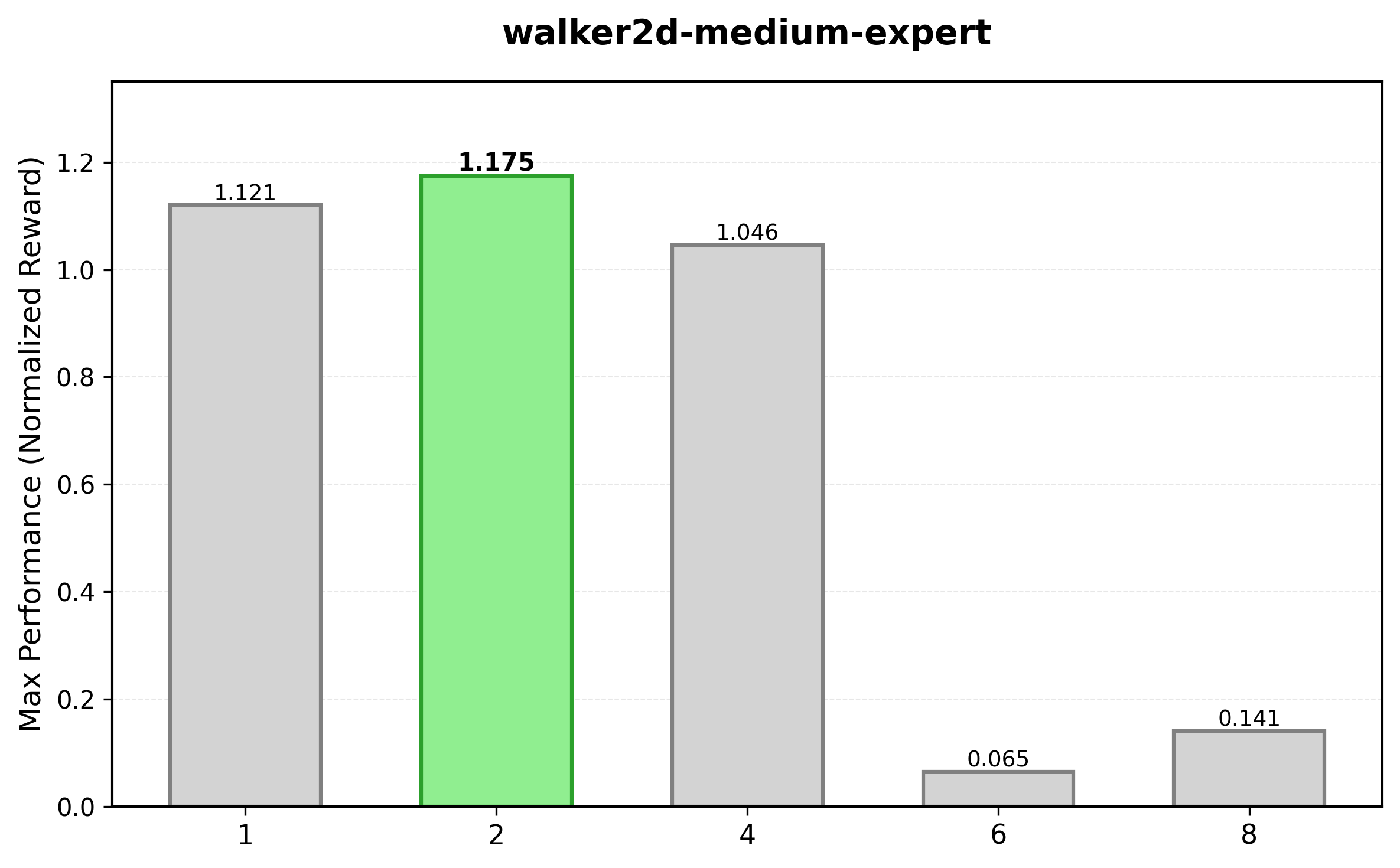}
        \includegraphics[width=0.32\textwidth]{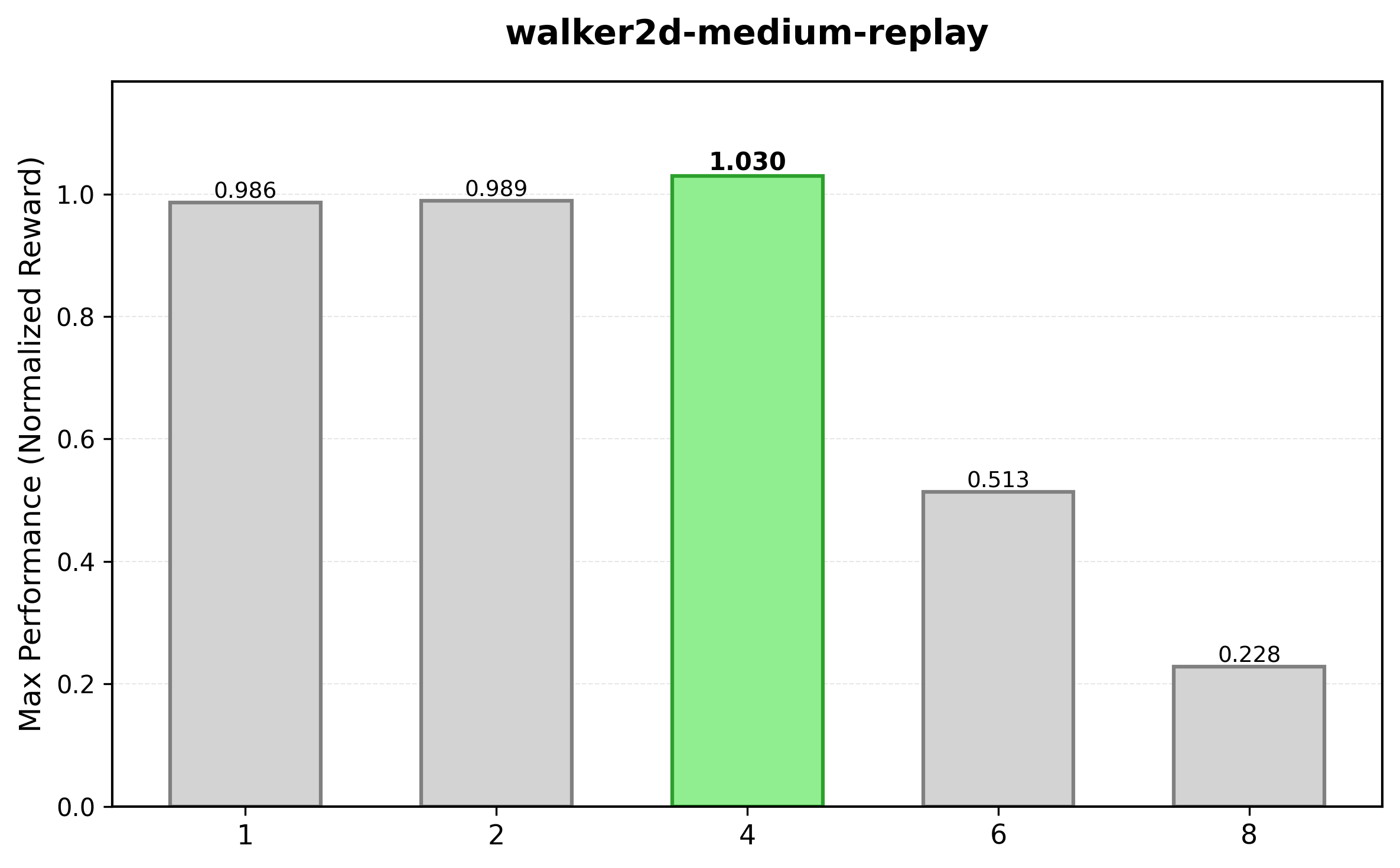}
        \caption{Walker2D environments.}
        \label{fig:ablation_a_walker2d}
    \end{subfigure}
    \hfill
    \begin{subfigure}[b]{0.49\textwidth}
        \centering
        \includegraphics[width=0.32\textwidth]{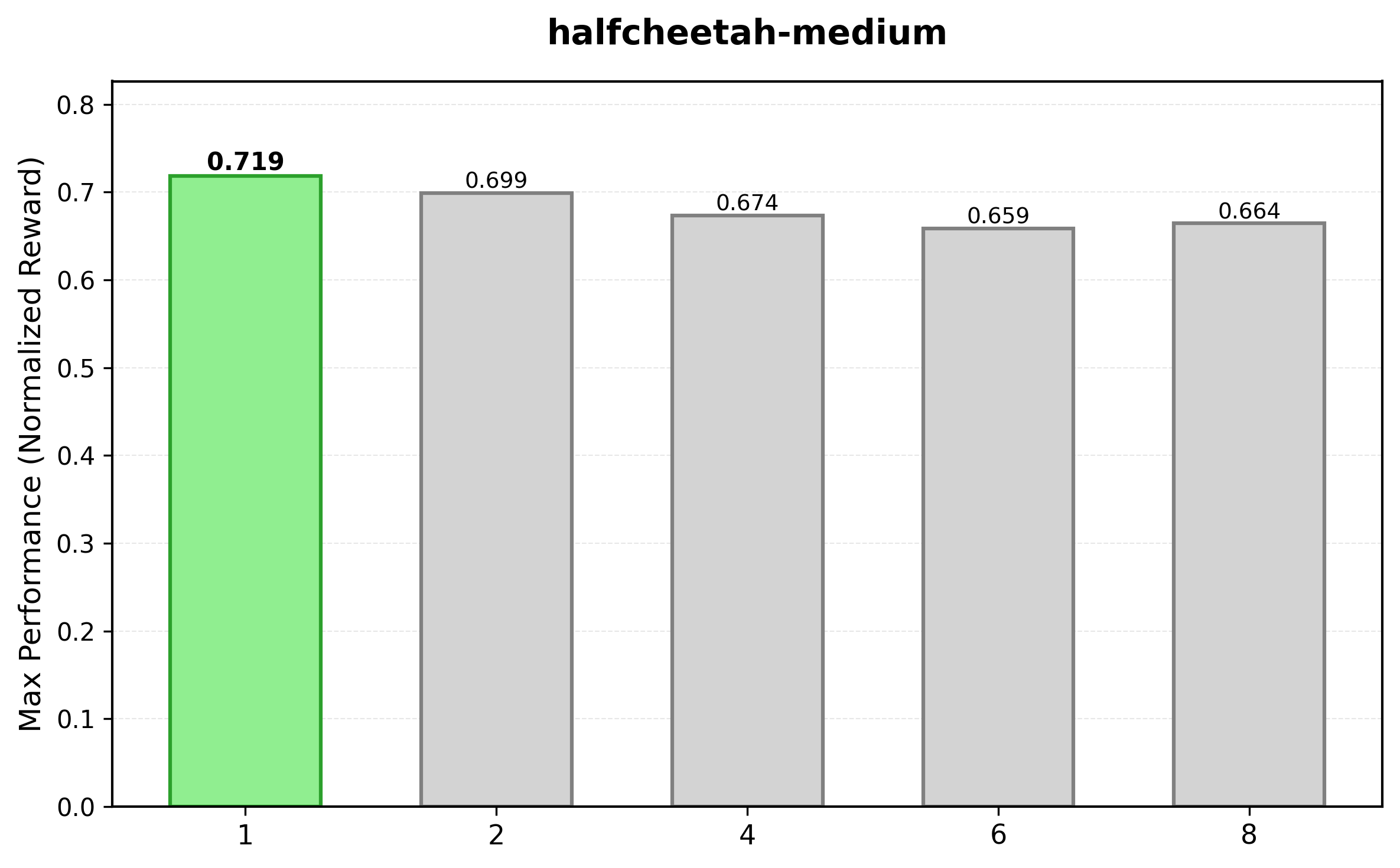}
        \includegraphics[width=0.32\textwidth]{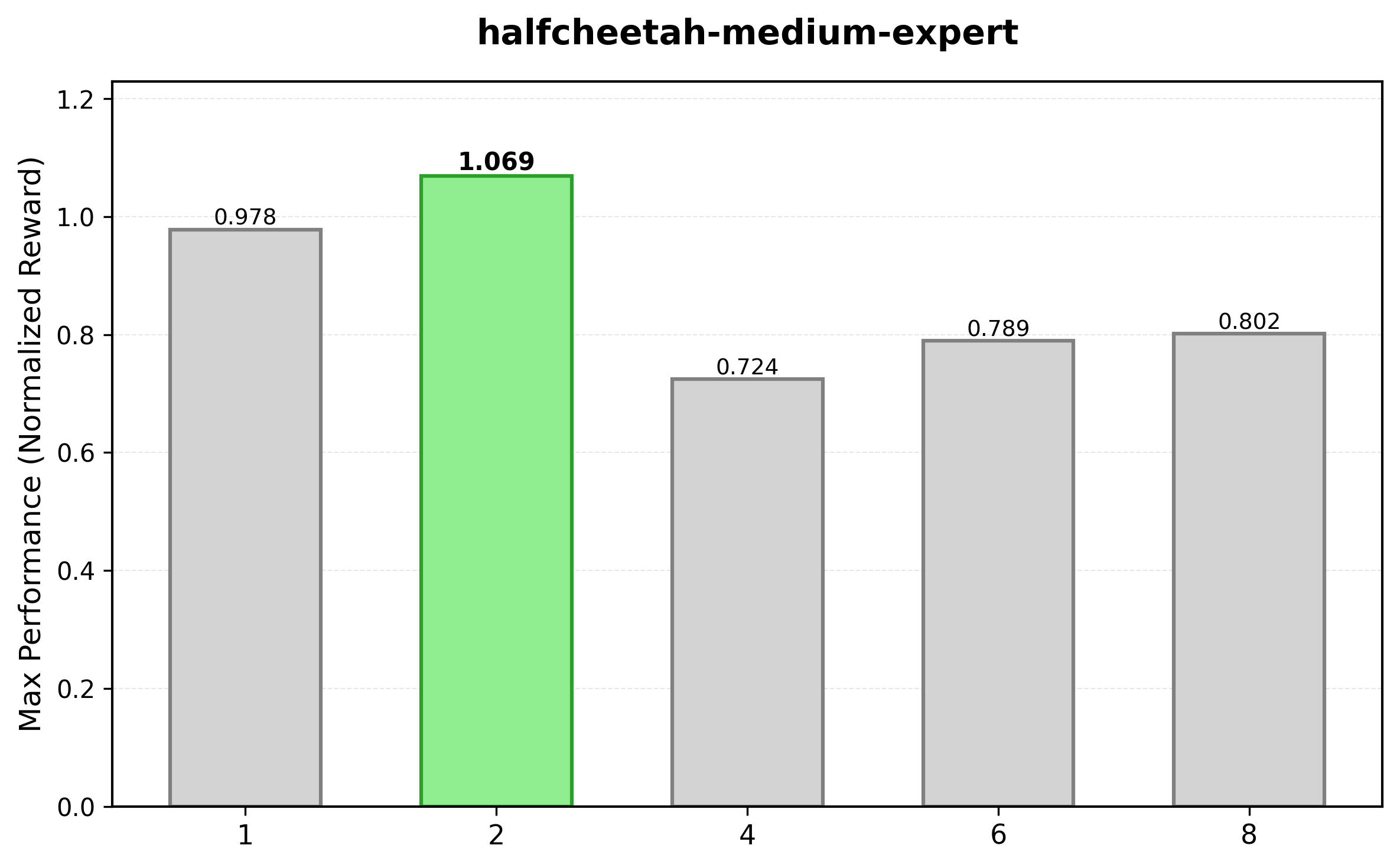}
        \includegraphics[width=0.32\textwidth]{figures/ablation_b/halfcheetah-medium-replay.png}
        \caption{HalfCheetah environments.}
        \label{fig:ablation_b_halfcheetah}
    \end{subfigure}
    \caption{Ablation study on parameters $a$ (Walker2D) and $b$ (HalfCheetah).}
    \label{fig:ablation_a_rest_b_start}
\end{figure}

\begin{figure}[htbp]
    \centering
    \begin{subfigure}[b]{0.49\textwidth}
        \centering
        \includegraphics[width=0.32\textwidth]{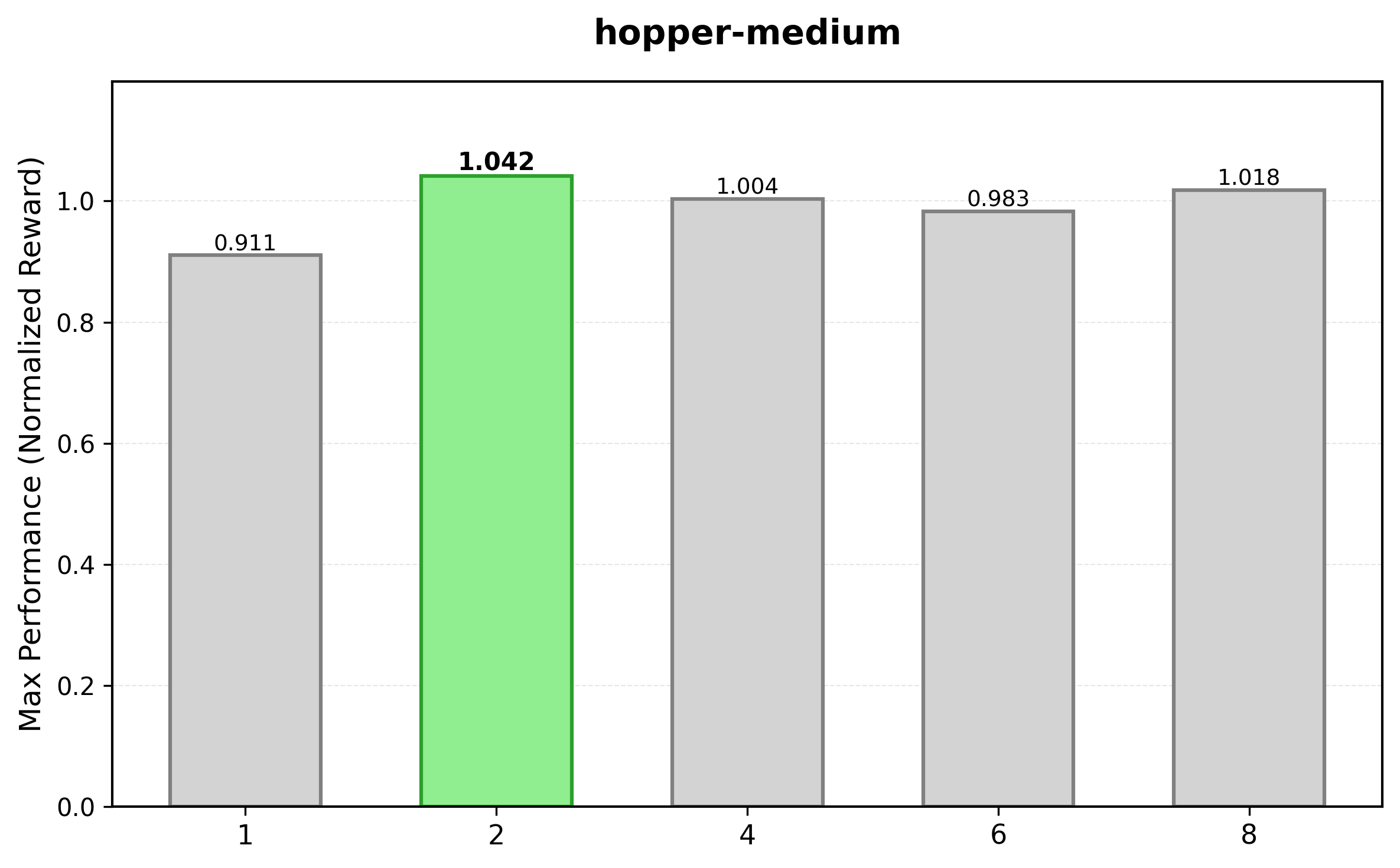}
        \includegraphics[width=0.32\textwidth]{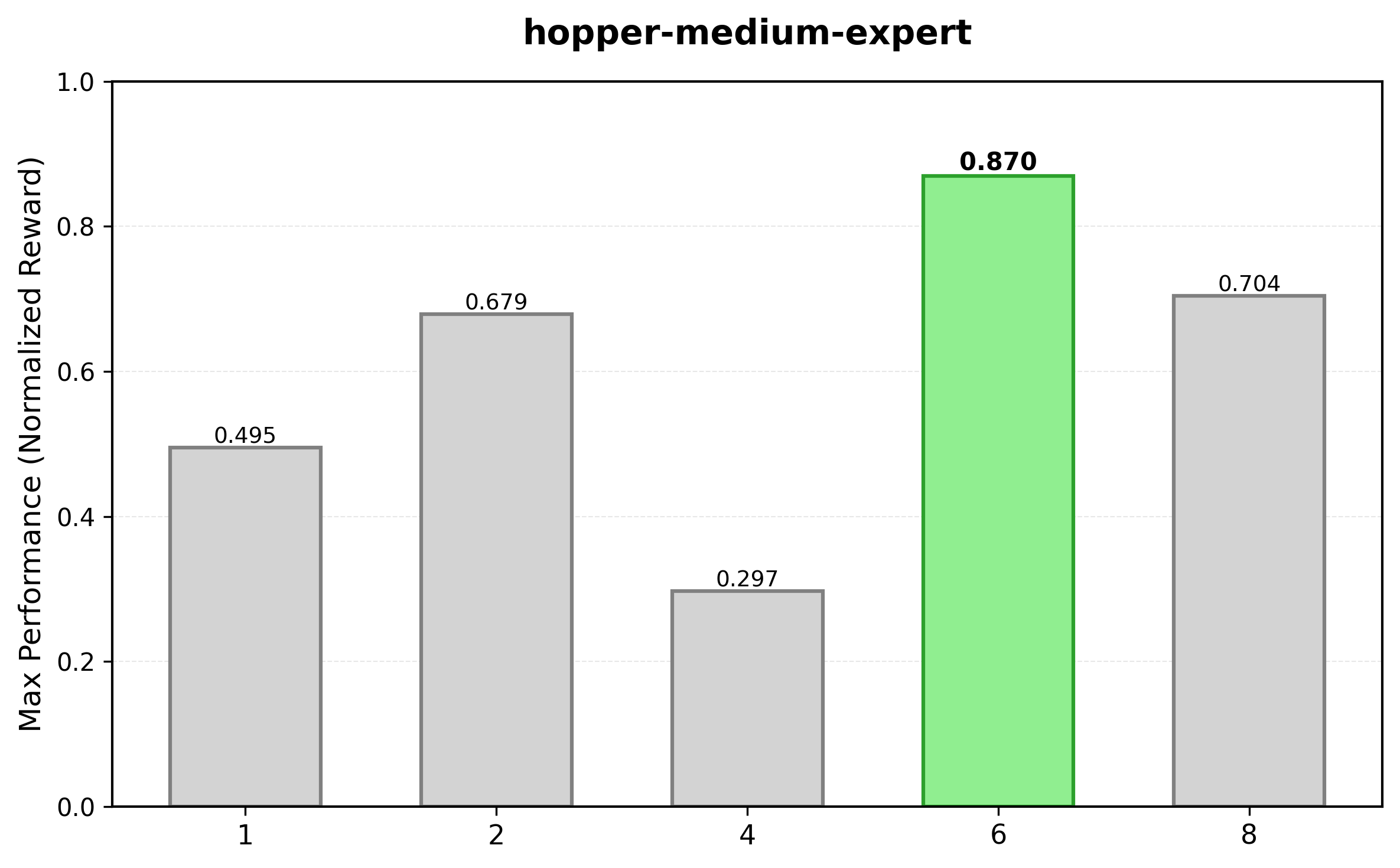}
        \includegraphics[width=0.32\textwidth]{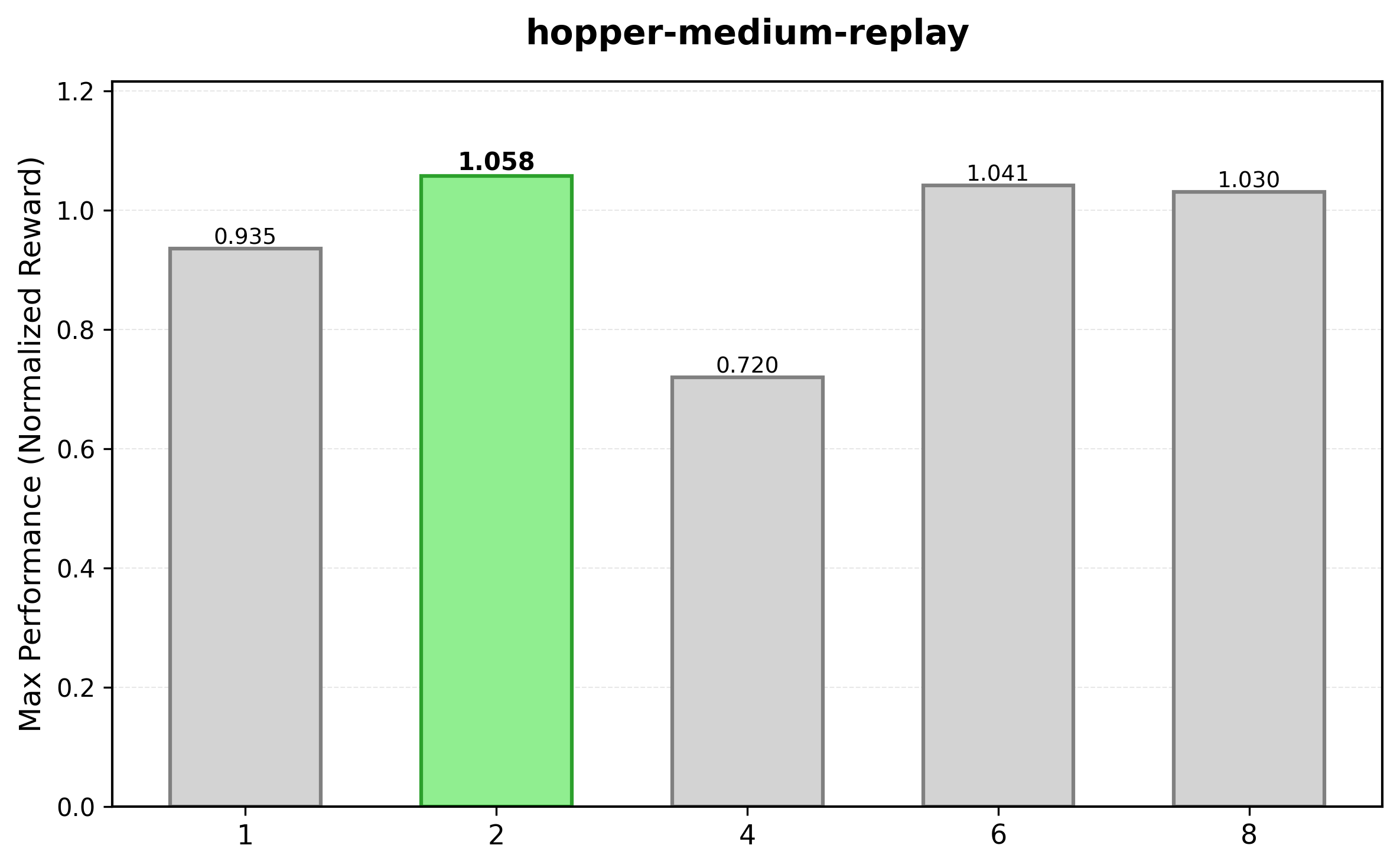}
        \caption{Hopper environments.}
        \label{fig:ablation_b_hopper}
    \end{subfigure}
    \hfill
    \begin{subfigure}[b]{0.49\textwidth}
        \centering
        \includegraphics[width=0.32\textwidth]{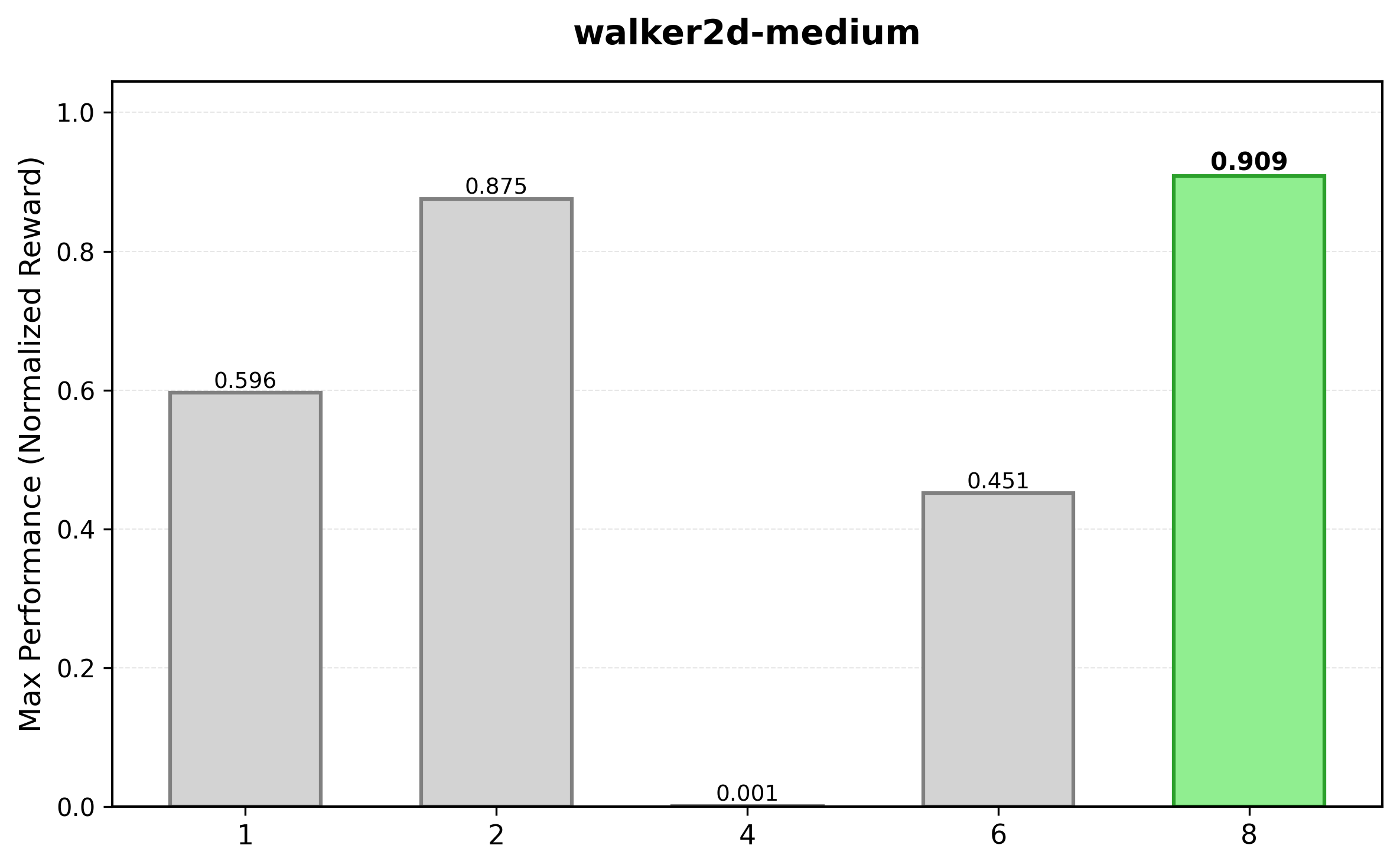}
        \includegraphics[width=0.32\textwidth]{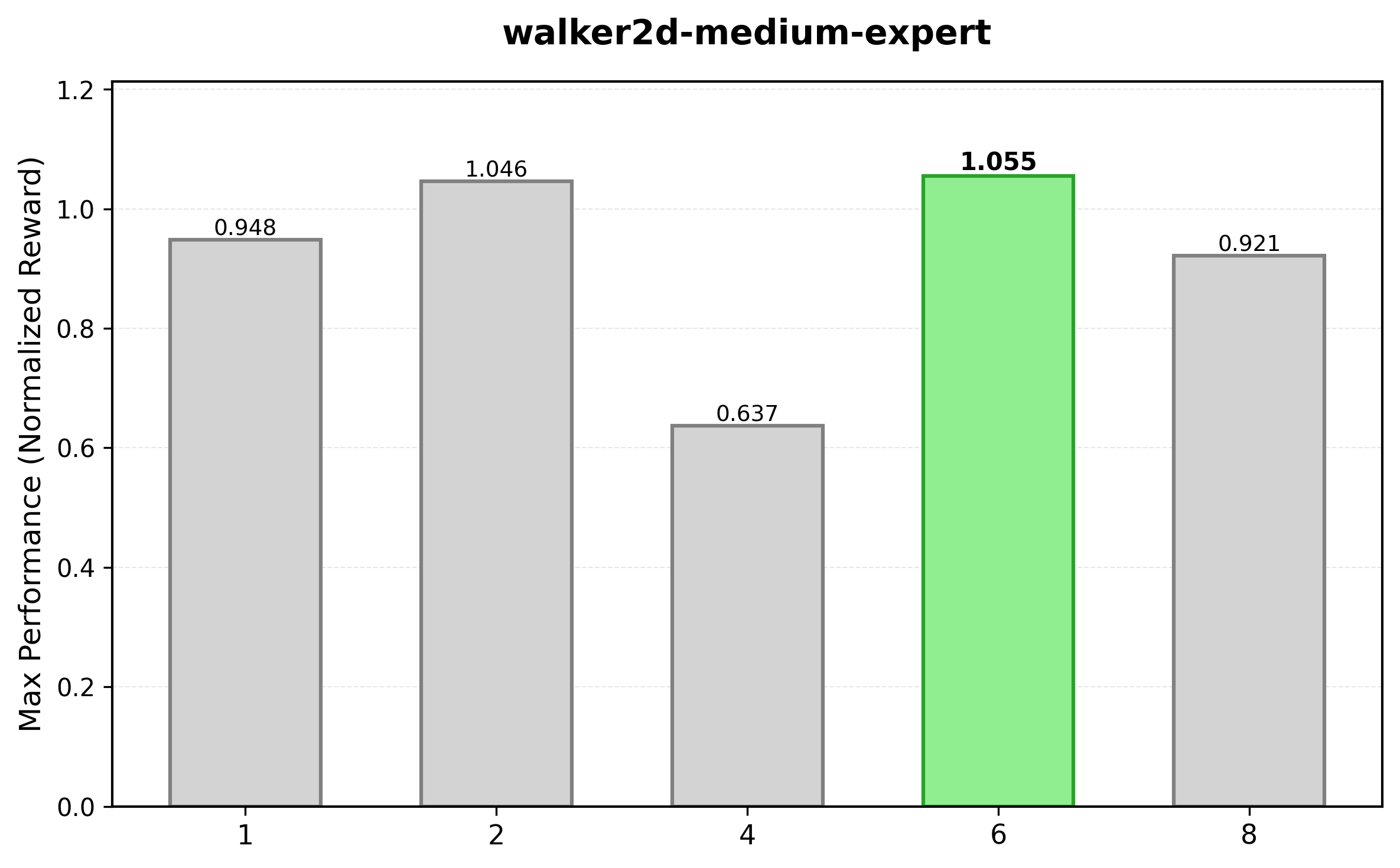}
        \includegraphics[width=0.32\textwidth]{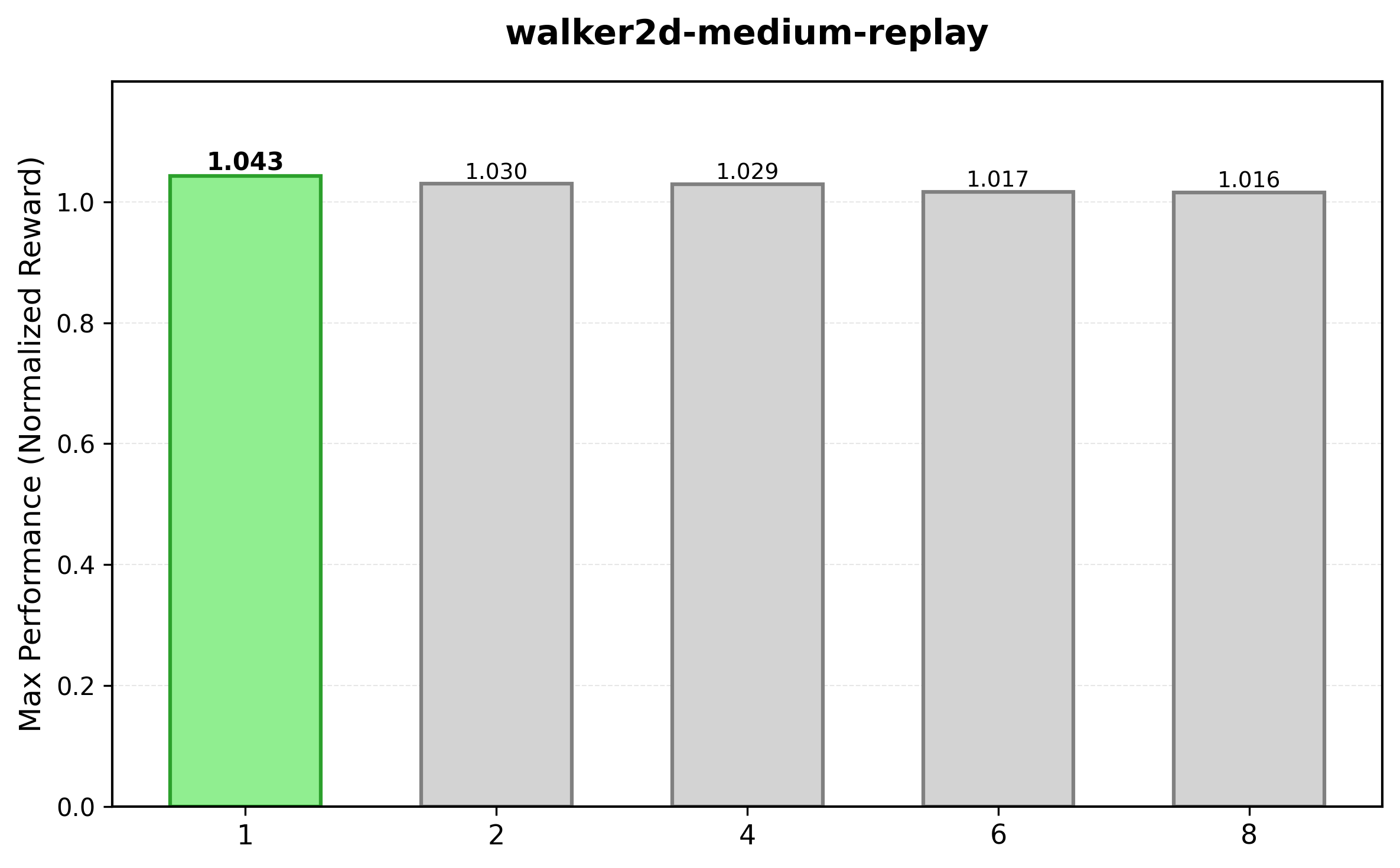}
        \caption{Walker2D environments.}
        \label{fig:ablation_b_walker2d}
    \end{subfigure}
    \caption{Ablation study on parameter $b$.}
    \label{fig:ablation_b}
\end{figure}

\begin{figure}[htbp]
    \centering
    \begin{subfigure}[b]{0.49\textwidth}
        \centering
        \includegraphics[width=0.32\textwidth]{figures/ablation_eta/halfcheetah-medium.png}
        \includegraphics[width=0.32\textwidth]{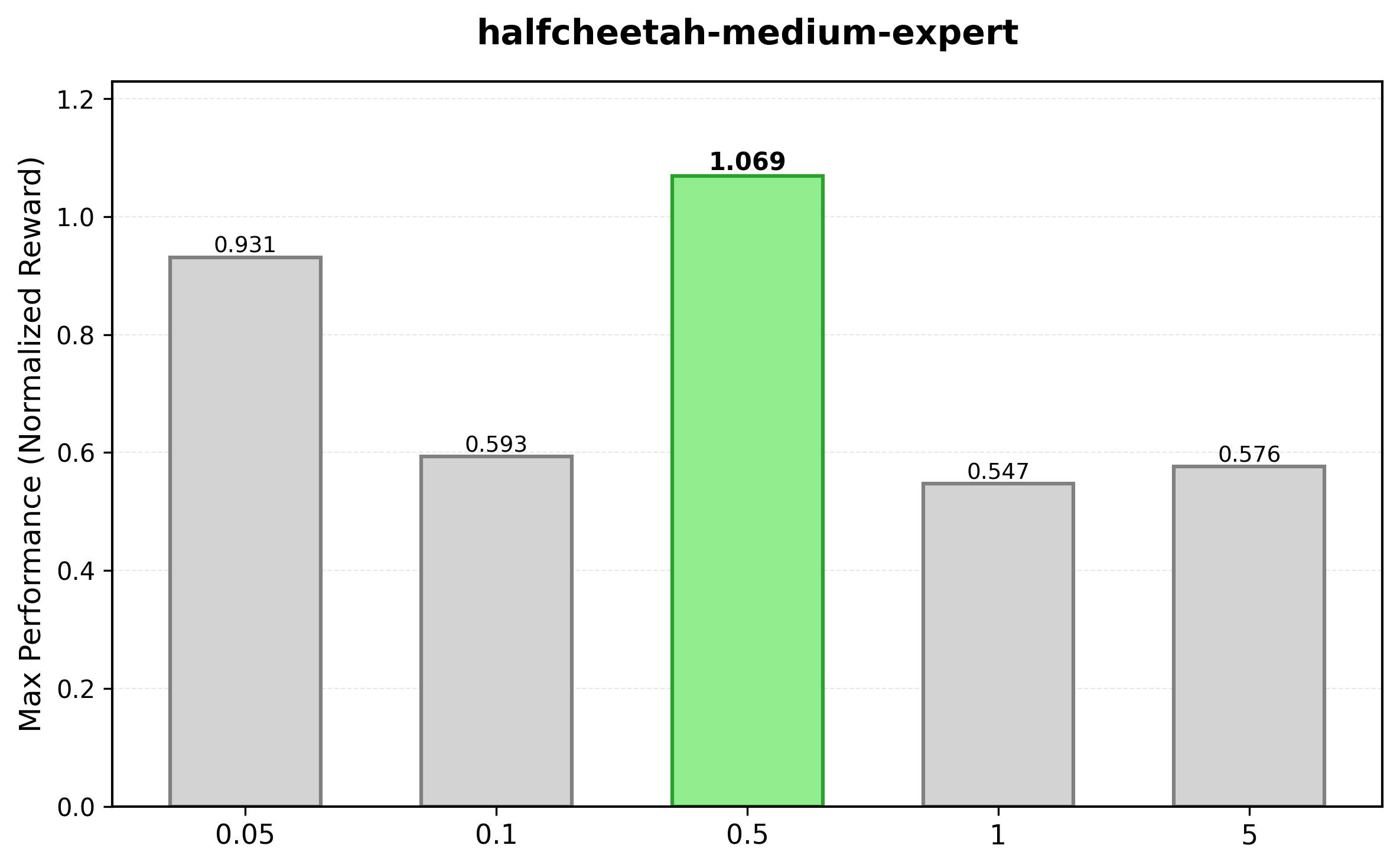}
        \includegraphics[width=0.32\textwidth]{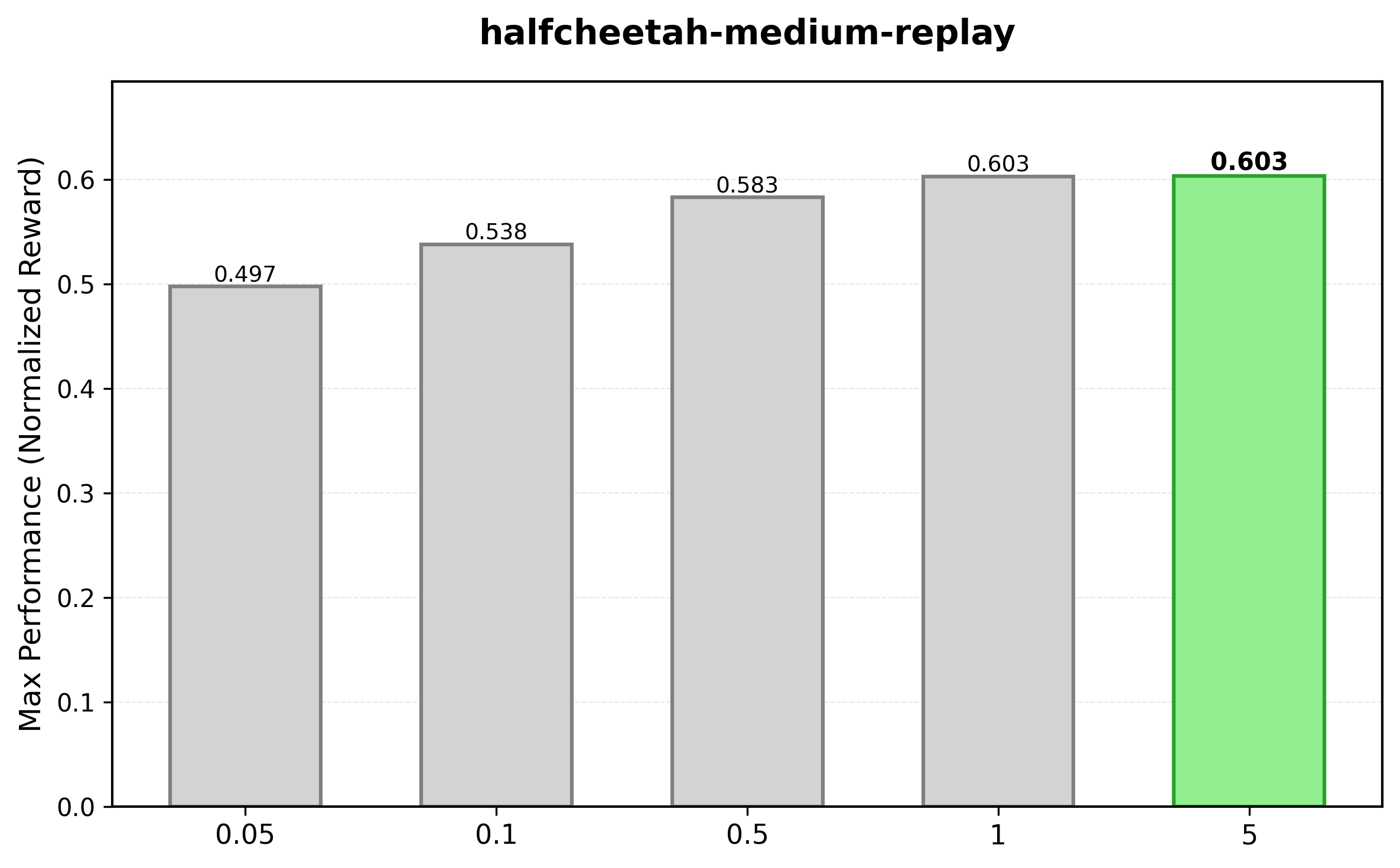}
        \caption{HalfCheetah environments.}
        \label{fig:appendix_ablation_eta_halfcheetah}
    \end{subfigure}
    \hfill
    \begin{subfigure}[b]{0.49\textwidth}
        \centering
        \includegraphics[width=0.32\textwidth]{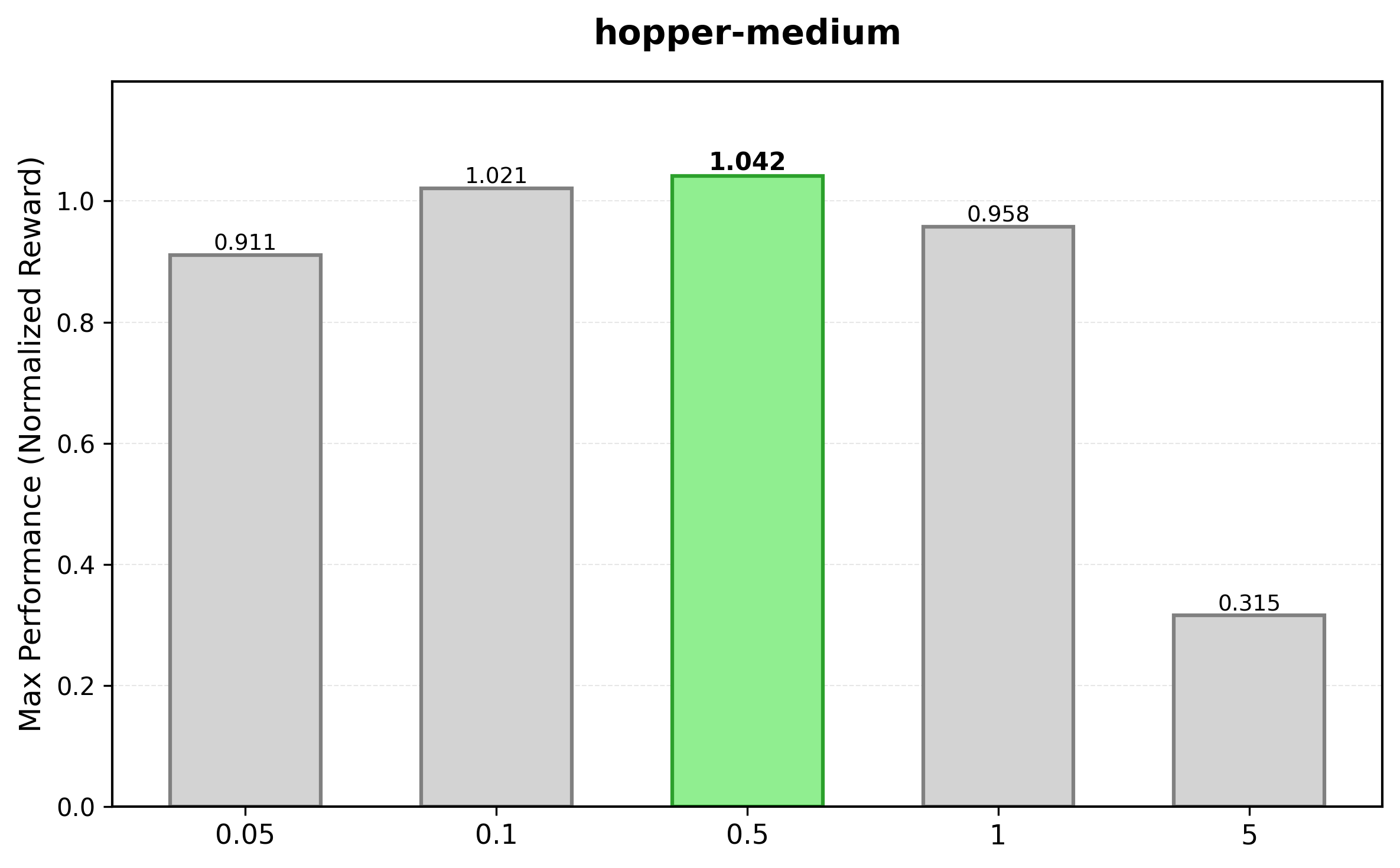}
        \includegraphics[width=0.32\textwidth]{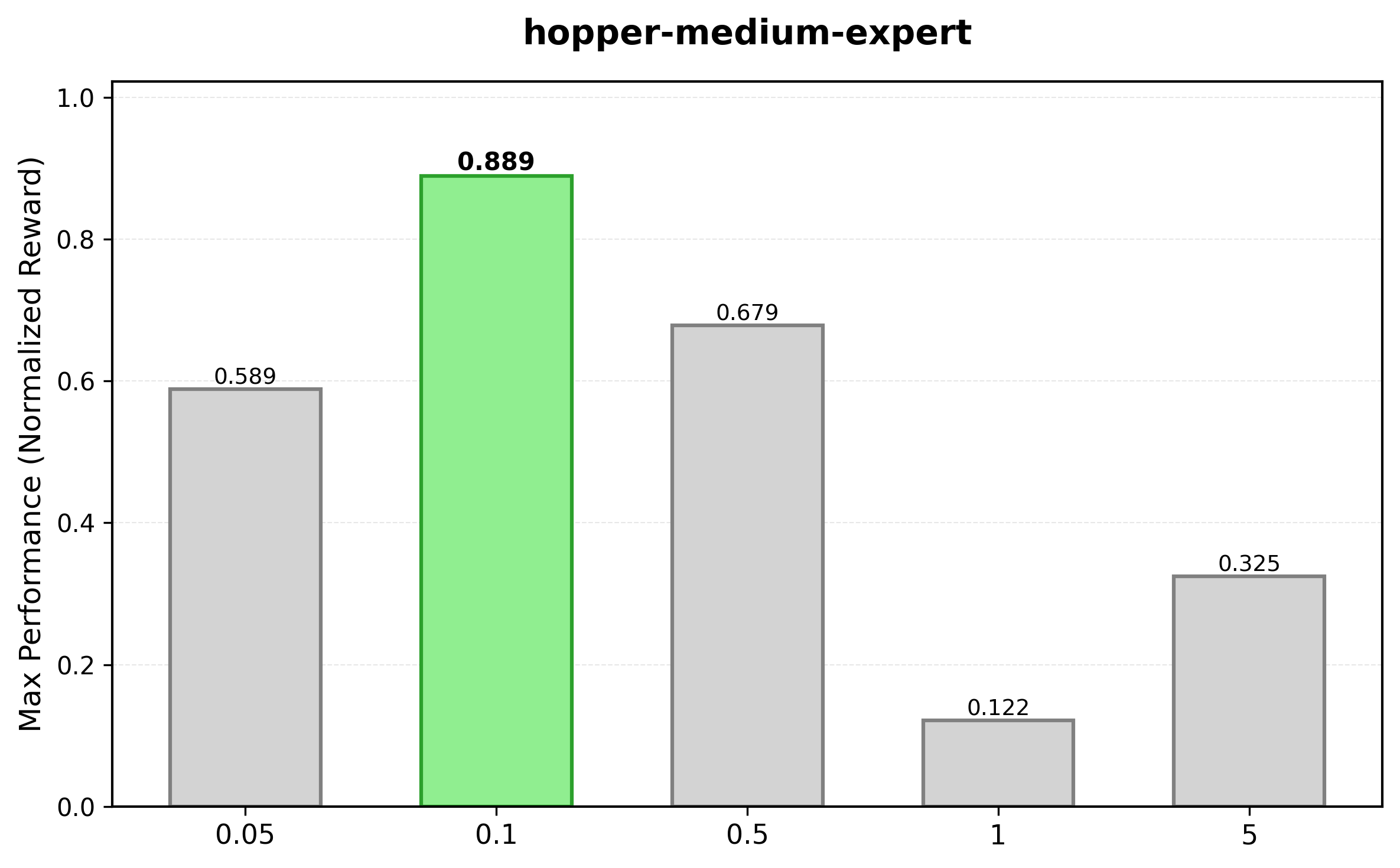}
        \includegraphics[width=0.32\textwidth]{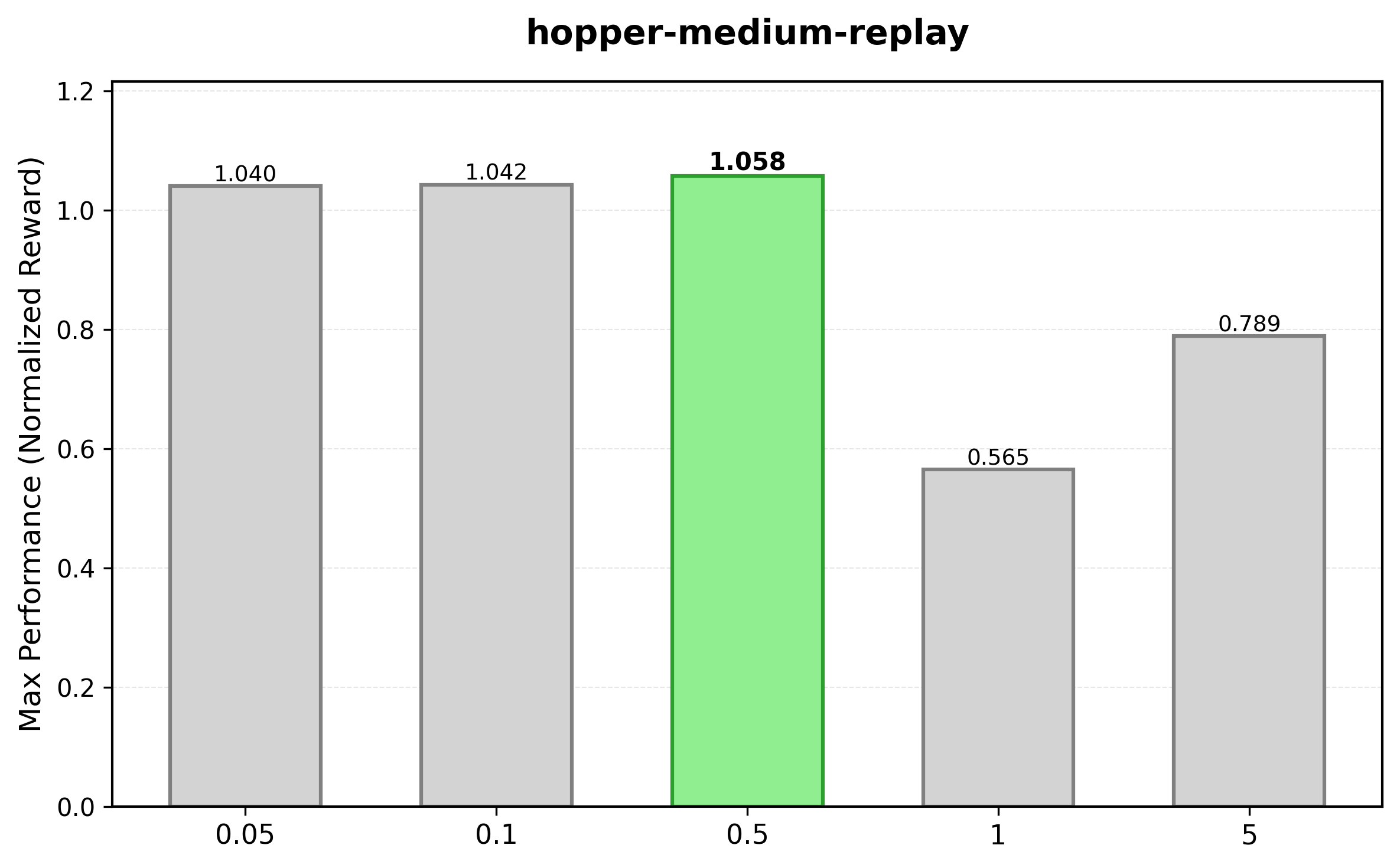}
        \caption{Hopper environments.}
        \label{fig:appendix_ablation_eta_hopper}
    \end{subfigure}
    \caption{Ablation study on Q-learning weight $\eta$.}
    \label{fig:appendix_ablation_eta}
\end{figure}

\begin{figure}[htbp]
    \centering
    \begin{subfigure}[b]{0.49\textwidth}
        \centering
        \includegraphics[width=0.32\textwidth]{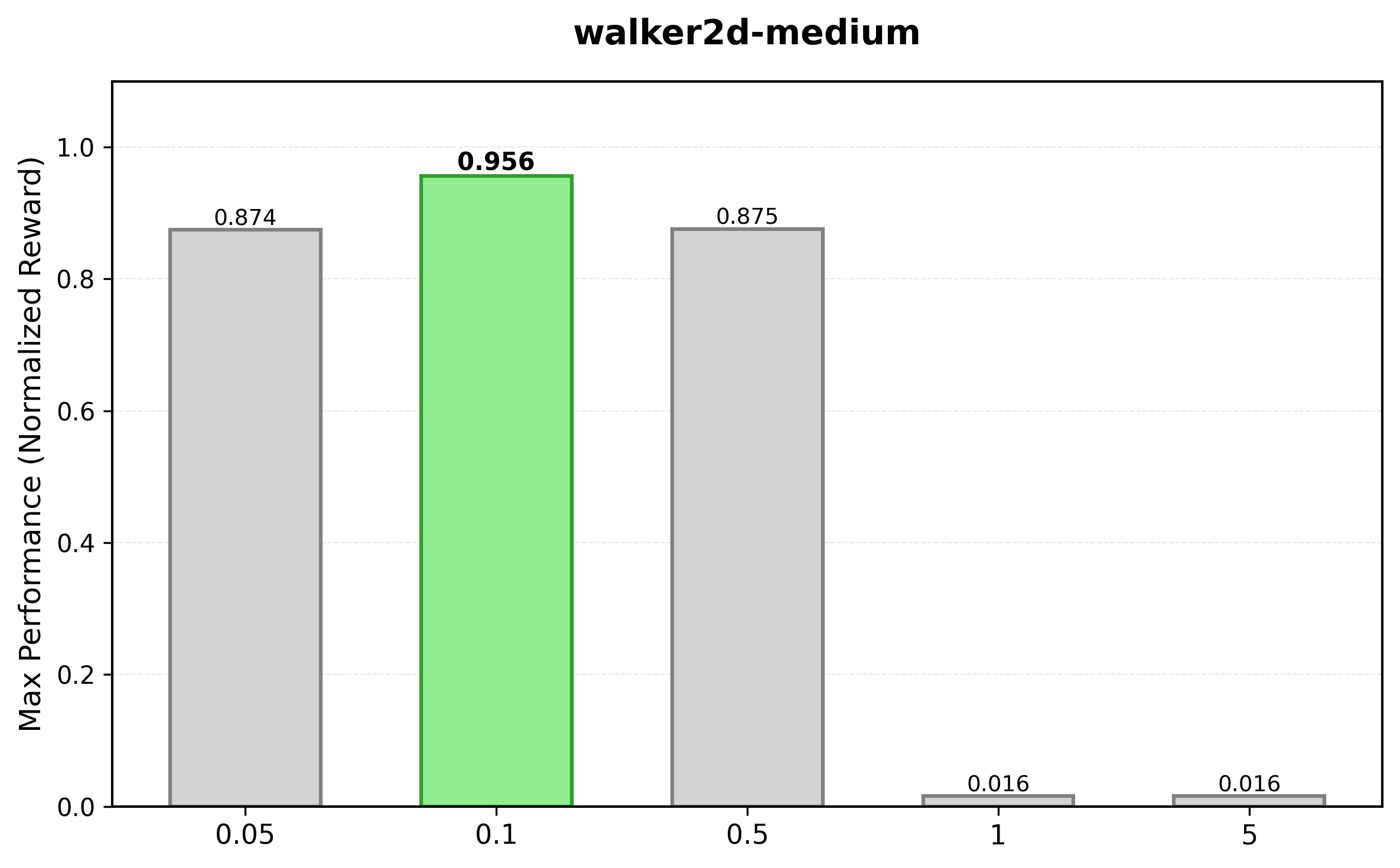}
        \includegraphics[width=0.32\textwidth]{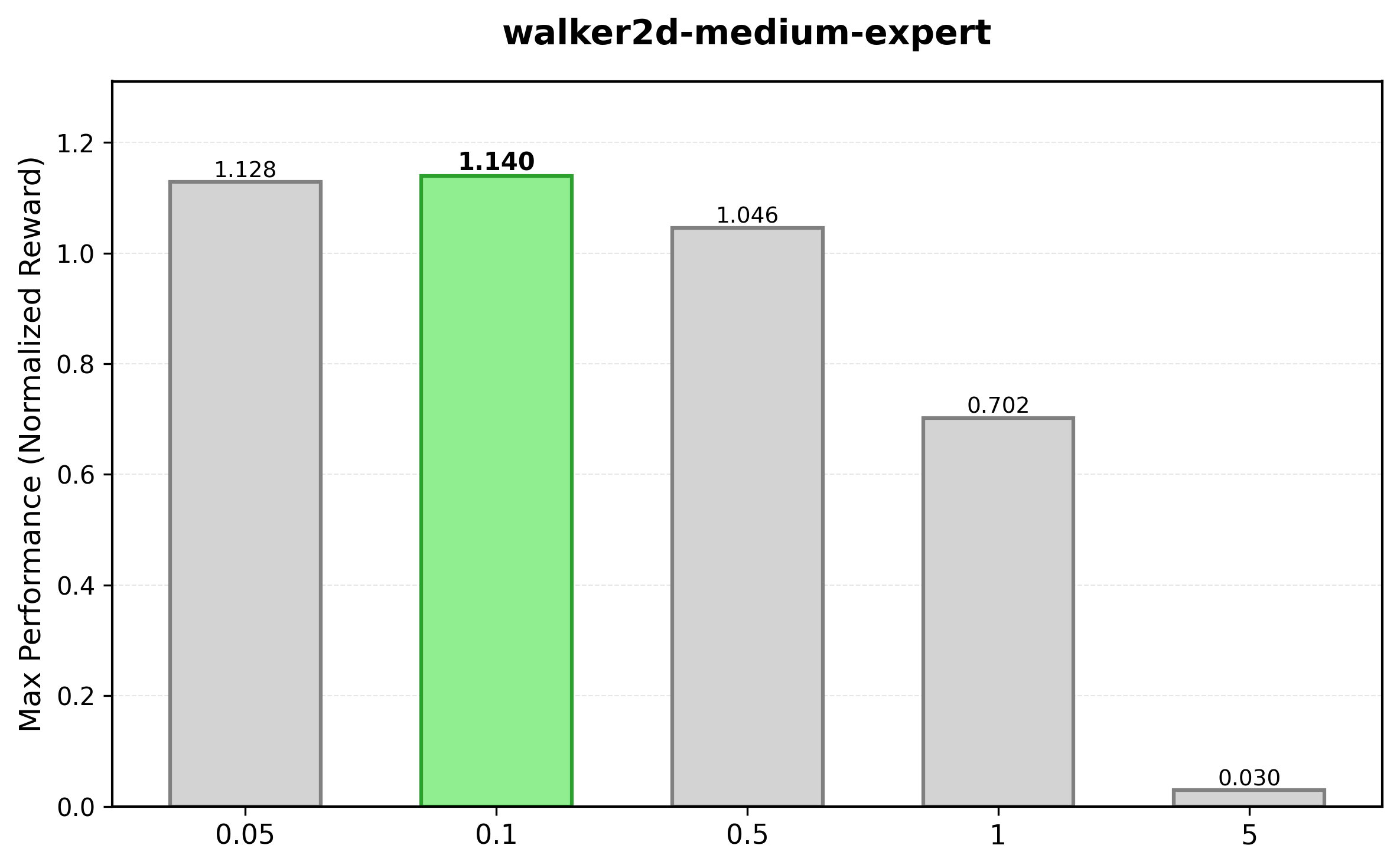}
        \includegraphics[width=0.32\textwidth]{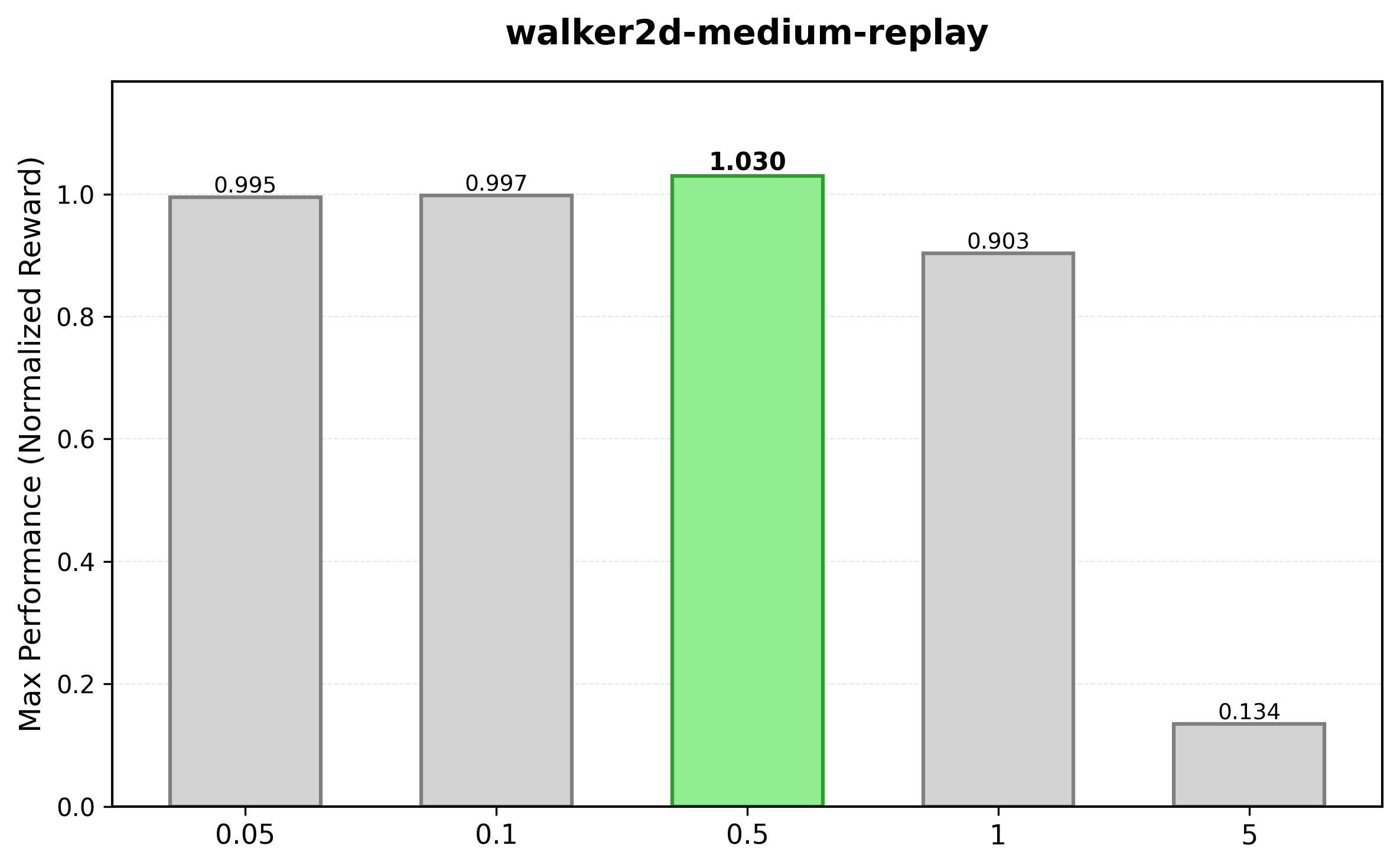}
        \caption{Walker2D environments.}
        \label{fig:appendix_ablation_eta_walker2d}
    \end{subfigure}
    \hfill
    \begin{subfigure}[b]{0.49\textwidth}
        \centering
        \includegraphics[width=0.32\textwidth]{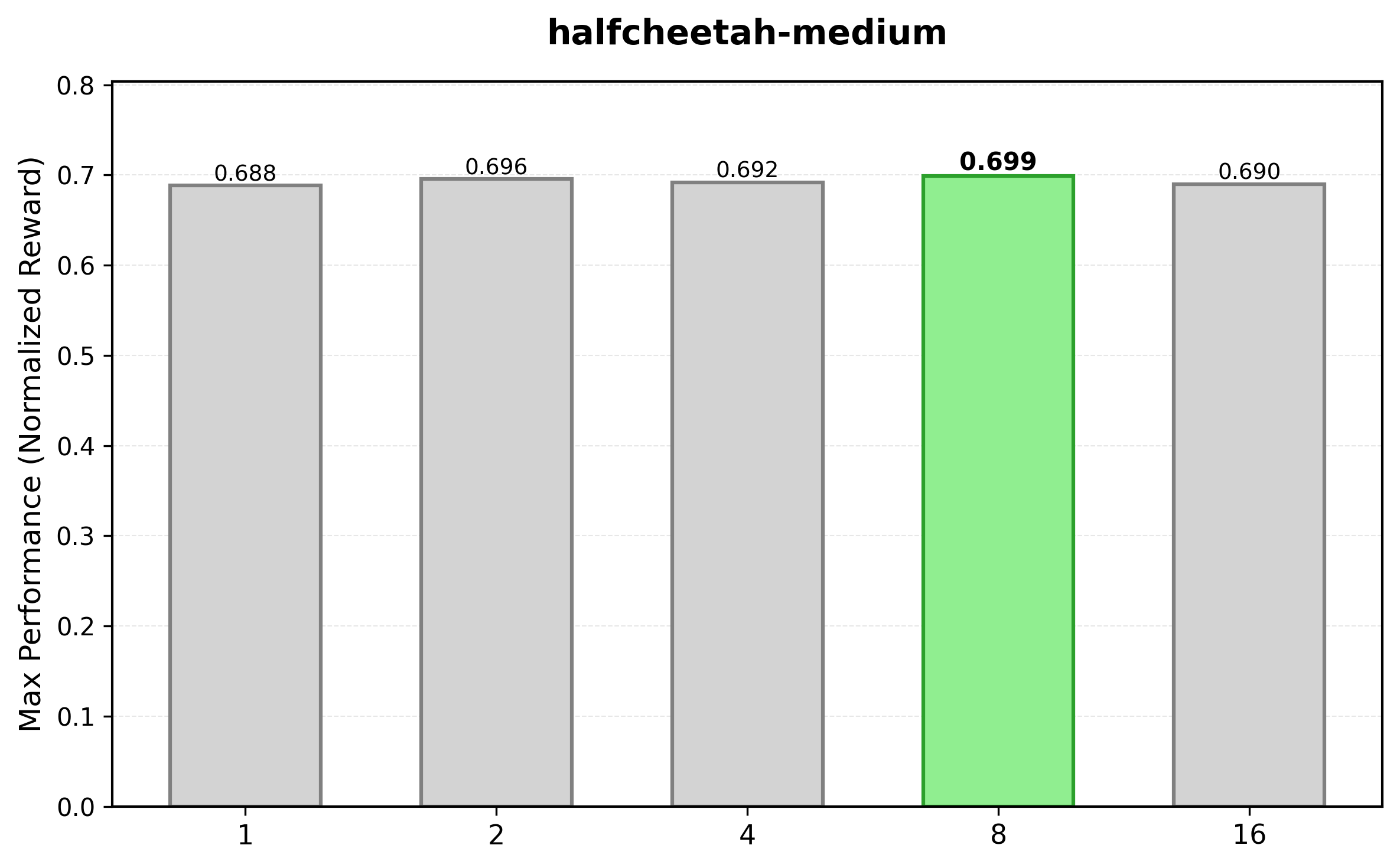}
        \includegraphics[width=0.32\textwidth]{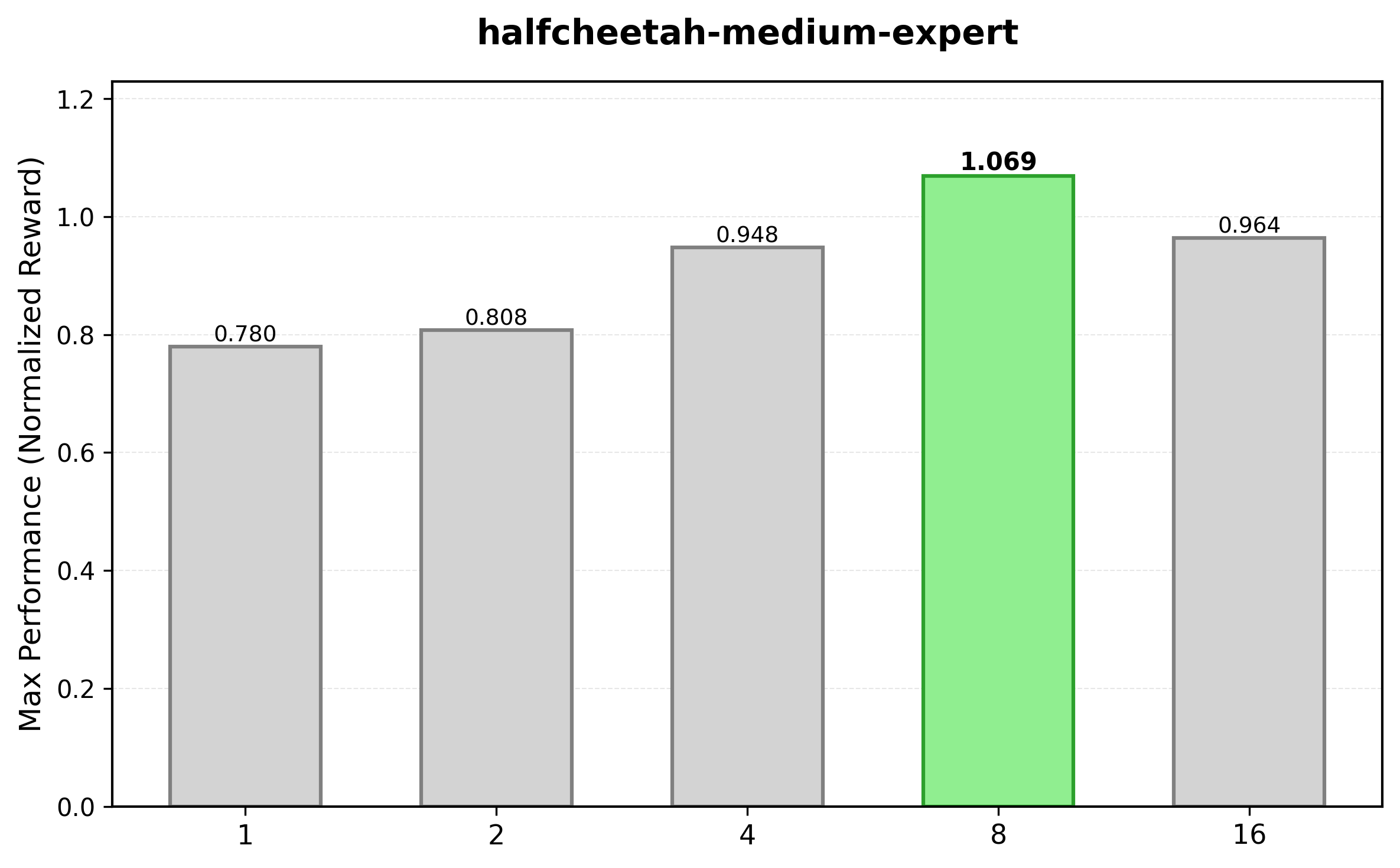}
        \includegraphics[width=0.32\textwidth]{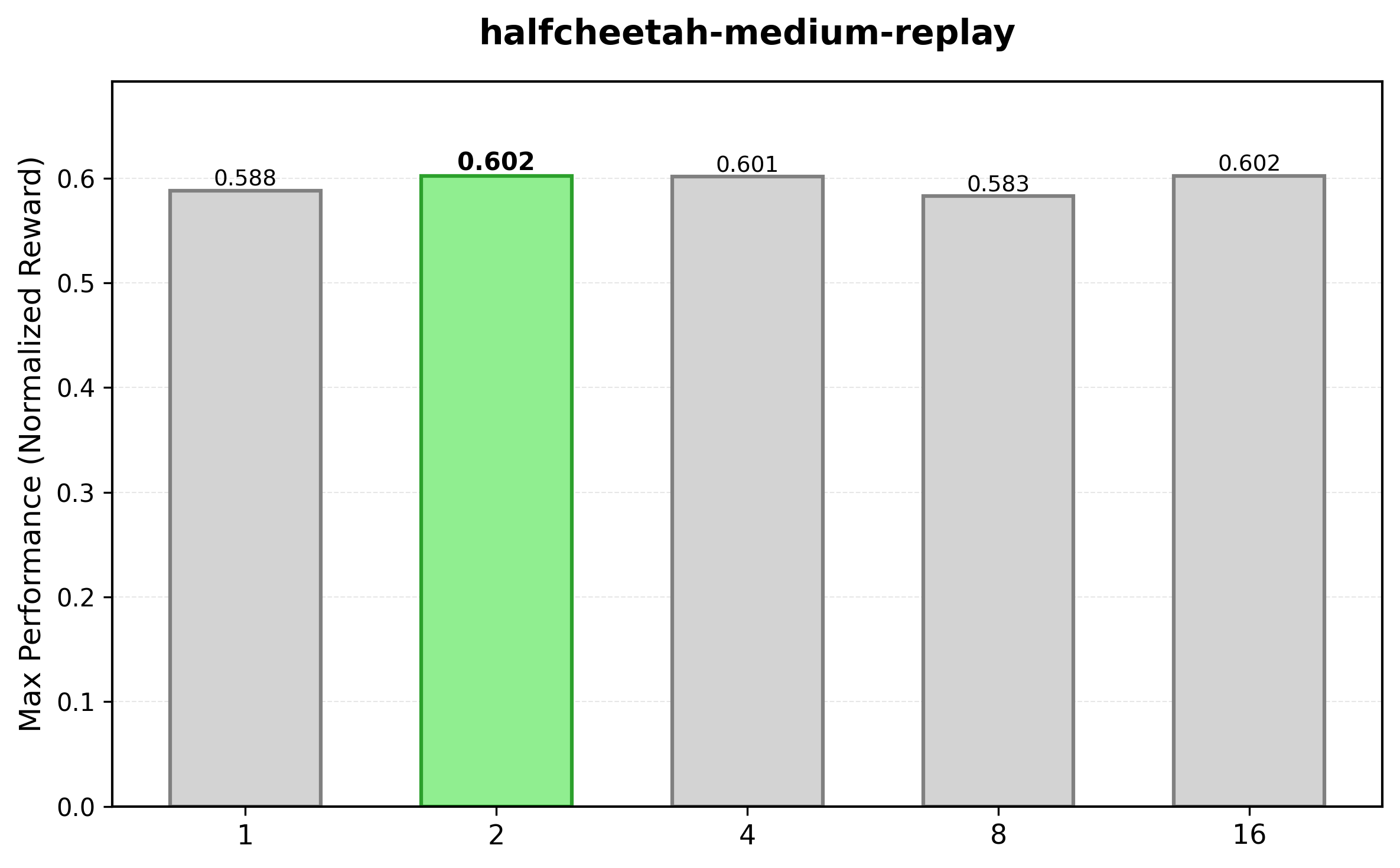}
        \caption{HalfCheetah environments.}
        \label{fig:appendix_ablation_steps_halfcheetah}
    \end{subfigure}
    \caption{Ablation study on $\eta$ (Walker2D) and steps (HalfCheetah).}
    \label{fig:appendix_ablation_eta_steps}
\end{figure}

\begin{figure}[htbp]
    \centering
    \begin{subfigure}[b]{0.49\textwidth}
        \centering
        \includegraphics[width=0.32\textwidth]{figures/ablation_num_steps/hopper-medium.png}
        \includegraphics[width=0.32\textwidth]{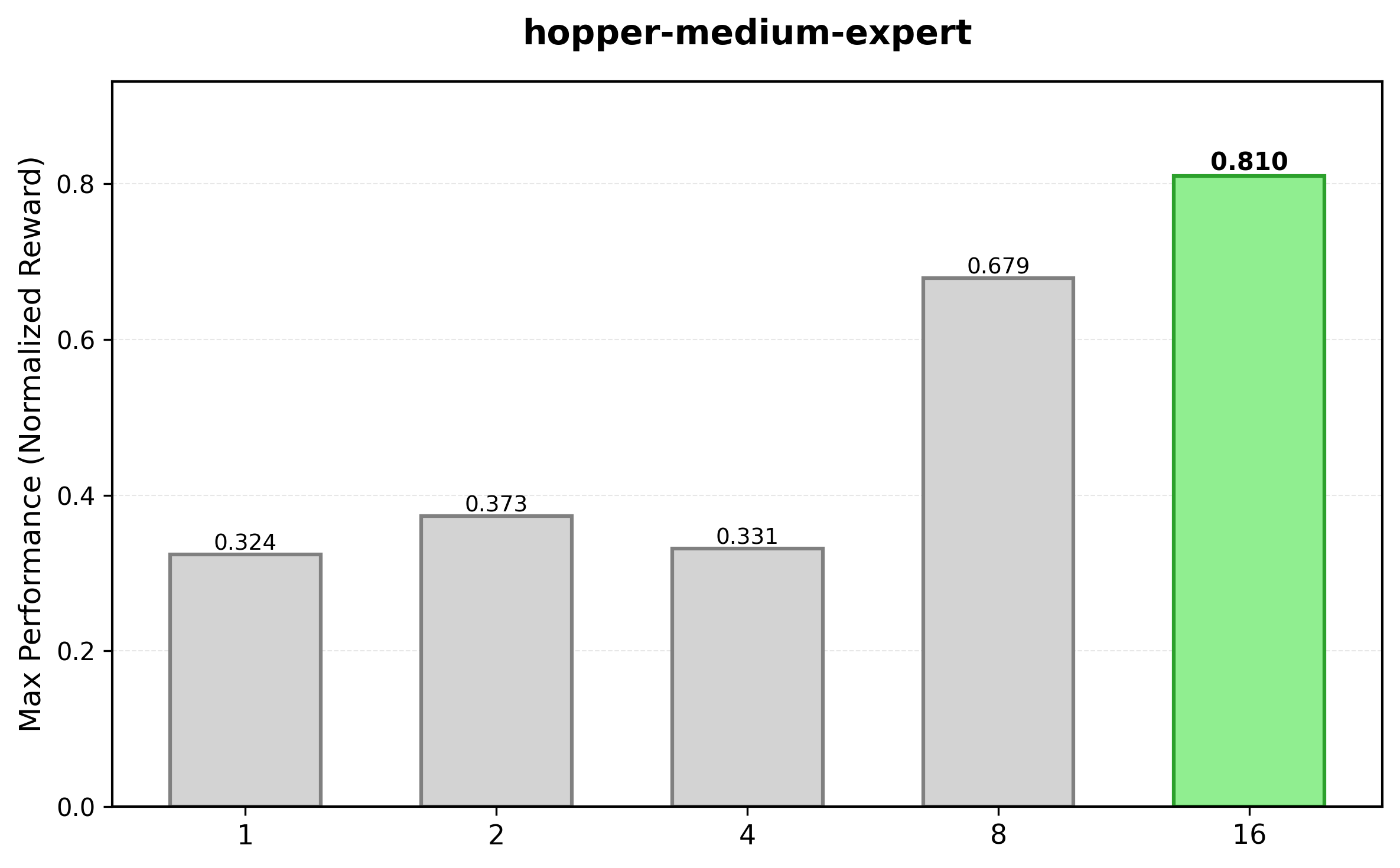}
        \includegraphics[width=0.32\textwidth]{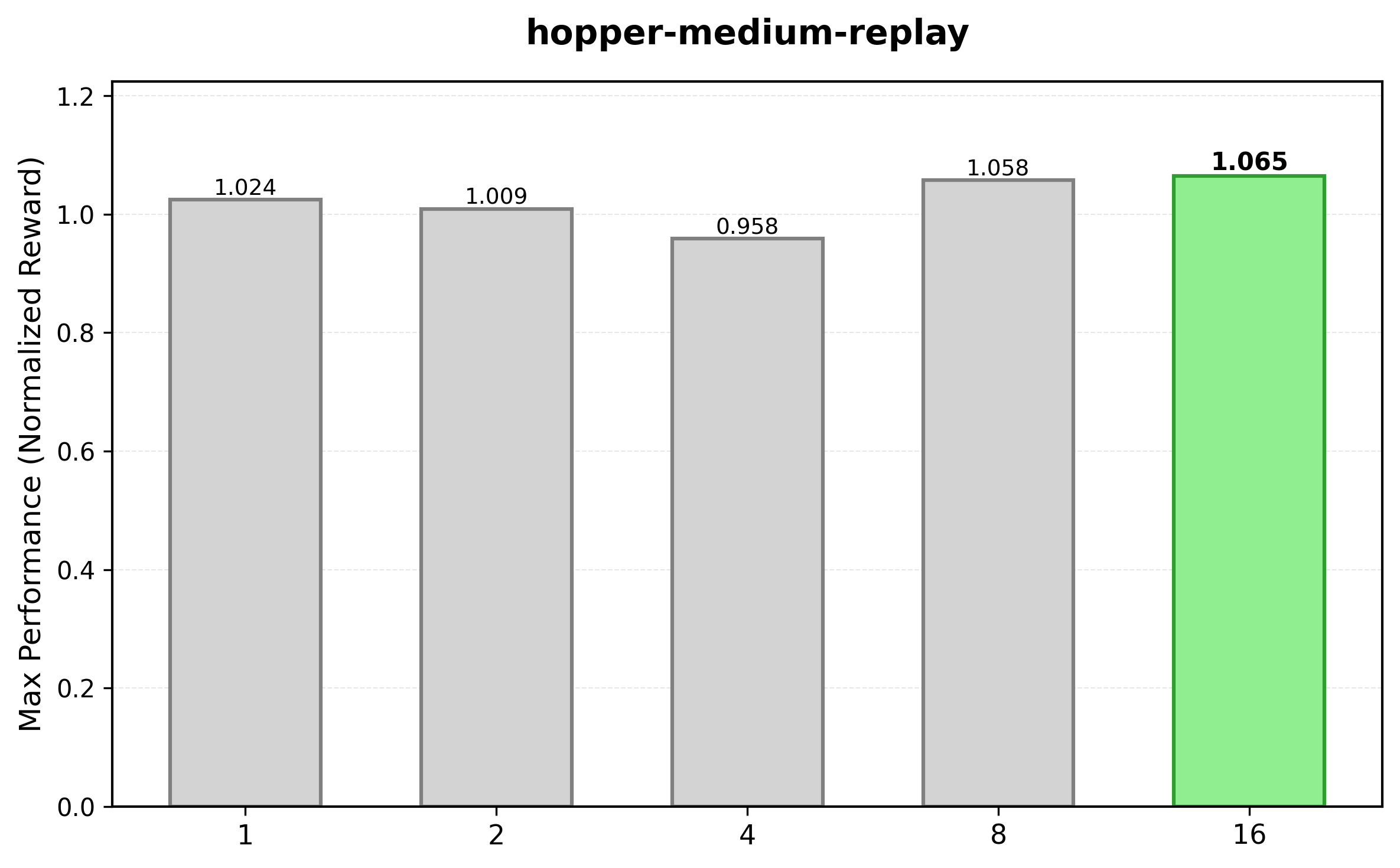}
        \caption{Hopper environments.}
        \label{fig:appendix_ablation_steps_hopper}
    \end{subfigure}
    \hfill
    \begin{subfigure}[b]{0.49\textwidth}
        \centering
        \includegraphics[width=0.32\textwidth]{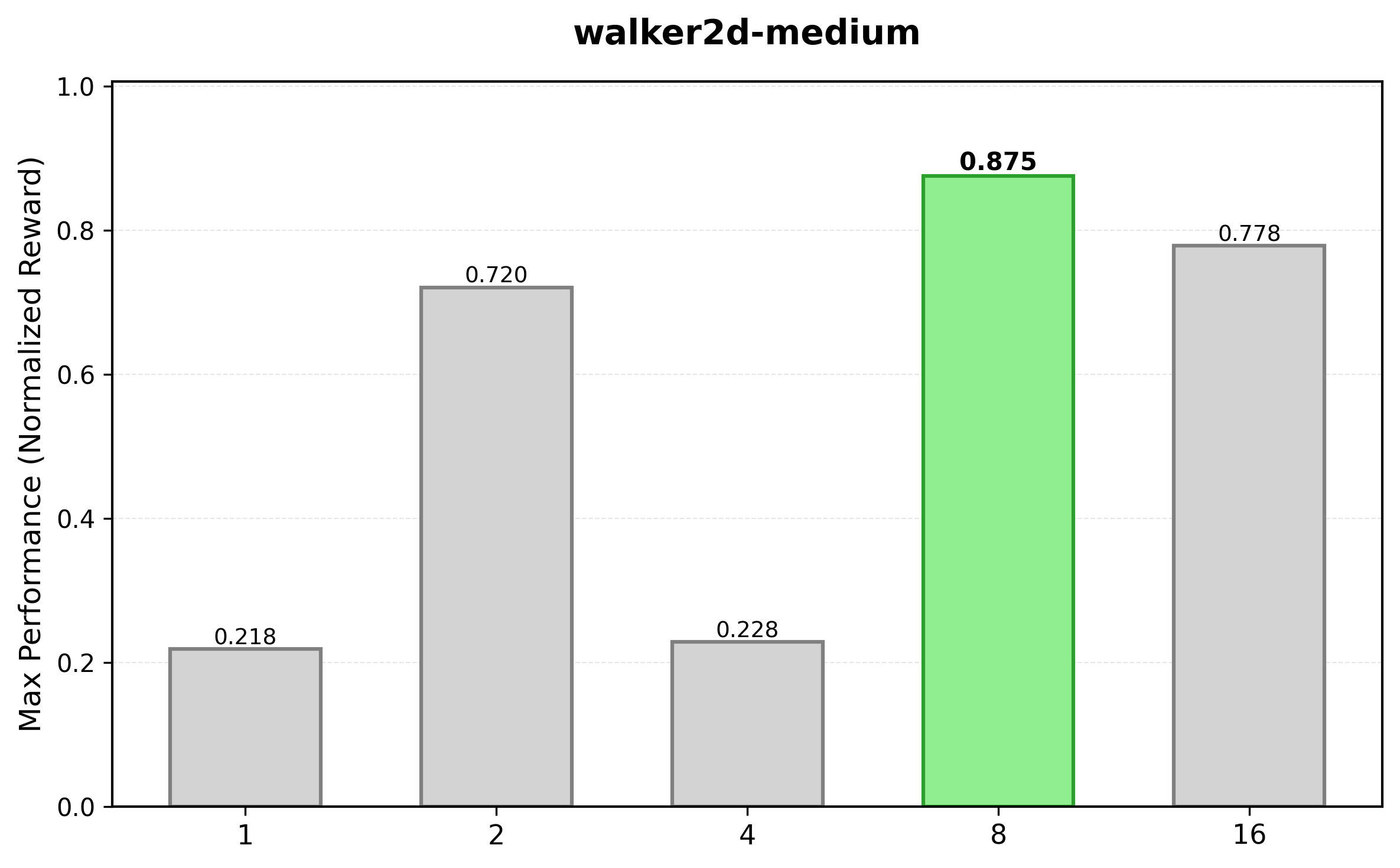}
        \includegraphics[width=0.32\textwidth]{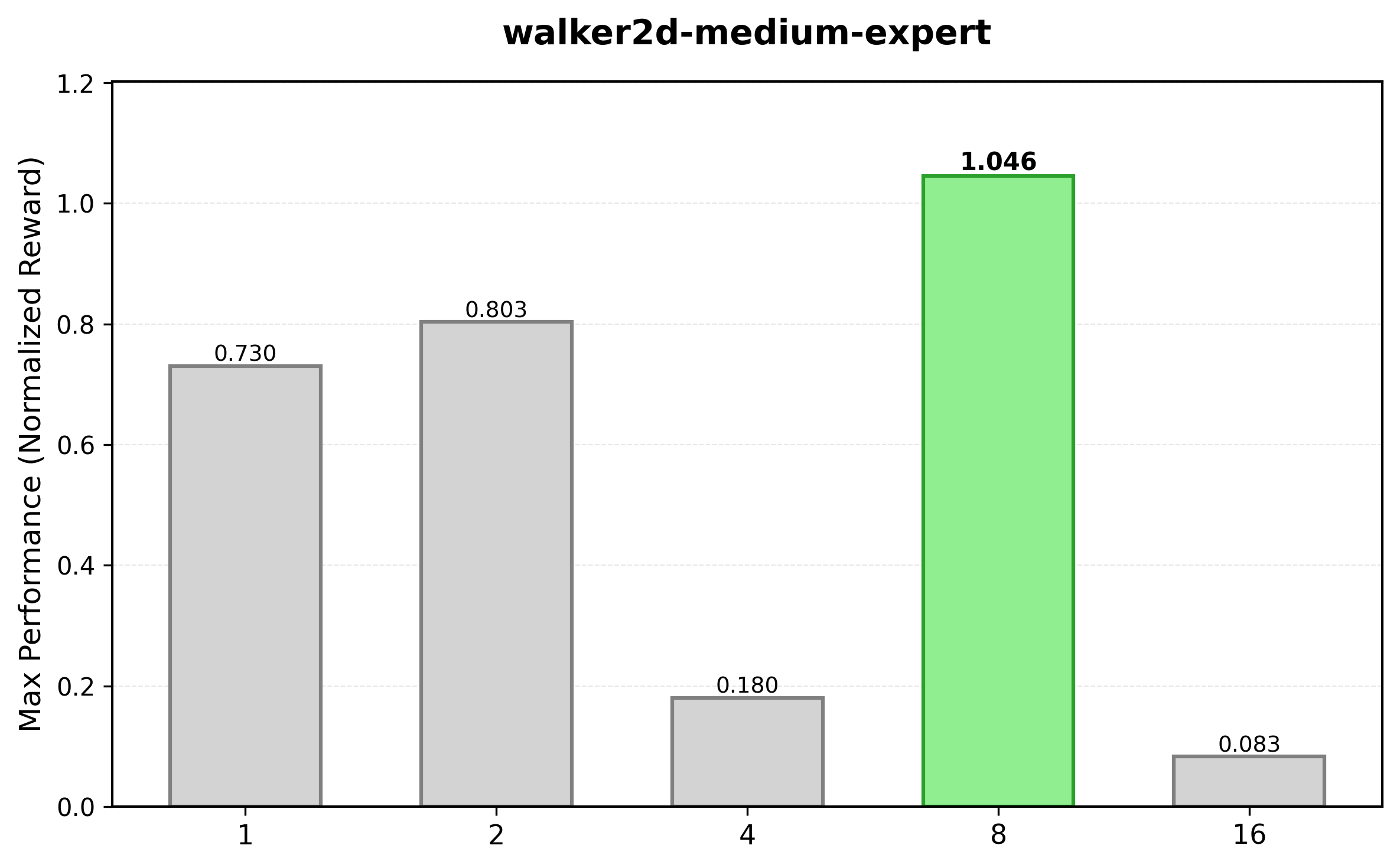}
        \includegraphics[width=0.32\textwidth]{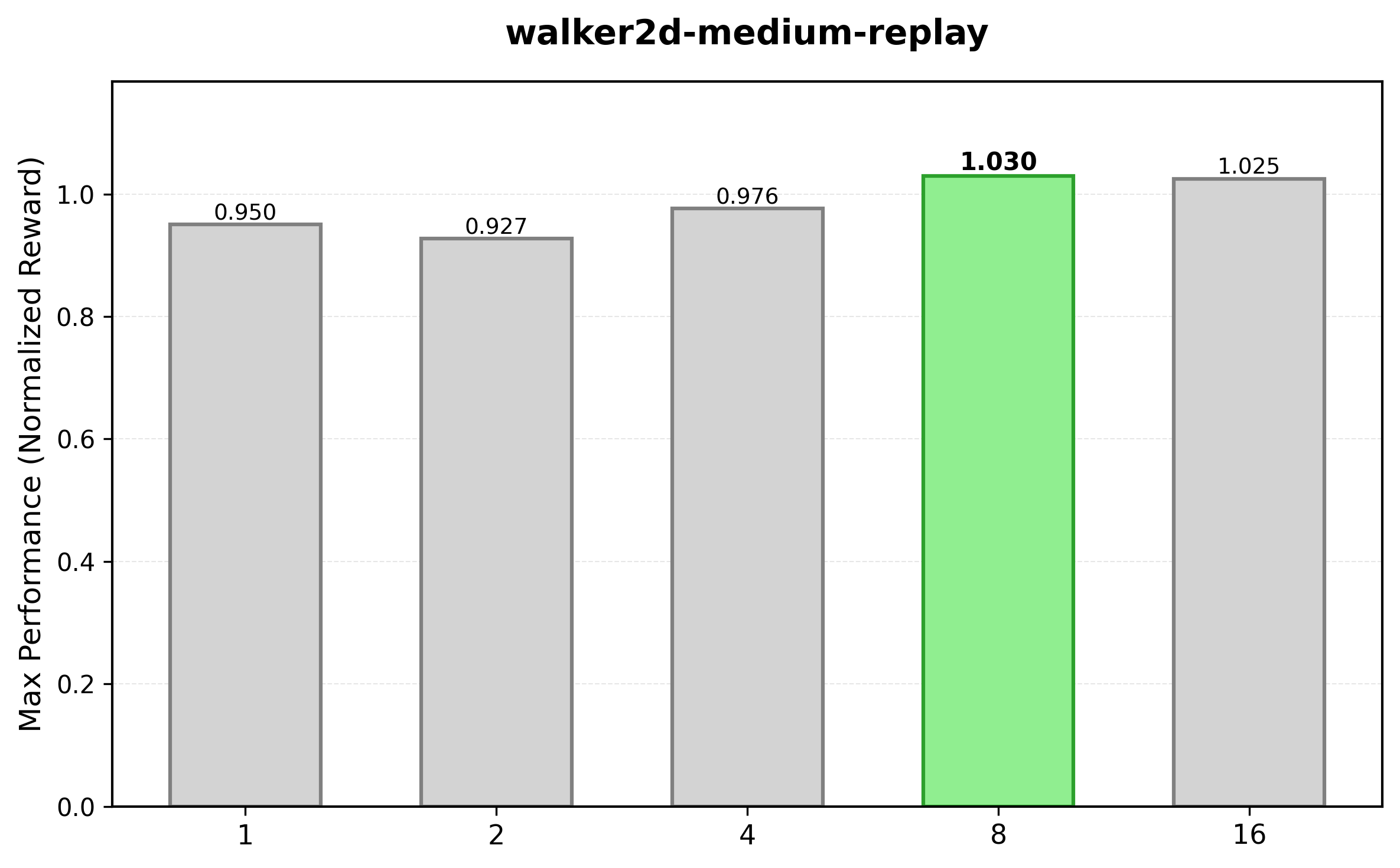}
        \caption{Walker2D environments.}
        \label{fig:appendix_ablation_steps_walker2d}
    \end{subfigure}
    \caption{Ablation study on num steps.}
    \label{fig:appendix_ablation_steps}
\end{figure}

\begin{figure}[htbp]
    \centering
    \begin{subfigure}[b]{0.49\textwidth}
        \centering
        \includegraphics[width=0.32\textwidth]{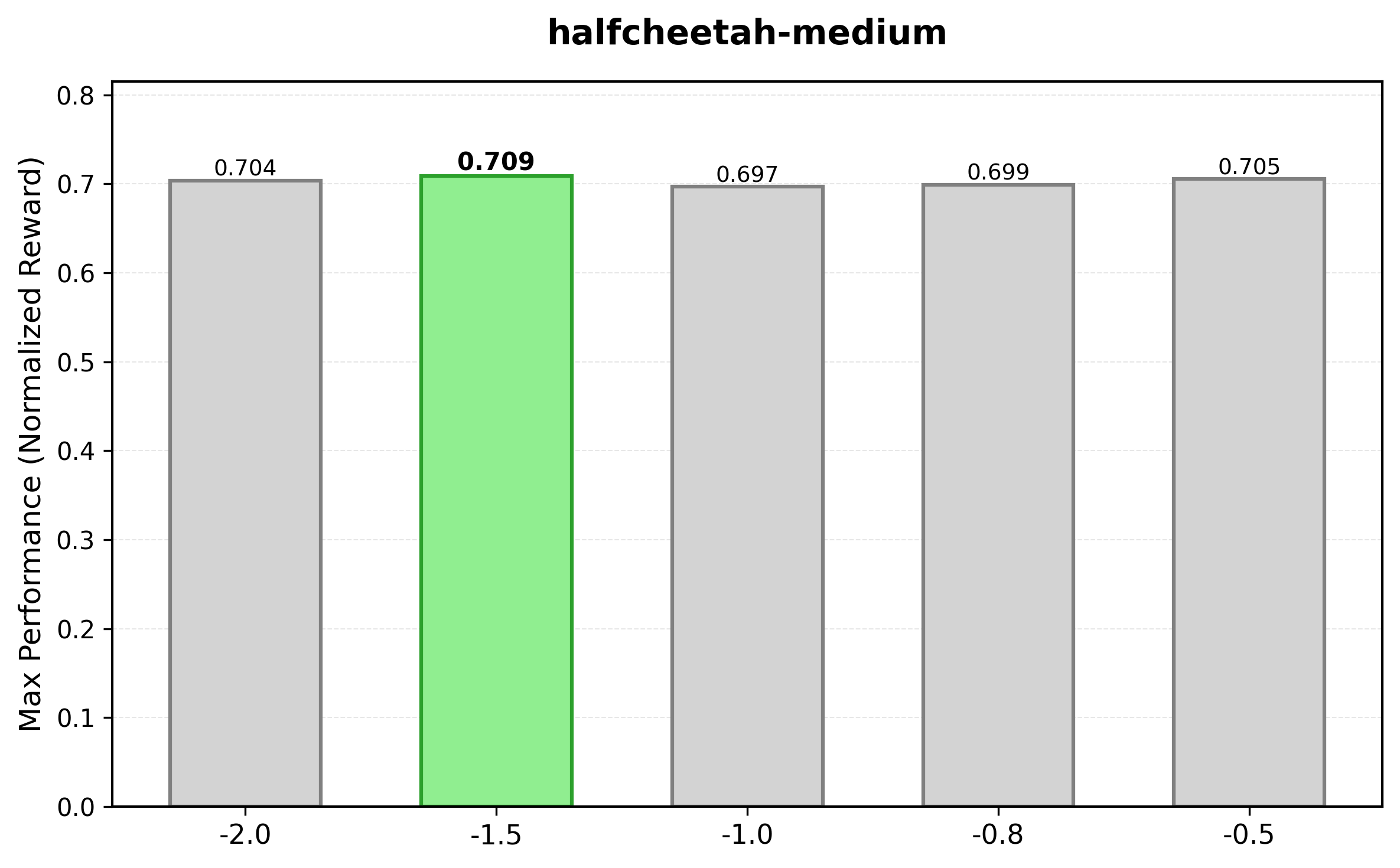}
        \includegraphics[width=0.32\textwidth]{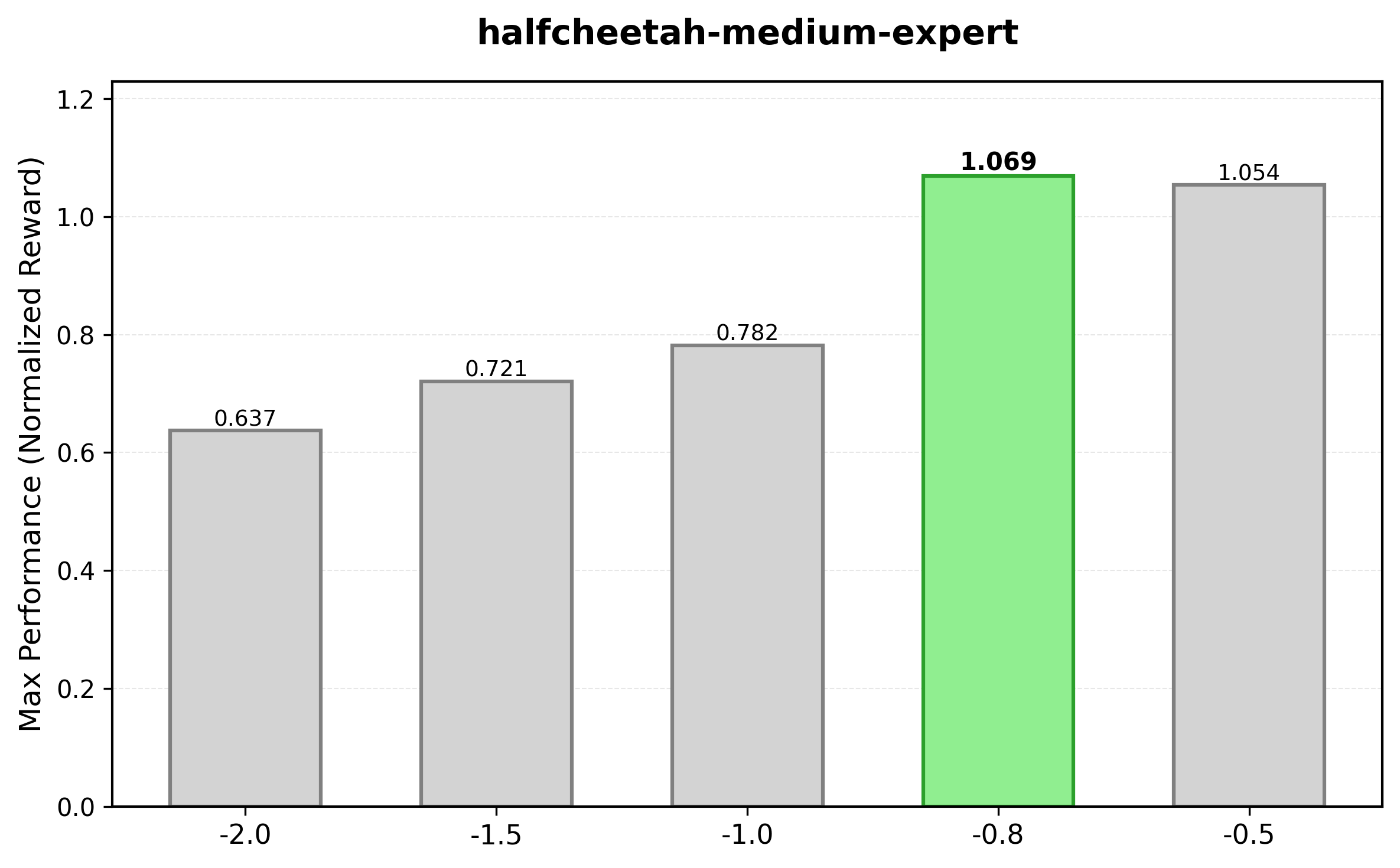}
        \includegraphics[width=0.32\textwidth]{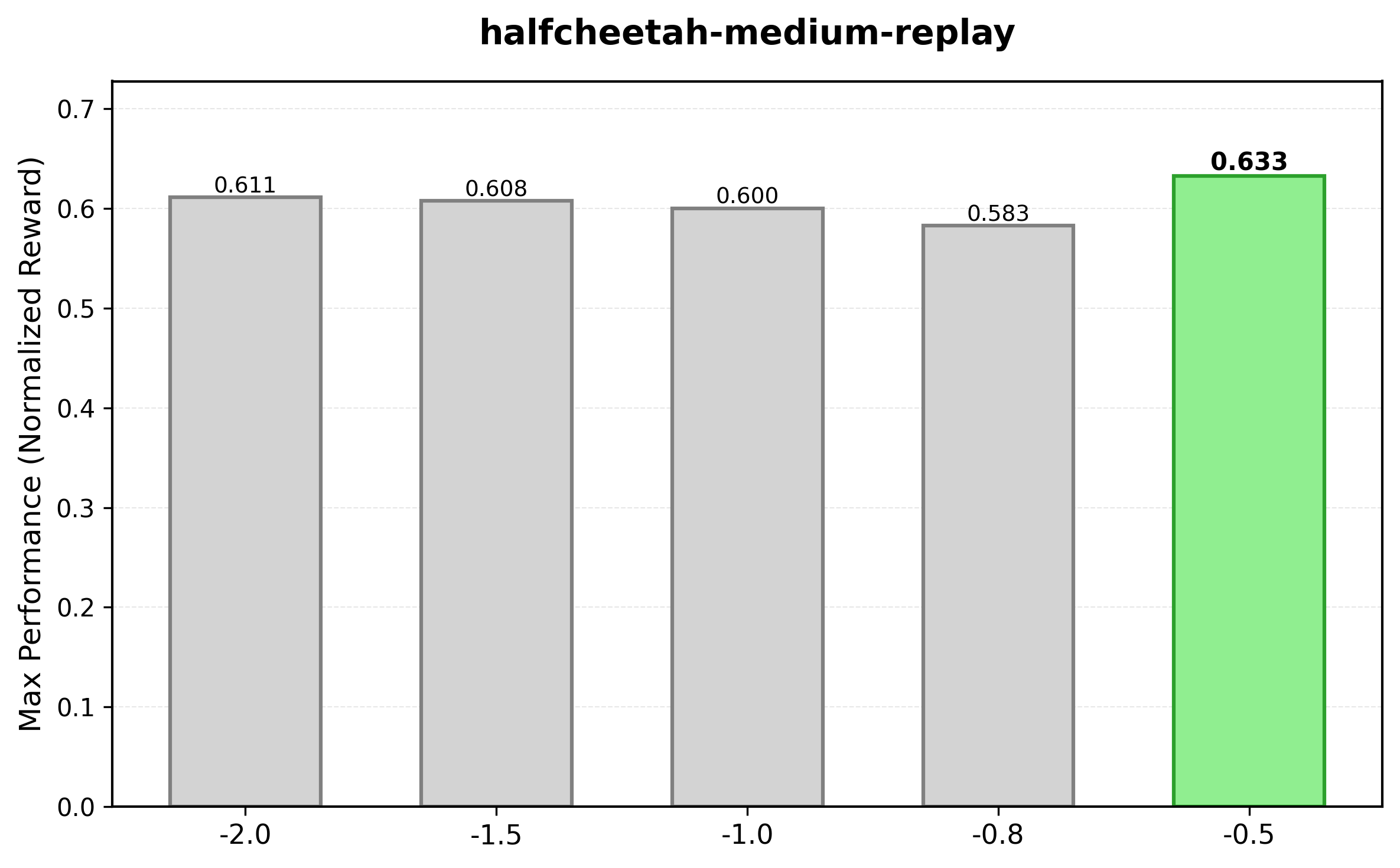}
        \caption{HalfCheetah environments.}
        \label{fig:ablation_pmean_halfcheetah}
    \end{subfigure}
    \hfill
    \begin{subfigure}[b]{0.49\textwidth}
        \centering
        \includegraphics[width=0.32\textwidth]{figures/ablation_pmean/hopper-medium.png}
        \includegraphics[width=0.32\textwidth]{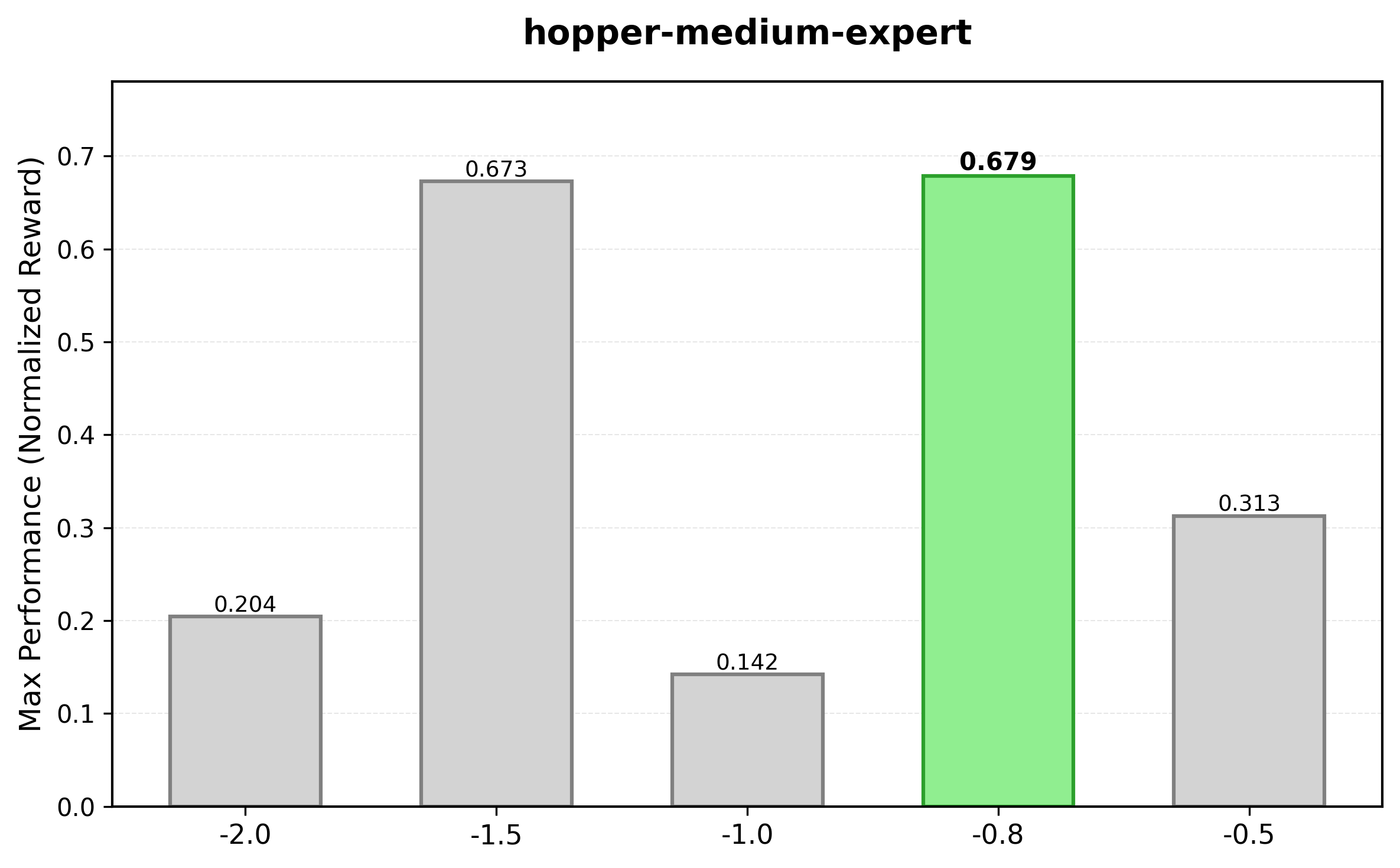}
        \includegraphics[width=0.32\textwidth]{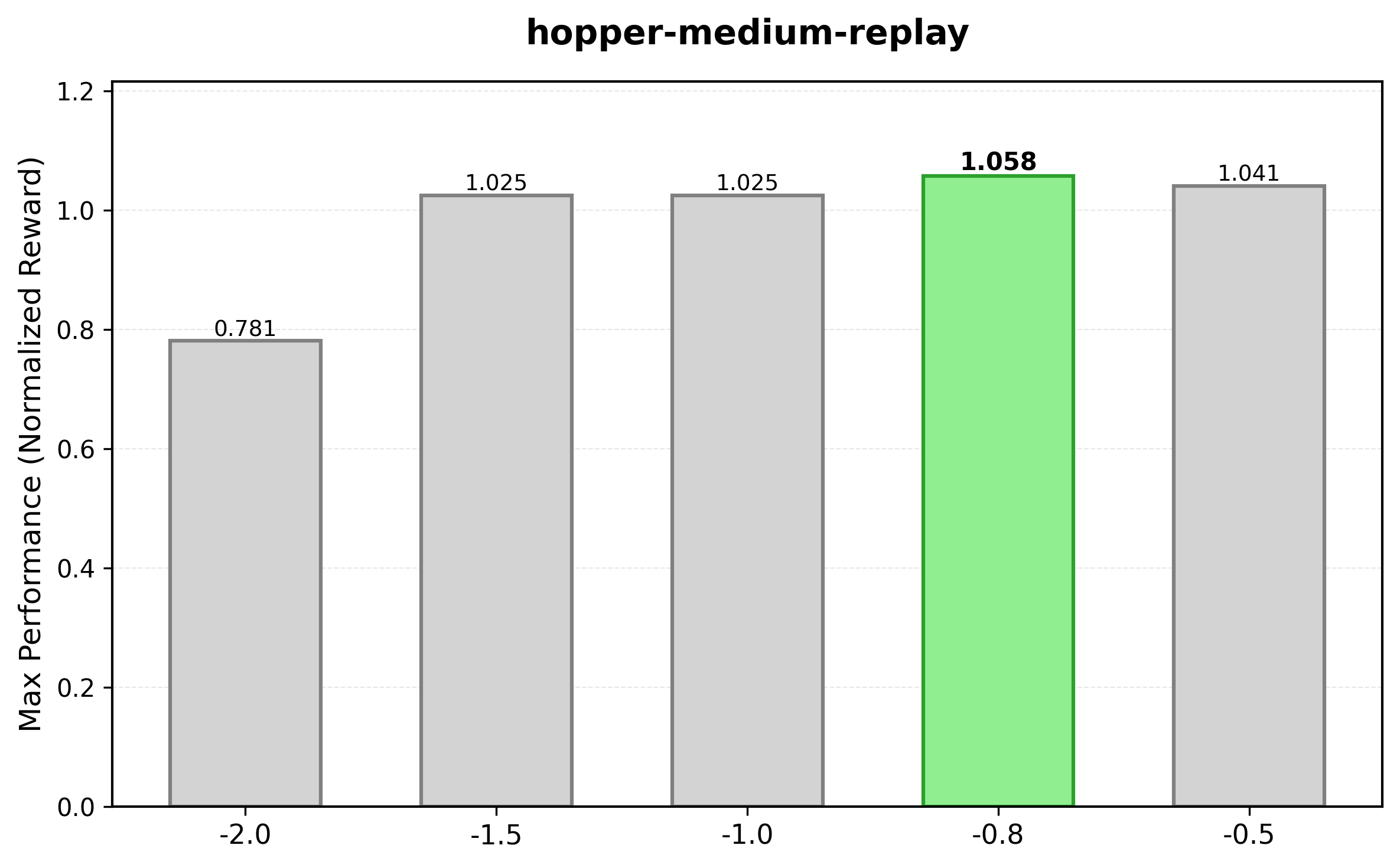}
        \caption{Hopper environments.}
        \label{fig:ablation_pmean_hopper}
    \end{subfigure}
    \caption{Ablation study on $P_{\text{mean}}$.}
    \label{fig:ablation_pmean}
\end{figure}

\begin{figure}[htbp]
    \centering
    \begin{subfigure}[b]{0.49\textwidth}
        \centering
        \includegraphics[width=0.32\textwidth]{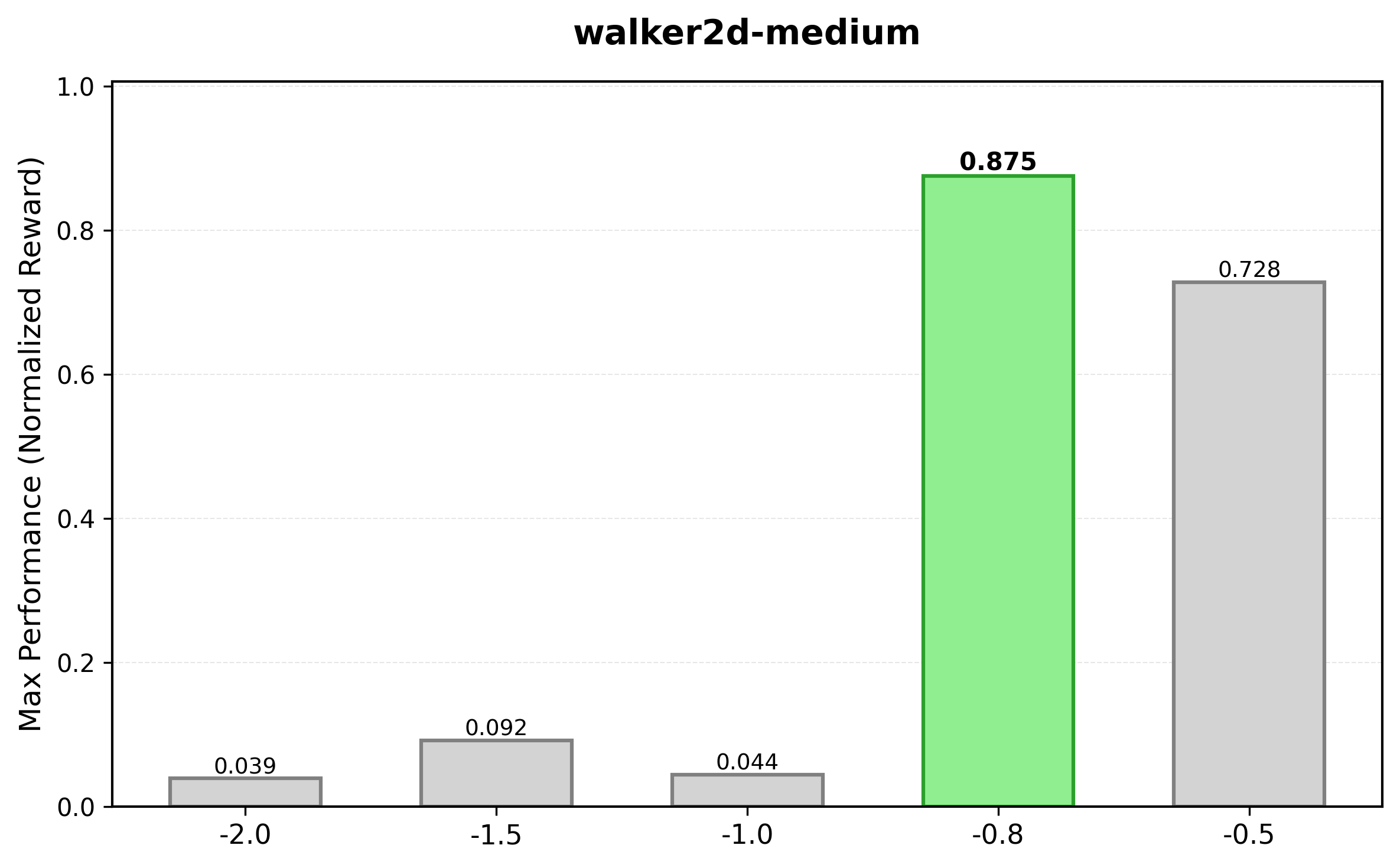}
        \includegraphics[width=0.32\textwidth]{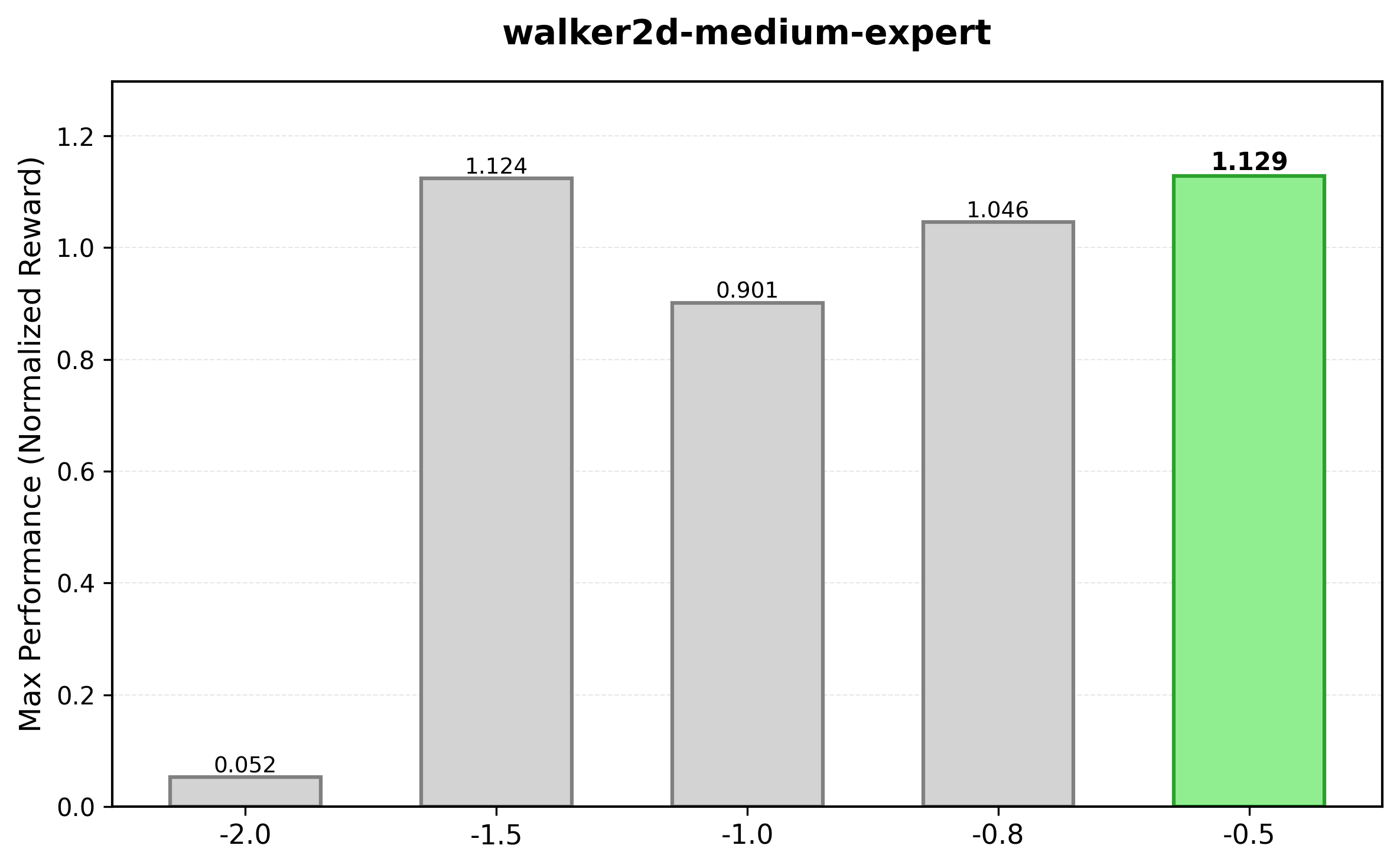}
        \includegraphics[width=0.32\textwidth]{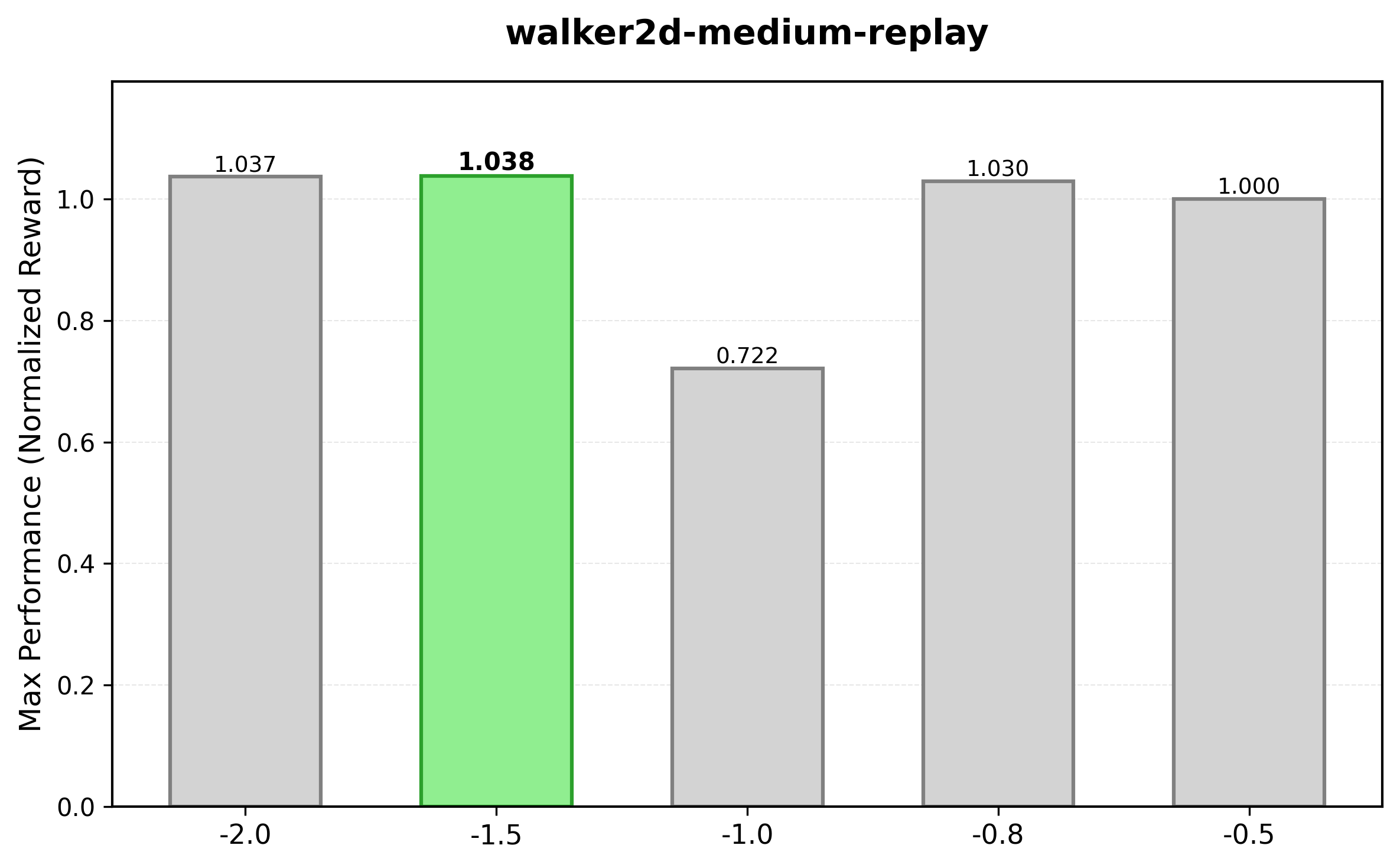}
        \caption{Walker2D environments.}
        \label{fig:ablation_pmean_walker2d}
    \end{subfigure}
    \hfill
    \begin{subfigure}[b]{0.49\textwidth}
        \centering
        \includegraphics[width=0.32\textwidth]{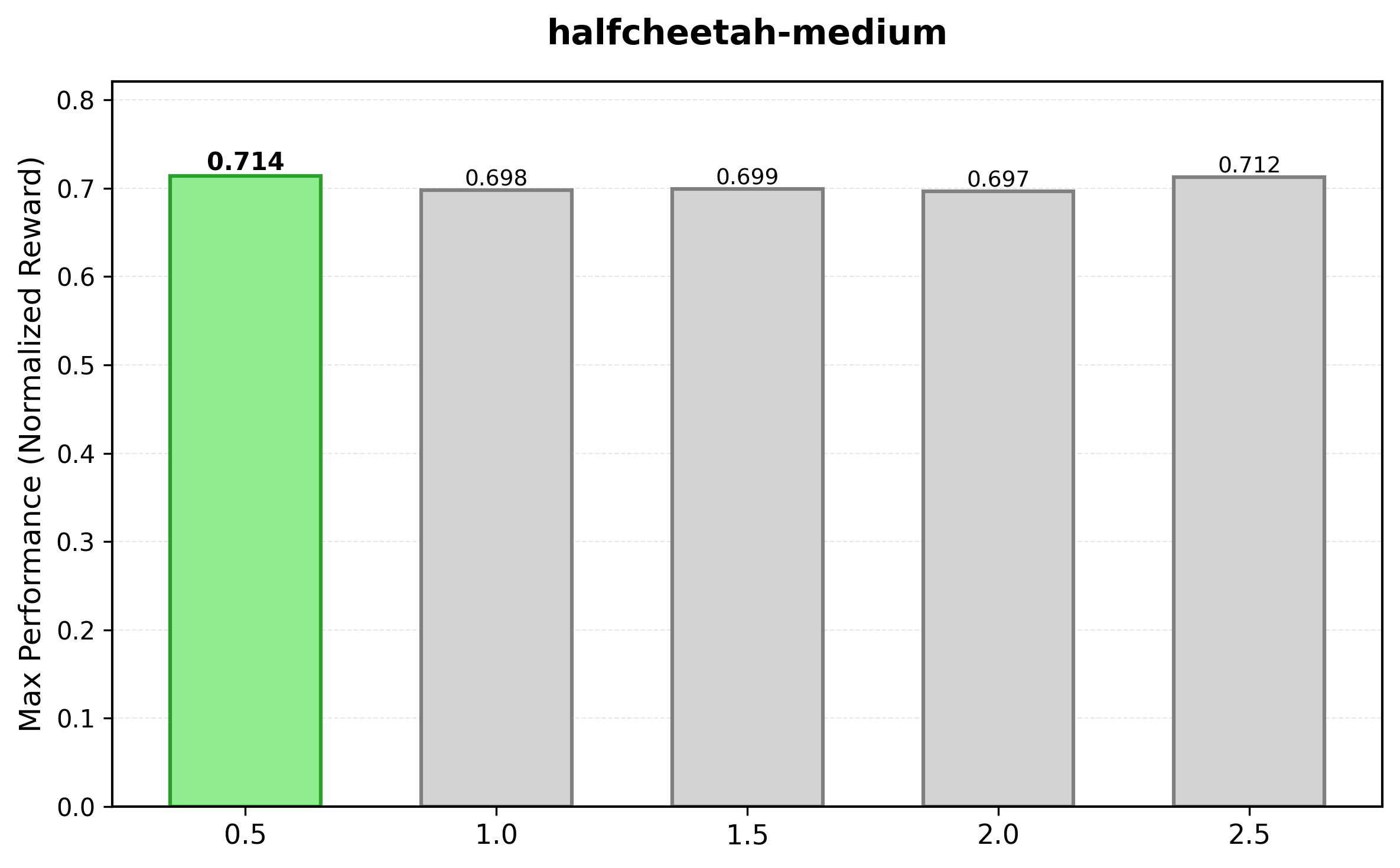}
        \includegraphics[width=0.32\textwidth]{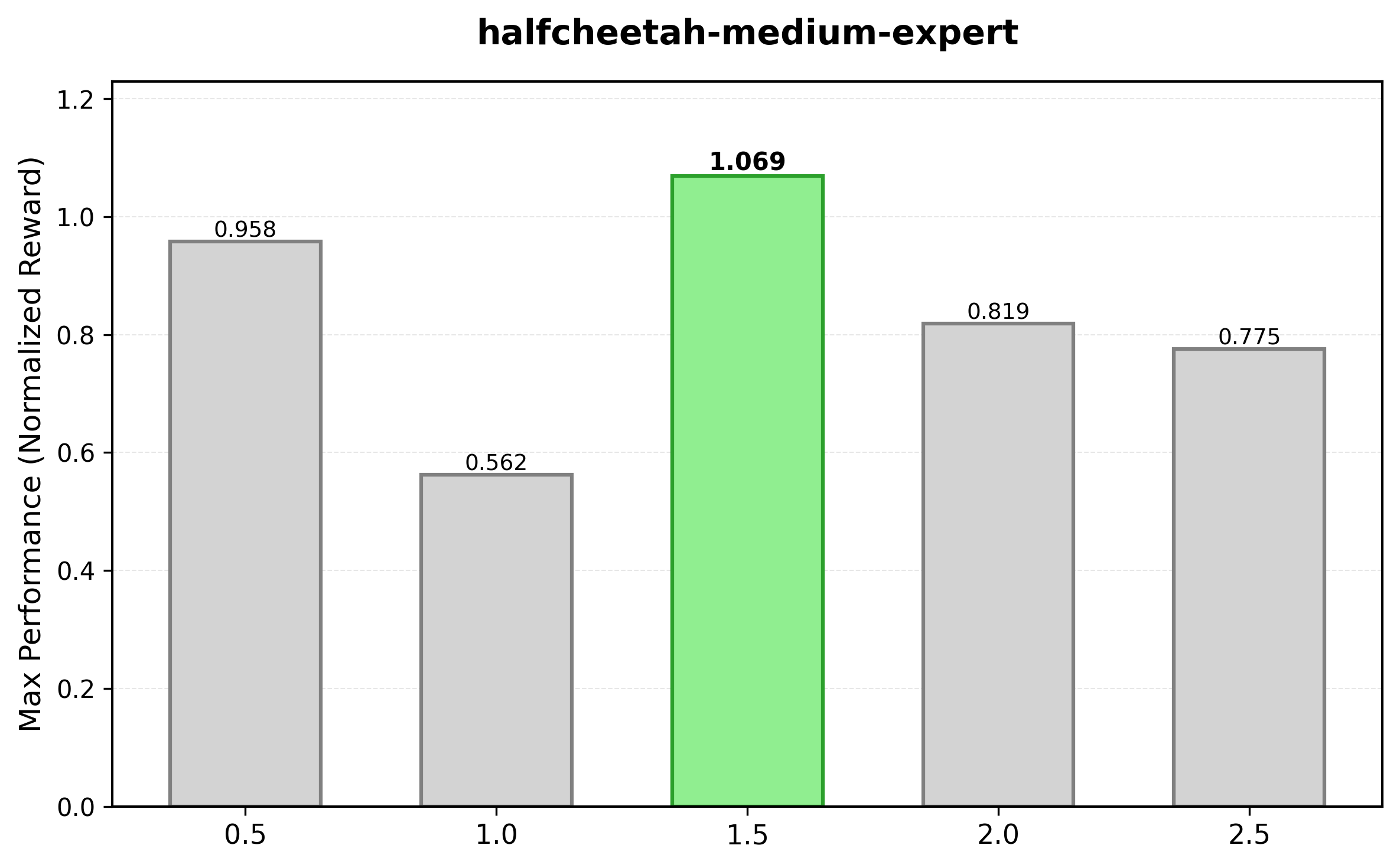}
        \includegraphics[width=0.32\textwidth]{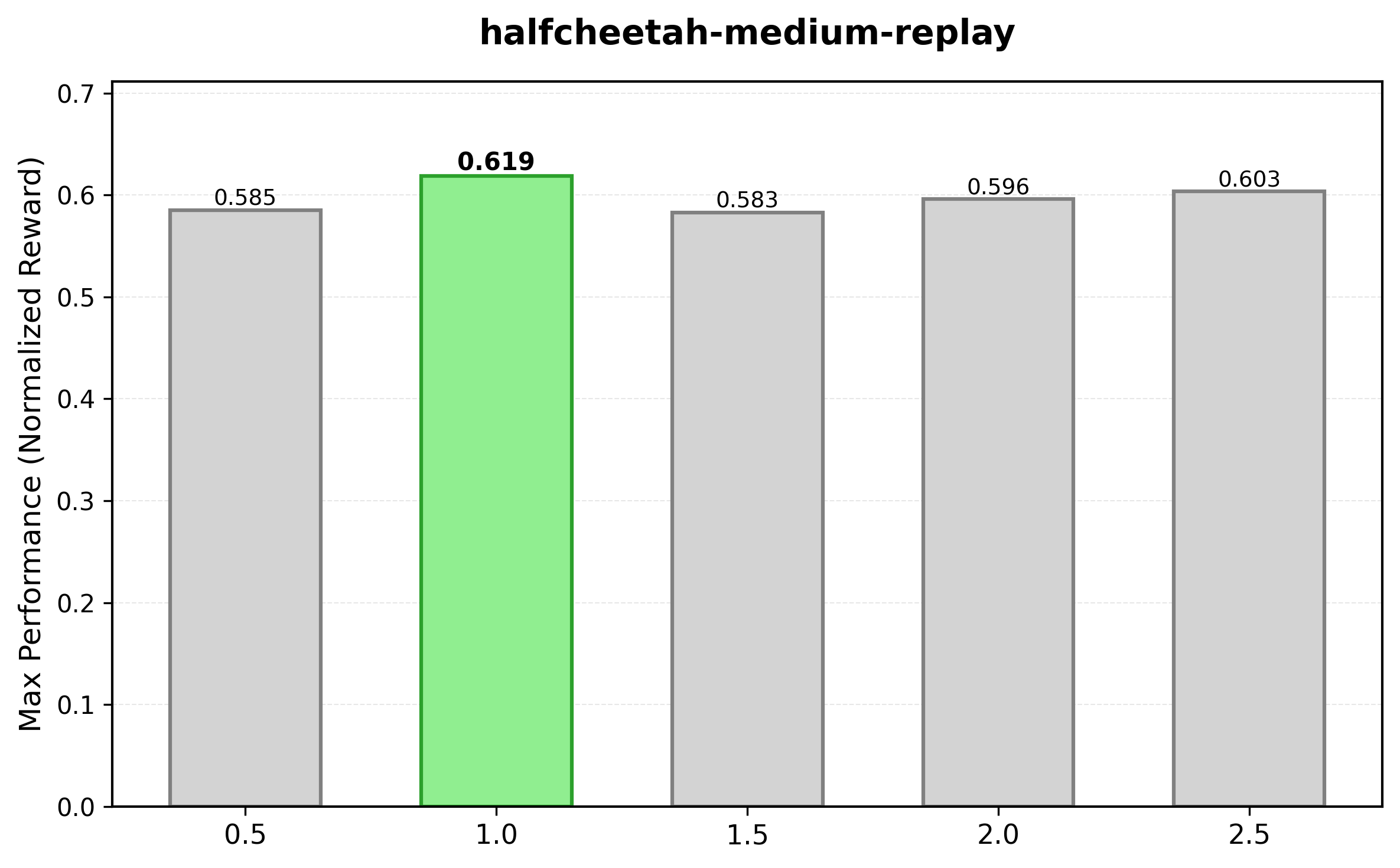}
        \caption{HalfCheetah environments.}
        \label{fig:ablation_pstd_halfcheetah}
    \end{subfigure}
    \caption{Ablation study on $P_{\text{mean}}$ (Walker2D) and $P_{\text{std}}$ (HalfCheetah).}
    \label{fig:ablation_pmean_rest_pstd_start}
\end{figure}

\begin{figure}[htbp]
    \centering
    \begin{subfigure}[b]{0.49\textwidth}
        \centering
        \includegraphics[width=0.32\textwidth]{figures/ablation_pstd/hopper-medium.png}
        \includegraphics[width=0.32\textwidth]{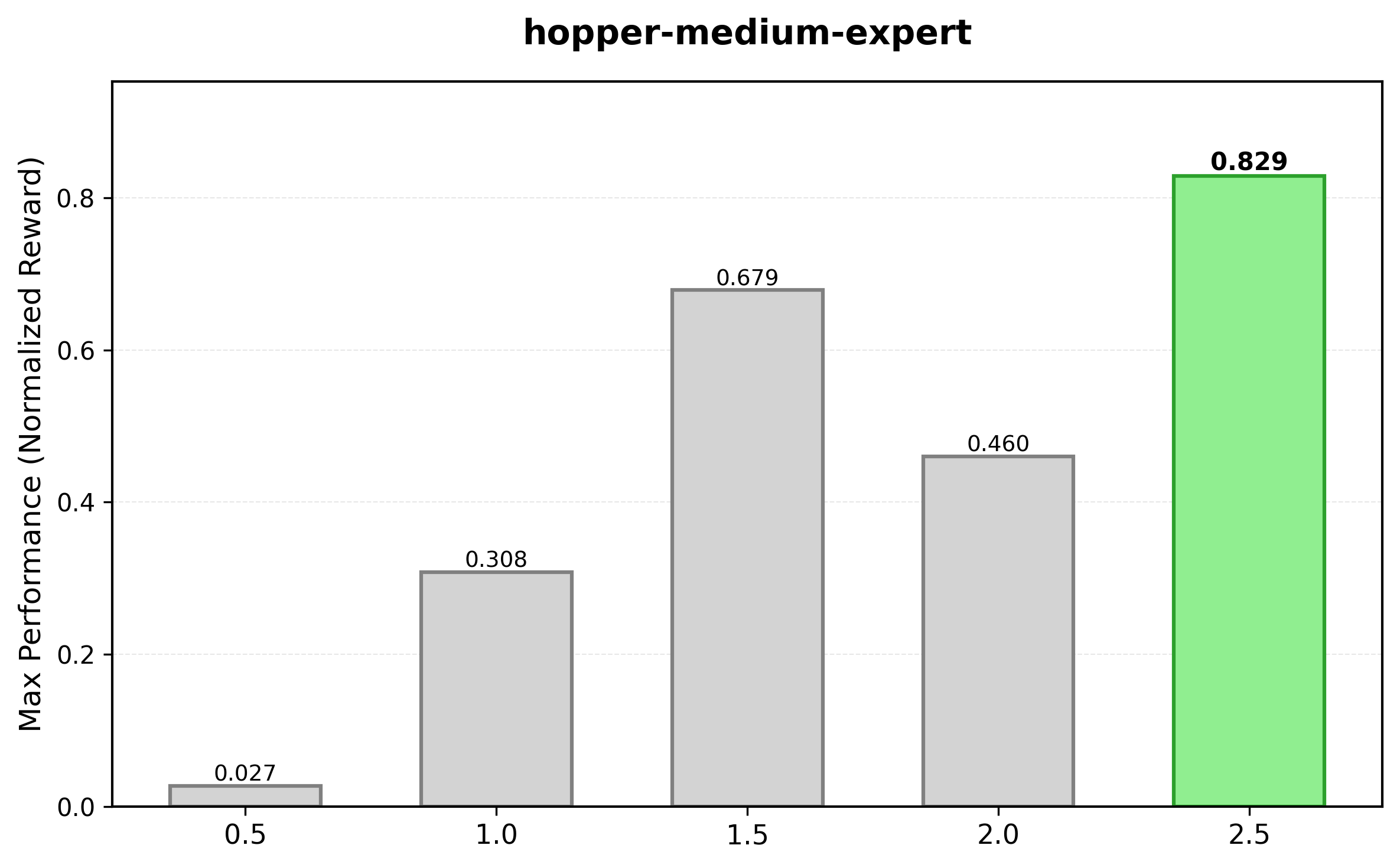}
        \includegraphics[width=0.32\textwidth]{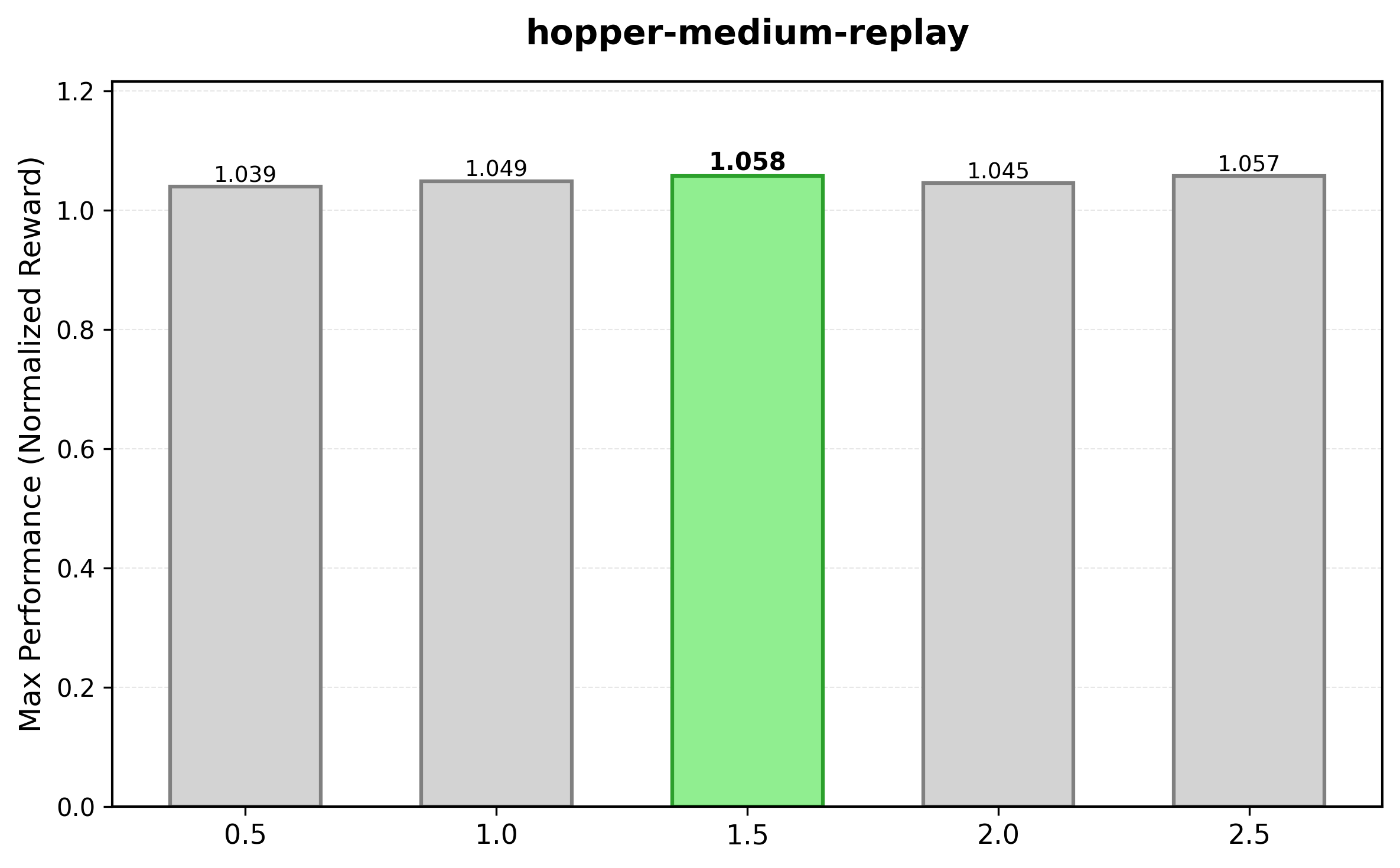}
        \caption{Hopper environments.}
        \label{fig:ablation_pstd_hopper}
    \end{subfigure}
    \hfill
    \begin{subfigure}[b]{0.49\textwidth}
        \centering
        \includegraphics[width=0.32\textwidth]{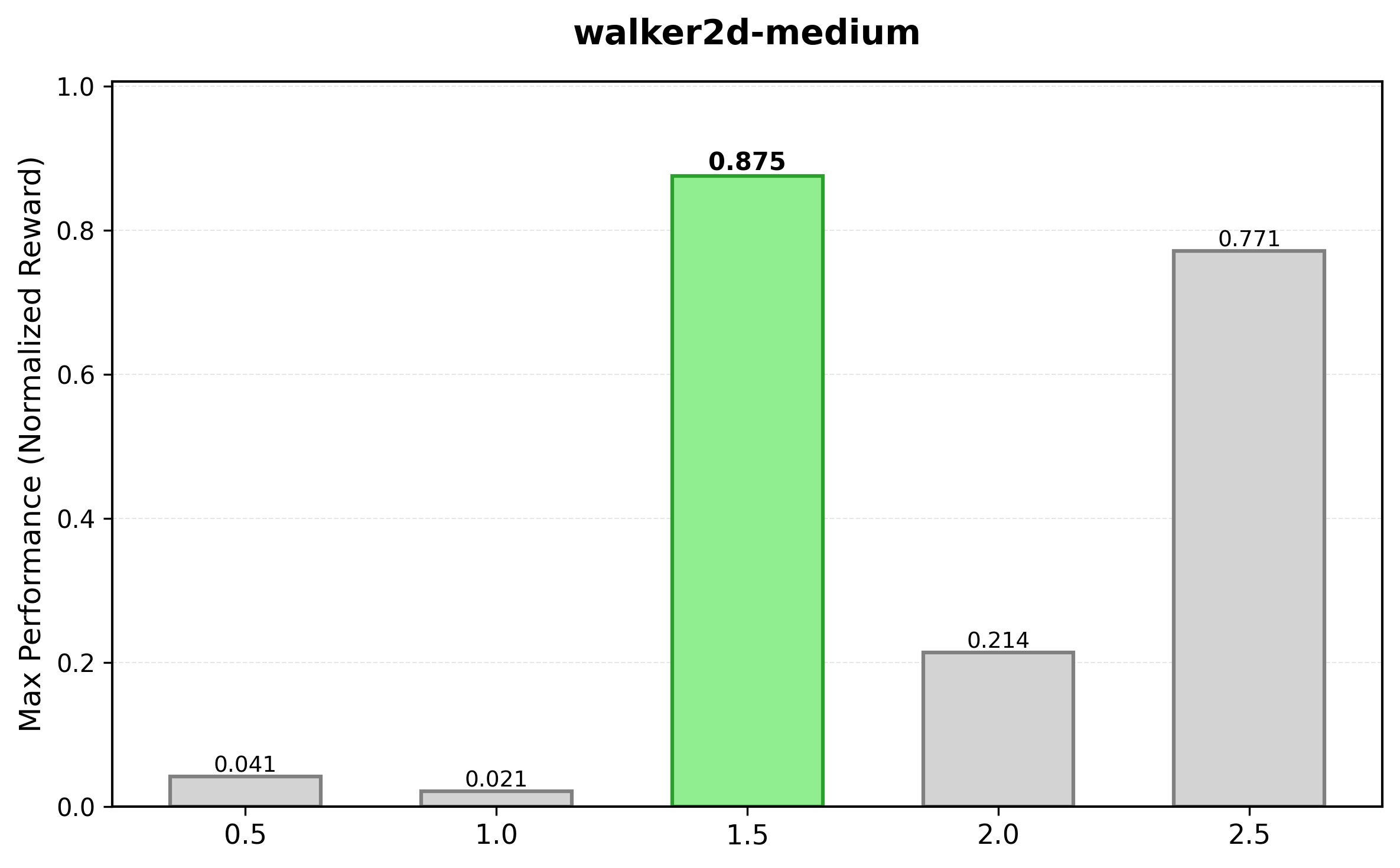}
        \includegraphics[width=0.32\textwidth]{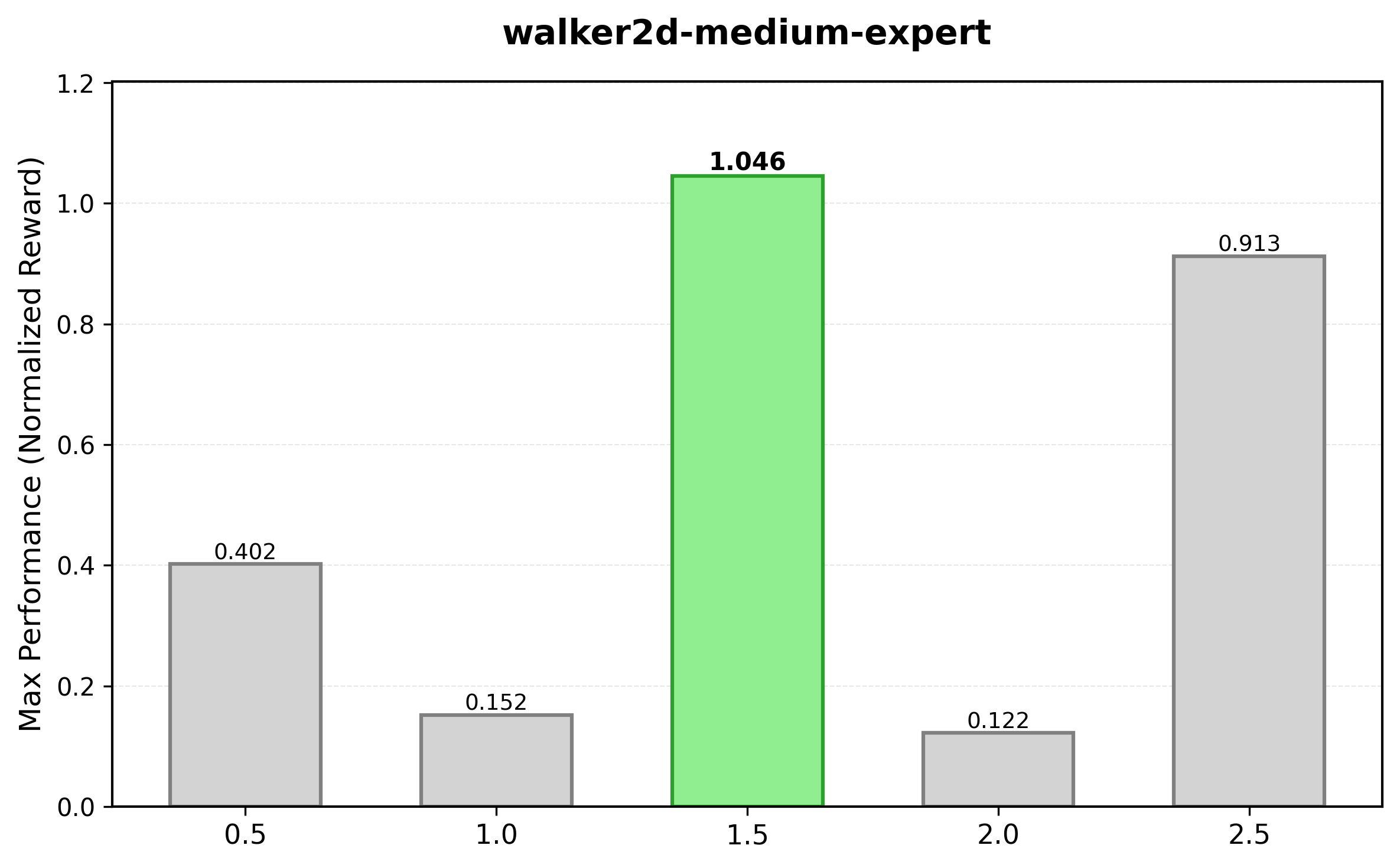}
        \includegraphics[width=0.32\textwidth]{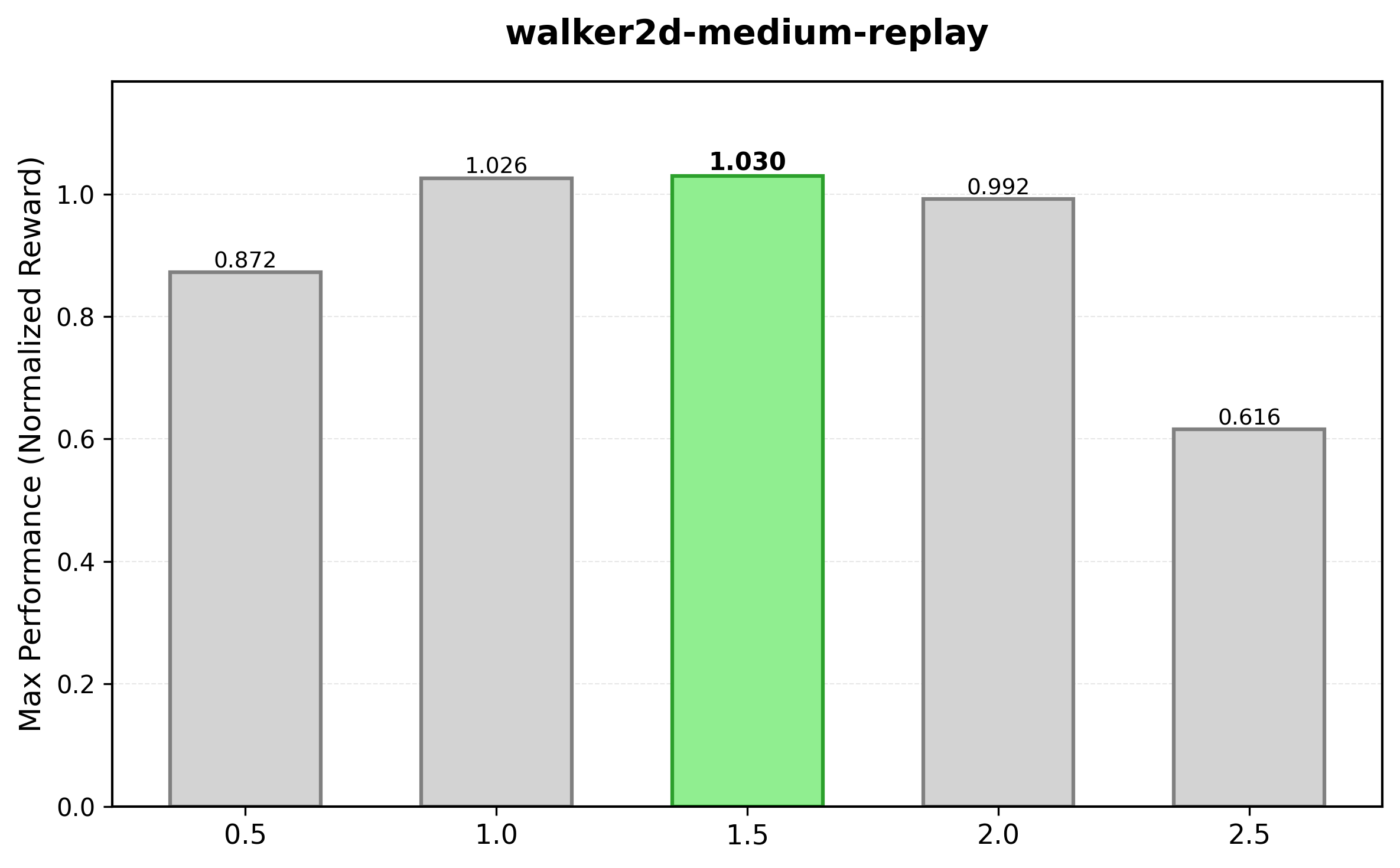}
        \caption{Walker2D environments.}
        \label{fig:ablation_pstd_walker2d}
    \end{subfigure}
    \caption{Ablation study on $P_{\text{std}}$.}
    \label{fig:ablation_pstd}
\end{figure}

\subsection{Detailed Computational Cost Results}
\label{appendix:comp_results}

\subsubsection{Baseline Comparison}

Table~\ref{tab:training_time_detailed} reports the detailed computational cost for MoMa QL compared with Diffusion-BC and Consistency-BC. We report both \textbf{Training Time} (seconds per 1,000 steps) and \textbf{Inference Time} (seconds per episode) for MoMa QL with 2 sampling steps. Baseline training times are converted from reported values (seconds per 500-step epoch $\times 2$) for fair comparison.

MoMa QL maintains a consistent training cost ($\approx 21-23$s per 1k steps) across all tasks, demonstrating superior scalability compared to baselines which slow down significantly on higher-dimensional tasks (Adroit, Kitchen). All methods use comparable batch sizes and hardware configurations.

\begin{table*}[h]
\centering
\caption{Computational cost on D4RL benchmarks. We compare Training Time (s/1k steps) against baselines including Diffusion-BC, Consistency-BC, Diffusion-QL, and Consistency-AC. For fair comparison, baselines are normalized to s/1k steps: Diffusion-QL and Consistency-AC are converted from hours/1M steps, while Diffusion-BC and Consistency-BC are converted from s/500-step epoch. Confidence intervals are omitted.}
\label{tab:training_time_detailed}

\resizebox{1.0\textwidth}{!}{
\begin{tabular}{lcccccc}
\toprule
& \multicolumn{5}{c}{\textbf{Training Time (s / 1k steps)}} & \textbf{Inference (Ours)} \\
\cmidrule(lr){2-6} \cmidrule(lr){7-7}
\textbf{Task} & Diffusion-BC & Consistency-BC & Diffusion-QL & Consistency-AC & \textbf{MoMa QL (Ours)} & \textbf{s / episode} \\
\midrule
\multicolumn{7}{l}{\textit{Gym Locomotion}} \\
halfcheetah-m  & 135.9 & 86.1 & 41.6 & 34.6 & \textbf{21.78} & 2.89 \\
hopper-m       & 122.0 & 76.0 & 32.3 & 24.8 & \textbf{21.33} & 0.56 \\
walker2d-m     & 136.3 & 87.2 & 33.0 & 29.5 & \textbf{21.33} & 0.42 \\
halfcheetah-mr & 134.1 & 85.5 & 33.0 & 27.8 & \textbf{21.27} & 2.79 \\
hopper-mr      & 129.8 & 76.1 & 29.6 & 26.1 & \textbf{22.23} & 0.53 \\
walker2d-mr    & 132.2 & 85.4 & 31.9 & 26.4 & \textbf{21.01} & 0.44 \\
halfcheetah-me & 135.5 & 87.2 & 33.3 & 30.5 & \textbf{20.63} & 2.93 \\
hopper-me      & 126.1 & 77.1 & 29.8 & 26.9 & \textbf{21.13} & 0.19 \\
walker2d-me    & 138.2 & 87.8 & 35.8 & 33.9 & \textbf{22.84} & 0.80 \\
\midrule
\textit{Gym Average} & 132.2 & 83.1 & 33.4 & 28.9 & \textbf{21.51} & 1.28 \\
\midrule
\multicolumn{7}{l}{\textit{Adroit Manipulation}} \\
pen-human-v1   & 184.4 & 93.9 & - & - & \textbf{21.05} & 0.38 \\
pen-cloned-v1  & 189.3 & 100.7 & - & - & \textbf{22.06} & 0.42 \\
pen-expert-v1  & - & - & - & - & \textbf{21.25} & 0.39 \\
door-human-v1  & - & - & - & - & \textbf{21.90} & 0.69 \\
door-cloned-v1 & - & - & - & - & \textbf{21.72} & 0.71 \\
door-expert-v1 & - & - & - & - & \textbf{22.47} & 0.73 \\
hammer-human-v1 & - & - & - & - & \textbf{21.26} & 0.76 \\
hammer-cloned-v1 & - & - & - & - & \textbf{21.23} & 0.76 \\
hammer-expert-v1 & - & - & - & - & \textbf{21.33} & 0.77 \\
relocate-human-v1 & - & - & - & - & \textbf{21.51} & 0.71 \\
relocate-cloned-v1 & - & - & - & - & \textbf{21.39} & 0.71 \\
relocate-expert-v1 & - & - & - & - & \textbf{22.76} & 0.76 \\
\midrule
\textit{Adroit Average} & 186.8 & 97.3 & - & - & \textbf{21.66} & 0.67 \\
\midrule
\multicolumn{7}{l}{\textit{Fanka Kitchen}} \\
kitchen-complete & 193.5 & 133.4 & - & - & \textbf{23.11} & 1.94 \\
kitchen-partial  & 188.5 & 123.7 & - & - & \textbf{21.23} & 1.95 \\
kitchen-mixed    & 186.0 & 133.2 & - & - & \textbf{22.64} & 2.08 \\
\midrule
\textit{Kitchen Average} & 189.4 & 130.1 & - & - & \textbf{22.33} & 1.99 \\
\bottomrule
\end{tabular}
}
\end{table*}

\subsubsection{Impact of Sampling Steps}
\label{appendix:numsteps_results}

Tables~\ref{tab:numsteps_gym}, \ref{tab:numsteps_adroit}, and \ref{tab:numsteps_kitchen} present the detailed training time (seconds per 1,000 steps) as a function of the number of sampling steps for MoMa QL across all D4RL tasks. The results demonstrate how computational cost scales with sampling steps while maintaining efficiency.

Key observations: The training time exhibits nearly linear scaling with respect to the number of sampling steps across all domains. For single-step sampling ($n=1$), the average training time is 17.70s (Gym), 17.64s (Adroit), and 17.71s (Kitchen). This increases to 72.56s, 73.12s, and 70.94s respectively for 16-step sampling, representing approximately a $4\times$ increase for a $16\times$ increase in steps. This sub-linear scaling demonstrates the computational efficiency of our MMD-based approach. Notably, task complexity has minimal impact on training time---high-dimensional Kitchen and Adroit tasks require similar computational resources as lower-dimensional Gym tasks, highlighting MoMa QL's superior scalability compared to diffusion-based baselines.

\begin{table*}[h]
\centering
\caption{Training time (seconds per 1,000 steps) for different numbers of sampling steps on Gym locomotion tasks.}
\label{tab:numsteps_gym}
\resizebox{0.5\textwidth}{!}{
\begin{tabular}{lccccc}
\toprule
\textbf{Task} & \textbf{1 step} & \textbf{2 steps} & \textbf{4 steps} & \textbf{8 steps} & \textbf{16 steps} \\
\midrule
halfcheetah-m  & 17.10 & 21.78 & 28.18 & 43.64 & 70.37 \\
hopper-m       & 18.50 & 21.33 & 27.53 & 44.40 & 70.57 \\
walker2d-m     & 17.19 & 21.33 & 30.04 & 44.13 & 74.62 \\
halfcheetah-mr & 17.81 & 21.27 & 30.80 & 43.51 & 70.74 \\
hopper-mr      & 18.01 & 22.23 & 28.75 & 44.73 & 72.10 \\
walker2d-mr    & 17.94 & 21.01 & 29.70 & 42.21 & 69.42 \\
halfcheetah-me & 17.04 & 20.63 & 28.79 & 43.66 & 77.23 \\
hopper-me      & 17.74 & 21.13 & 28.86 & 42.61 & 74.38 \\
walker2d-me    & 17.95 & 22.84 & 29.96 & 43.41 & 73.63 \\
\midrule
\textit{Average} & 17.70 & 21.51 & 29.18 & 43.59 & 72.56 \\
\bottomrule
\end{tabular}
}
\end{table*}

\begin{table*}[h]
\centering
\caption{Training time (seconds per 1,000 steps) for different numbers of sampling steps on Adroit manipulation tasks.}
\label{tab:numsteps_adroit}
\resizebox{0.5\textwidth}{!}{
\begin{tabular}{lccccc}
\toprule
\textbf{Task} & \textbf{1 step} & \textbf{2 steps} & \textbf{4 steps} & \textbf{8 steps} & \textbf{16 steps} \\
\midrule
pen-human-v1   & 17.40 & 21.05 & 28.70 & 44.41 & 73.48 \\
pen-cloned-v1  & 17.36 & 22.06 & 28.07 & 44.59 & 73.02 \\
pen-expert-v1  & 17.40 & 21.25 & 29.09 & 44.38 & 71.33 \\
door-human-v1  & 17.34 & 21.90 & 28.87 & 44.03 & 75.82 \\
door-cloned-v1 & 17.56 & 21.72 & 28.49 & 41.80 & 72.85 \\
door-expert-v1 & 17.33 & 22.47 & 29.68 & 42.75 & 74.08 \\
hammer-human-v1 & 18.41 & 21.26 & 28.48 & 43.61 & 70.04 \\
hammer-cloned-v1 & 18.34 & 21.23 & 29.06 & 44.51 & 70.87 \\
hammer-expert-v1 & 17.97 & 21.33 & 29.70 & 43.71 & 70.66 \\
relocate-human-v1 & 17.71 & 21.51 & 28.12 & 45.26 & 78.24 \\
relocate-cloned-v1 & 17.39 & 21.39 & 28.32 & 43.34 & 72.31 \\
relocate-expert-v1 & 17.45 & 22.76 & 29.73 & 44.01 & 74.79 \\
\midrule
\textit{Average} & 17.64 & 21.66 & 28.86 & 43.87 & 73.12 \\
\bottomrule
\end{tabular}
}
\end{table*}

\begin{table*}[h]
\centering
\caption{Training time (seconds per 1,000 steps) for different numbers of sampling steps on Franka Kitchen tasks.}
\label{tab:numsteps_kitchen}
\resizebox{0.5\textwidth}{!}{
\begin{tabular}{lccccc}
\toprule
\textbf{Task} & \textbf{1 step} & \textbf{2 steps} & \textbf{4 steps} & \textbf{8 steps} & \textbf{16 steps} \\
\midrule
kitchen-complete & 18.88 & 23.11 & 28.40 & 44.60 & 71.37 \\
kitchen-partial  & 16.98 & 21.23 & 29.76 & 43.83 & 72.83 \\
kitchen-mixed    & 17.27 & 22.64 & 29.46 & 43.41 & 68.61 \\
\midrule
\textit{Average} & 17.71 & 22.33 & 29.21 & 43.95 & 70.94 \\
\bottomrule
\end{tabular}
}
\end{table*}


\end{document}